\newcommand{\rr}{R\textsuperscript{2}\text{ }}
\newcommand{\rrm}{R\textsuperscript{2}}
\newcommand{\rop}{\textsc{r\textsuperscript{2}}}
\newcommand{\ie}{\textit{i.e.}, }
\newcommand{\Pc}{\mathcal{P}}
\newcommand{\Rc}{\mathcal{R}}
\newcommand{\Uc}{\mathcal{U}}
\DeclareMathOperator{\R}{\mathbb{R}}
\DeclareMathOperator{\St}{\mathcal{S}}
\DeclareMathOperator{\A}{\mathcal{A}}
\DeclareMathOperator{\Z}{\mathcal{Z}}
\DeclareMathOperator{\X}{\mathcal{X}}
\DeclareMathOperator{\B}{\mathbf{B}}
\DeclareMathOperator{\Am}{\mathbf{A}}
\DeclareMathOperator{\bv}{\mathbf{b}}
\DeclareMathOperator{\av}{\mathbf{a}}
\DeclareMathOperator{\y}{\mathbf{y}}
\DeclarePairedDelimiter\abs{\lvert}{\rvert}% | |
\DeclarePairedDelimiter\norm{\lVert}{\rVert}% || ||
\DeclarePairedDelimiter\innorm{\langle}{\rangle}% < >
\newtheorem{definition}{Definition}[section]
\newtheorem{assumption}{Assumption}[section]
\newtheorem{proposition}{Proposition}[section] 
\newtheorem*{proposition*}{Proposition}
\newtheorem{theorem}{Theorem}[section] 
\newtheorem*{theorem*}{Theorem} 
\newtheorem{corollary}{Corollary}[section] 
\newtheorem{remark}{Remark}[section] 
\newtheorem*{corollary*}{Corollary} 
\newtheorem*{assumption*}{Assumption}
\title{Twice regularized MDPs and the equivalence between robustness and regularization}
\author{%
  Esther Derman\thanks{Contact author: \texttt{estherderman@campus.technion.ac.il} } \\
  Technion
   \And
  Matthieu Geist \\
  Google Research, Brain Team 
   \And
   Shie Mannor \\
  Technion, 
  NVIDIA Research
}
\begin{document}

\maketitle

\begin{abstract}
Robust Markov decision processes (MDPs) aim to handle changing or partially known system dynamics. To solve them, one typically resorts to robust optimization methods. However, this significantly increases computational complexity and limits scalability in both learning and planning. On the other hand, regularized MDPs show more stability in policy learning without impairing time complexity. Yet, they generally do not encompass uncertainty in the model dynamics.  In this work, we aim to learn robust MDPs using regularization. We first show that regularized MDPs are a particular instance of robust MDPs with uncertain reward. We thus establish that policy iteration on reward-robust MDPs can have the same time complexity as on regularized MDPs. We further extend this relationship to MDPs with uncertain transitions: this leads to a regularization term with an additional dependence on the value function. We finally generalize regularized MDPs to twice regularized MDPs  (\rr MDPs), \ie MDPs with {\it both} value and policy regularization. The corresponding Bellman operators enable developing policy iteration schemes with convergence and robustness guarantees. It also reduces planning and learning in robust MDPs to regularized MDPs.
\end{abstract}

\section{Introduction}
\label{sec: intro}
\looseness=-1
MDPs provide a practical framework for solving sequential decision problems under uncertainty \citep{puterman2014markov}. However, the chosen strategy can be very sensitive to sampling errors or inaccurate model estimates. This can lead to complete failure in common situations where the model parameters vary adversarially or are simply unknown \citep{mannor2007bias}. Robust MDPs aim to mitigate such sensitivity by assuming that the transition and/or reward function $(P,r)$ varies arbitrarily inside a given \emph{uncertainty set} $\Uc $ \citep{iyengar2005robust, nilim2005robust}. In this setting, an optimal solution maximizes a performance measure under the worst-case parameters. It can be thought of as a dynamic zero-sum game with an agent choosing the best action while Nature imposes it the most adversarial model. As such, solving robust MDPs involves max-min problems, which can be computationally challenging and limits scalability.

In recent years, several methods have been developed to alleviate the computational concerns raised by robust reinforcement learning (RL).   
Apart from \citep{mannor2012lightning, mannor2016robust} that consider specific types of coupled uncertainty sets, all rely on a rectangularity assumption without which the problem can be NP-hard \citep{bagnell2001solving, wiesemann2013robust}. This assumption is key to deriving tractable solvers of robust MDPs such as robust value iteration \citep{bagnell2001solving, grand2021scalable} or  more general robust modified policy iteration (MPI) \citep{kaufman2013robust}. Yet, reducing time complexity in robust Bellman updates remains challenging and is still researched today \citep{ho2018fast, grand2021scalable}.

 \looseness=-1
At the same time, the empirical success of regularization in policy search methods has motivated a wide range of algorithms with diverse motivations such as improved exploration \citep{haarnoja2017reinforcement, lee2018sparse} or stability \citep{schulman2015trust, haarnoja2018soft}. \citet{geist2019theory} proposed a unified view from which many existing algorithms can be derived. Their regularized MDP formalism opens the path to error propagation analysis in approximate MPI \citep{scherrer2015approximate} and leads to the same bounds as for standard MDPs. Nevertheless, as we further show in Sec.~\ref{sec: reward robust MDPs}, policy regularization accounts for reward uncertainty only: it does not encompass uncertainty in the model dynamics.
Despite a vast literature on \emph{how} regularized policy search works and convergence rates analysis \citep{shani2020adaptive, cen2020fast}, little attention has been given to understanding \emph{why} it can generate strategies that are robust to external perturbations \citep{haarnoja2018soft}.

To our knowledge, the only works that relate robustness to regularization in RL are 
\citep{derman2020distributional, husain2021regularized,eysenbach2021maximum}. \citet{derman2020distributional} employ a distributionally robust optimization approach to regularize an empirical value function. Unfortunately, computing this empirical value necessitates several policy evaluation procedures, which is quickly unpractical. \citet{husain2021regularized} provide a dual relationship with robust MDPs under uncertain reward. Their duality result applies to general regularization methods and gives a robust interpretation of soft-actor-critic \citep{haarnoja2018soft}. Although these two works justify the use of regularization for ensuring robustness, they do not enclose any algorithmic novelty. Similarly, \citet{eysenbach2021maximum} specifically focus on maximum entropy methods and relate them to either reward or transition robustness. We shall further detail on these most related studies in Sec.~\ref{sec: related work}.

The robustness-regularization duality is well established in statistical learning theory \citep{xu2009robustness, shafieezadeh2015distributionally, kuhn2019wasserstein}, as opposed to RL theory.
In fact, standard setups such as classification or regression may be considered as single-stage decision-making problems, \ie one-step MDPs, a particular case of RL setting. Extending this robustness-regularization duality to RL would yield cheaper learning methods with robustness guarantees. As such, we introduce a regularization function $\Omega_{\Uc}$ that depends on the uncertainty set $\Uc$ and is defined over both policy and value spaces, thus inducing a \emph{twice regularized} Bellman operator (see Sec.~\ref{sec: rr mdps}). We show that this regularizer yields an equivalence of the form $v_{\pi, \Uc} = v_{\pi, \Omega_{\Uc}}$, where $v_{\pi, \Uc}$ is the robust value function for policy $\pi$ and $v_{\pi, \Omega_{\Uc}}$ the regularized one. This equivalence is derived through the objective function each value optimizes. More concretely, we formulate the robust value function $v_{\pi, \Uc}$ as an optimal solution of the robust optimization problem:    
\begin{align}
\label{eq: ro problem}
    \max_{v\in\R^{\St}} \innorm{v, \mu_0}  \text{ s. t. } v\leq \inf_{(P,r)\in\Uc}T_{(P,r)}^{\pi}v,
    \tag{\textsc{RO}}
\end{align} 
where $T_{(P,r)}^{\pi}$ is the evaluation Bellman operator \cite{puterman2014markov}. Then, we show that  $v_{\pi, \Uc}$ is also an optimal solution of the convex (non-robust) optimization problem: 
\begin{align}
\label{eq: co problem}
        \max_{v\in\R^{\St}} \innorm{v, \mu_0}  \text{ s. t. } v\leq T_{(P_0,r_0)}^{\pi}v -\Omega_{\Uc}(\pi, v),
        \tag{\textsc{CO}}
\end{align}
where $(P_0,r_0)$ is the \emph{nominal model}. This establishes equivalence between the two optimization problems.
Moreover, the inequality constraint of \eqref{eq: co problem} enables to derive a \emph{twice regularized} (\rrm) Bellman operator defined according to $\Omega_{\Uc}$, a policy and value regularizer. For ball-constrained uncertainty sets, $\Omega_{\Uc}$ has an explicit form and under mild conditions, the corresponding \rr Bellman operators are contracting. The equivalence between the two problems \eqref{eq: ro problem} and \eqref{eq: co problem} together with the contraction properties of \rr Bellman operators enable to circumvent robust optimization problems at each Bellman update. As such, it alleviates robust planning and learning algorithms by reducing them to regularized ones, which are known to be as complex as classical methods.

To summarize, we make the following contributions: 
(i) We show that regularized MDPs are a specific instance of robust MDPs with uncertain reward. Besides formalizing a general connection between the two settings, our result enables to explicit the uncertainty sets induced by standard regularizers. (ii) We extend this duality to MDPs with uncertain transition and provide the first regularizer that recovers robust MDPs with $s$-rectangular balls and arbitrary norm. (iii) We introduce twice regularized MDPs (\rr MDPs) that apply both policy and value regularization to retrieve robust MDPs. We establish contraction of the corresponding Bellman operators. This leads us to proposing an \rr MPI algorithm with similar time complexity as vanilla MPI, thus opening new perspectives towards practical and scalable robust RL.

\textbf{Notations. }
We designate the extended reals by $\overline{\R}:= \{-\infty, \infty\}$.
Given a finite set $\Z$, the class of real-valued functions (resp. probability distributions) over $\Z$ is denoted by $\R^{\Z}$ (resp. $\Delta_{\Z}$), while the constant function equal to 1 over $\Z$ is denoted by $\mathbbm{1}_{\Z}$. Similarly, for any set $\mathcal{X}$, $\Delta_{\Z}^{\mathcal{X}}$ denotes the class of functions defined over $\X$ and valued in $\Delta_{\Z}$. The inner product of two functions $\av, \bv\in\R^{\Z}$ is defined as $\innorm{\av, \bv}:= \sum_{z\in\Z}\av(z)\bv(z)$, which induces the $\ell_2$-norm $\norm{\av}:= \sqrt{\innorm{\av,\av}}$. The $\ell_2$-norm coincides with its dual norm, \ie $\norm{\av}= \max_{\norm{\bv}\leq 1}\innorm{\av, \bv} =:\norm{\av}_*$.
Let a function $f: \R^{\Z}\to \overline{\R}$. The Legendre-Fenchel transform (or convex conjugate) of $f$ is 
$f^*(\y) := \max_{\av\in \R^{\Z}}\{\innorm{\av,\y} - f(\av)\}.$ Given a set $\mathfrak{Z}\subseteq\R^{\Z}$, 
the characteristic function $\delta_{\mathfrak{Z}}: \R^{\Z}\to \overline{\R}$ is  $\delta_{\mathfrak{Z}}(\av) = 0$ if $\av\in\mathfrak{Z}$; $+\infty$ otherwise. The Legendre-Fenchel transform of  $\delta_{\mathfrak{Z}}$ is the support function 
$\sigma_{\mathfrak{Z}}(\y) = \max_{\av\in\mathfrak{Z}}\innorm{\av, \y}$ \citep[Ex. 1.6.1]{bertsekas2009convex}.

\section{Preliminaries}
\label{sec: preliminaries}
\looseness=-1
This section describes the background material that we use throughout our work. Firstly, we recall useful properties in convex analysis. Secondly, we address classical discounted MDPs and their linear program (LP) formulation. Thirdly, we briefly detail on regularized MDPs and the associated operators and lastly, we focus on the robust MDP setting. 
\looseness=-1
\vspace{-5pt}

\paragraph{Convex Analysis.}
\label{sec: convex analysis}
Let $\Omega:\Delta_{\Z}\to \R$ be a strongly convex function. 
Throughout this study, the function $\Omega$ plays the role of a policy and/or value regularization function.
Its Legendre-Fenchel transform $\Omega^*$ satisfies several smoothness properties, hence its alternative name "smoothed max operator" \citep{mensch2018differentiable}. Our work makes use of the following result \citep{hiriart2004fundamentals, mensch2018differentiable}. 

\vspace{-2.5pt}
\begin{proposition}
\label{prop: convex analysis}
Given $\Omega:\Delta_{\Z}\to \R$ strongly convex, the following properties hold:\\
\begin{itemize*}
    \item[(i)] 
    $\nabla\Omega^*$ is Lipschitz and satisfies 
    $\nabla\Omega^*(\y) = \arg\max_{\av\in\Delta_{\Z}}\innorm{\av,\y} - \Omega(\av), \forall \y\in\R^{\Z}$.\\
    \item[(ii)] For any $c\in\R, \y\in\R^{\Z}$, $\Omega^*(\y + c\mathbbm{1}_{\Z}) = \Omega^*(\y) +c.$\\
    \item[(iii)] The Legendre-Fenchel transform $\Omega^*$ is non-decreasing. 
\end{itemize*}
\end{proposition}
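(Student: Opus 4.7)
\medskip
\noindent\textbf{Proof plan.} The three statements each follow quickly from the variational definition $\Omega^*(\y) = \max_{\av\in\Delta_{\Z}}\{\innorm{\av,\y}-\Omega(\av)\}$, so I would treat them one at a time, invoking standard convex-analytic facts rather than re-deriving them.

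For (i), the plan is to first note that the argmax is attained and unique: the feasible set $\Delta_{\Z}$ is compact, the objective is upper semicontinuous, and strong convexity of $\Omega$ forces strict concavity of $\av\mapsto\innorm{\av,\y}-\Omega(\av)$, so the maximizer $\av^\star(\y)$ exists and is unique. The identification $\nabla\Omega^*(\y)=\av^\star(\y)$ is then the conjugate-subgradient / Danskin identity: any $\av^\star(\y)$ attaining the supremum in the Legendre transform belongs to $\partial\Omega^*(\y)$, and uniqueness upgrades this subgradient to a gradient, making $\Omega^*$ differentiable everywhere. The Lipschitz property of $\nabla\Omega^*$ is the classical duality between strong convexity and smoothness: if $\Omega$ (extended by $+\infty$ outside $\Delta_{\Z}$) is $\mu$-strongly convex, then $\Omega^*$ has $\tfrac{1}{\mu}$-Lipschitz gradient. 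I would simply cite \citep{hiriart2004fundamentals} for this last step since repeating the proof would be routine.

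For (ii), the observation is that $\innorm{\av,\mathbbm{1}_{\Z}}=1$ for every $\av\in\Delta_{\Z}$. Therefore, for any $c\in\R$ and $\y\in\R^{\Z}$,
\begin{equation*}
\Omega^*(\y+c\mathbbm{1}_{\Z}) = \max_{\av\in\Delta_{\Z}}\brc*{\innorm{\av,\y}+c\innorm{\av,\mathbbm{1}_{\Z}}-\Omega(\av)} = c+\max_{\av\in\Delta_{\Z}}\brc*{\innorm{\av,\y}-\Omega(\av)} = \Omega^*(\y)+c.
\end{equation*}
This is a one-line calculation that requires no additional machinery.

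For (iii), the key observation is that every $\av\in\Delta_{\Z}$ has nonnegative entries. Hence for any pair $\y_1\leq\y_2$ (componentwise) the map $\av\mapsto\innorm{\av,\y}-\Omega(\av)$ is pointwise nondecreasing in $\y$ on $\Delta_{\Z}$, and taking the maximum over $\av$ preserves the inequality, giving $\Omega^*(\y_1)\leq\Omega^*(\y_2)$.

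The only step with any real content is the Lipschitz-gradient claim in (i); everything else is a one- or two-line variational argument. Since the paper explicitly cites \citep{hiriart2004fundamentals, mensch2018differentiable} in the statement, my plan is to defer that technical part to those references and keep the write-up focused on (ii) and (iii) plus the identification $\nabla\Omega^*(\y)=\av^\star(\y)$.
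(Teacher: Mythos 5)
The paper provides no proof of this proposition — it is quoted directly from the cited references \citep{hiriart2004fundamentals, mensch2018differentiable} — and your sketch reproduces exactly the standard arguments those references give: the conjugate-subgradient identity plus strong-convexity/smoothness duality for (i), the constraint $\innorm{\av,\mathbbm{1}_{\Z}}=1$ for (ii), and nonnegativity of $\av\in\Delta_{\Z}$ for (iii). All three steps are correct, so your proposal is consistent with (and no less detailed than) what the paper relies on.
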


\looseness=-1
\paragraph{Discounted MDPs and LP formulation.}
\label{sec: discounted mdps}
	Consider an infinite horizon MDP $(\St, \A,  \mu_0, \gamma, P, r)$ with $\St$ and  $\A$ finite state and action spaces respectively, $ 0< \mu_0 \in \Delta_{\St}$ an initial state distribution and $\gamma\in (0,1)$ a discount factor. Denoting 
	$\X:= \St\times\A$, 
	$P\in \Delta_{\St}^{\X}$ is a transition kernel 
	mapping each state-action pair to a probability distribution over $\St$ and $r\in\R^{\X}$ is a reward function. 
    A policy $\pi\in\Delta_{\A}^{\St}$ maps any state $s\in\St$ to an action distribution $\pi_s\in\Delta_{\A}$, and we evaluate its performance through the following measure:
\begin{align}
\label{eq: objective function}
\rho(\pi) := \mathbb{E}\left[\sum_{t = 0}^{\infty}\gamma^{t}r(s_t, a_t)\biggm | \mu_0, \pi, P\right] = \innorm{v_{(P,r)}^{\pi}, \mu_0}.
\end{align}
Here, the expectation is conditioned on the process distribution determined by $\mu_0, \pi$ and $P$,
and for all $s\in\St,$ $v_{(P,r)}^{\pi}(s) = \mathbb{E}[\sum_{t = 0}^{\infty}\gamma^{t}r(s_t, a_t) | s_0=s, \pi, P]$ is the \emph{value function} at state $s$.
Maximizing \eqref{eq: objective function} defines the standard RL objective,
which can be solved thanks to the Bellman operators:
\begin{equation*}
    \begin{split}
    T^{\pi}_{(P,r)}v &:= r^\pi + \gamma P^{\pi}v \quad \forall v\in \mathbb{R}^{\St}, \pi\in\Delta_{\A}^{\St},\\
    T_{(P,r)}v &:= \max_{\pi\in\Delta_{\A}^{\St}}T^{\pi}_{(P,r)}v \quad \forall v\in \mathbb{R}^{\St},\\
    \mathcal{G}_{(P,r)}(v) &:= \{\pi\in\Delta_{\A}^{\St}: T^{\pi}_{(P,r)}v = T_{(P,r)}v\} \quad \forall v\in \mathbb{R}^{\St},
    \end{split}
\end{equation*}
where $r^{\pi} := [\innorm{\pi_s, r(s,\cdot)}]_{s\in\St}$ and $P^{\pi} = [P^{\pi}(s'| s)]_{s',s\in\St}$ with  $P^{\pi}(s'| s) :=   \innorm{ \pi_s, P(s' | s,\cdot)}.$
Both $T^{\pi}_{(P,r)}$ and $T_{(P,r)}$ are $\gamma$-contractions with respect to (w.r.t.) the supremum norm, so each admits a unique fixed point $v^\pi_{(P,r)}$ and $v^*_{(P,r)}$, respectively. The set of greedy policies w.r.t. value $v$ defines $\mathcal{G}_{(P,r)}(v)$, and any policy $\pi\in\mathcal{G}_{(P,r)}(v^*_{(P,r)})$ is optimal \citep{puterman2014markov}. For all $v\in\R^{\St}$, the associated function $q\in\R^{\X}$ is given by $q(s,a) = r(s,a) + \gamma\innorm{P(\cdot|s,a),v} \quad \forall (s,a)\in\X$. In particular, the fixed point $v^\pi_{(P,r)}$ satisfies $v^\pi_{(P,r)} = \innorm{\pi_s, q^\pi_{(P,r)}(s,\cdot)}$ where $q^\pi_{(P,r)}$ is its associated $q$-function.

The problem in \eqref{eq: objective function} can also be formulated as an LP \citep{puterman2014markov}. 
Given a policy $\pi\in\Delta_{\A}^{\St}$, we characterize its performance $\rho(\pi)$ by the following $v$-LP \citep{puterman2014markov, nachum2020reinforcement}:
\begin{equation}
\label{eq: primal LP standard eval}
    \min_{v\in\R^{\St}}\innorm{v, \mu_0} \text{ subject to (s.t.) } v \geq r^{\pi} + \gamma P^{\pi}v. \tag{P${}^{\pi}$}
\end{equation}
This primal objective provides a policy view on the problem. Alternatively, one may take a state visitation perspective by studying the dual objective instead:
\begin{align}
\label{eq: dual LP standard eval}
    &\max_{\mu\in \R^{\St}} \innorm{r^{\pi},\mu}
    \text{ s. t. }  \mu \geq 0 \text{ and } (\mathbf{Id}_{\R^{\St}} - \gamma P_*^\pi)\mu = \mu_0, \tag{D${}^{\pi}$}
\end{align}
where $P_*^\pi$ is the \emph{adjoint policy transition operator}\footnote{It is the adjoint operator of $P^{\pi}$ in the sense that  $\innorm{P^{\pi}v, v'} = \innorm{v, P_*^{\pi}v'}\quad \forall v, v'\in\R^{\St}$.}:
$[P_*^\pi\mu](s):= \sum_{\bar s\in\St}P^{\pi}(s|\bar s)\mu(\bar s) \quad \forall \mu\in\R^{\St},
$
and $\mathbf{Id}_{\St}$ is the identity function in $\R^{\St}$. 
Let $\mathbf{I}(s'| s,a) := \delta_{s'=s} \quad \forall (s,a,s')\in\X\times\St$ the trivial transition matrix and
define its \emph{adjoint transition operator} as $\mathbf{I}_*\mu(s) := \sum_{(\bar s, \bar a)\in\X}\mathbf{I}(s| \bar s, \bar a)\mu(\bar s, \bar a)\quad\forall s\in\St$.
The correspondence between occupancy measures and policies lies in the one-to-one mapping
$\mu \mapsto  \frac{\mu(\cdot, \cdot)}{\mathbf{I}_*\mu(\cdot)}=:\pi_{\mu}$
and its inverse $\pi\mapsto \mu_{\pi}$ given by
$$\mu_{\pi}(s,a):= \sum_{t = 0}^{\infty}\gamma^t\mathbb{P}\left(s_t=s, a_t = a \biggr|\mu_0, \pi, P\right)\quad\forall (s,a)\in\X.$$
As such, one can interchangeably work with the primal LP \eqref{eq: primal LP standard eval} or the dual \eqref{eq: dual LP standard eval}. 

\paragraph{Regularized MDPs.}
\label{sec: regularized MDPs}
A regularized MDP is a tuple $(\St, \A,  \mu_0, \gamma, P, r, \Omega)$ with $(\St, \A,  \mu_0, \gamma, P, r)$ an infinite horizon MDP as above, and $\Omega:= (\Omega_s)_{s\in\St}$ a finite set of functions such that for all $s\in\St$, $\Omega_s: \Delta_{\A}\to \R$ is strongly convex. Each function $\Omega_s$ plays the role of a policy regularizer $\Omega_s(\pi_s)$. With a slight abuse of notation, we shall denote by $\Omega(\pi):= (\Omega_s(\pi_s))_{s\in\St}$ the family of state-dependent regularizers.\footnote{In the formalism of \citet{geist2019theory}, $\Omega_s$ is initially constant over $\St$. However, later in the paper \cite[Sec.~5]{geist2019theory}, it changes according to policy iterates. Here, we alternatively define a family $\Omega$ of state-dependent regularizers, which accounts for state-dependent uncertainty sets (see Sec.~\ref{sec: rr mdps} below).} The regularized Bellman evaluation operator is given by 
\begin{align*}
    [T^{\pi, \Omega}_{(P,r)}v](s) := T^{\pi}_{(P,r)}v(s) - \Omega_s(\pi_s) \quad\forall v\in\R^{\St}, s\in\St,
\end{align*}
and the regularized Bellman optimality operator by $T^{*,\Omega}_{(P,r)}v:= \max_{\pi\in\Delta_{\A}^{\St}}T^{\pi, \Omega}_{(P,r)}v\quad \forall v\in\R^{\St}$ \citep{geist2019theory}. The unique fixed point of $T^{\pi, \Omega}_{(P,r)}$ (respectively $T^{*,\Omega}_{(P,r)}$) is denoted by $v^{\pi, \Omega}_{(P,r)}$ (resp. $v^{*,\Omega}_{(P,r)}$) and defines the \emph{regularized value function} (resp. \emph{regularized optimal value function}). 
Although the regularized MDP formalism stems from the aforementioned Bellman operators in \citep{geist2019theory}, it turns out that regularized MDPs are MDPs with modified reward. Indeed, for any policy $\pi\in\Delta_{\A}^{\St}$, the regularized value function is $v^{\pi, \Omega}_{(P,r)} = (\mathbf{I}_{\St} - \gamma P^{\pi})^{-1}(r^{\pi} - \Omega(\pi)),$ which corresponds to a non-regularized value with expected reward $\tilde r^{\pi}:= r^{\pi} - \Omega(\pi)$. Note that the modified reward $\tilde r^{\pi}(s)$ is no longer linear in $\pi_s$ because of the strong convexity of $\Omega_s$. Also, this modification does not apply to the reward function $r$ but only to its expectation $r^{\pi}$, as we cannot regularize the original reward without making it policy-independent. 

\paragraph{Robust MDPs.}
\label{sec: prelim robust mdps}
In general, the MDP model is not explicitly known but rather estimated from sampled trajectories. As this may result in over-sensitive outcome \citep{mannor2007bias}, robust MDPs reduce such performance variation. Formally, a robust MDP $(\St, \A,  \mu_0, \gamma, \Uc)$ is an MDP with uncertain model belonging to $\Uc:=\Pc\times \Rc$, \ie uncertain transition $P\in\Pc\subseteq \Delta_{\St}^{\X}$ and
 reward $r\in\Rc\subseteq \R^{\X}$ \citep{iyengar2005robust, wiesemann2013robust}.
The uncertainty set $\Uc$ typically controls the confidence level of a model estimate, which in turn determines the agent's level of robustness.
It is given to the agent, who seeks to maximize performance under the worst-case model $(P,r)\in\Uc$. Although untractable in general, this problem can be solved in polynomial time for \emph{rectangular} uncertainty sets, \ie when $\Uc = \times_{s\in\St}\Uc_{s} = \times_{s\in\St}(\Pc_s\times \Rc_s)$ \citep{wiesemann2013robust, mannor2012lightning}. For any policy $\pi\in\Delta_{\A}^{\St}$
and state $s\in\St,$ the \emph{robust value function} at $s$ is $v^{\pi, \Uc}(s) := \min_{(P,r)\in\Uc}  v_{(P,r)}^{\pi}(s)$
and the \emph{robust optimal value function} $v^{*, \Uc}(s):= \max_{\pi\in\Delta_{\St}^{\A}}v^{\pi, \Uc}(s)$. 
Each of them is the unique fixed point of the respective robust Bellman operators:
\begin{align*}
[T^{\pi, \Uc}v](s) &:= \min_{(P, r)\in\Uc} T_{(P,r)}^{\pi}v(s) \quad \forall v\in \R^{\St}, s\in\St, \pi\in\Delta_{\St}^{\A},\\
[T^{*, \Uc}v](s) &:= \max_{\pi\in\Delta_{\A}^{\St}}[T^{\pi, \Uc}v](s) \quad\forall  v\in\R^{\St}, s\in\St,
\end{align*}
which are $\gamma$-contractions. For all $v\in\R^{\St}$, the associated robust $q$-function is given by 
$q(s,a) = \min_{(P, r)\in\Uc}\{r(s,a) + \gamma\innorm{P(\cdot|s,a),v}\}\quad \forall (s,a)\in\X$, so that $v^{\pi, \Uc} = \innorm{\pi_s, q^{\pi,\Uc}(s,\cdot)}$ where $q^{\pi,\Uc}$ is the robust $q$-function associated to $v^{\pi, \Uc}$.

\looseness=-1
\section{Reward-robust MDPs}
\label{sec: reward robust MDPs}
\looseness=-1
This section focuses on reward-robust MDPs, \ie robust MDPs with uncertain reward but known transition model. We first show that regularized MDPs represent a particular instance of reward-robust MDPs, as both solve the same optimization problem. This equivalence provides a theoretical motivation for the heuristic success of policy regularization. Then, we explicit the uncertainty set underlying some standard regularization functions, thus suggesting an interpretable explanation of their empirical robustness. 

We first show the following proposition \ref{prop: iyengar}, which applies to general robust MDPs and random policies. It slightly extends \citep{iyengar2005robust}, as Lemma~3.2 there focuses on uncertain-transition MDPs and deterministic policies. For completeness, we provide a proof of Prop.~\ref{prop: iyengar} in Appx.~\ref{apx: iyengar}.

\begin{proposition}
\label{prop: iyengar}
For any policy $\pi\in\Delta_{\A}^{\St}$, the robust value function $v^{ \pi, \Uc}$ is the optimal solution of the robust optimization problem:
\begin{align}
\label{eq: iyengar ro primal}
    \max_{v\in\R^{\St}} \innorm{v, \mu_0}  \text{ s. t. } v\leq T_{(P,r)}^{\pi}v \text{ for all }   (P,r)\in\Uc. \tag{P${}_{\Uc}$}
\end{align}
\end{proposition}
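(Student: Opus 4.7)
The plan is to argue feasibility and optimality separately, leveraging only the contraction and monotonicity properties of the robust Bellman operator $T^{\pi,\Uc}$ recalled in the preliminaries.

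First I would verify that $v^{\pi,\Uc}$ is feasible for the program \eqref{eq: iyengar ro primal}. Since $v^{\pi,\Uc}$ is by definition the unique fixed point of $T^{\pi,\Uc}$, we have
\begin{equation*}
v^{\pi,\Uc} = T^{\pi,\Uc}v^{\pi,\Uc} = \min_{(P,r)\in\Uc} T^{\pi}_{(P,r)} v^{\pi,\Uc} \leq T^{\pi}_{(P,r)} v^{\pi,\Uc} \quad \forall (P,r)\in\Uc,
\end{equation*}
where the last inequality is pointwise in $s\in\St$. Hence $v^{\pi,\Uc}$ satisfies all the constraints.

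Next I would show optimality: any feasible $v\in\R^{\St}$ satisfies $v\leq v^{\pi,\Uc}$ componentwise, from which $\innorm{v,\mu_0}\leq\innorm{v^{\pi,\Uc},\mu_0}$ follows because $\mu_0>0$. To this end, fix a feasible $v$. Then for every $(P,r)\in\Uc$ we have $v\leq T^{\pi}_{(P,r)}v$ componentwise, so taking the minimum over $\Uc$ on the right-hand side gives $v\leq T^{\pi,\Uc}v$. The key observation is that $T^{\pi,\Uc}$ is monotone: if $u\leq u'$ componentwise, then $T^{\pi}_{(P,r)}u\leq T^{\pi}_{(P,r)}u'$ for each $(P,r)$ (since $\gamma P^{\pi}$ has nonnegative entries), and taking the minimum over $\Uc$ preserves the inequality. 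Iterating this monotone step $k$ times yields $v\leq (T^{\pi,\Uc})^{k}v$ for every $k\geq 1$, and letting $k\to\infty$ along the $\gamma$-contraction gives $v\leq v^{\pi,\Uc}$, as required.

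The main delicate point is the monotonicity step together with the limit argument: one has to confirm that the pointwise minimum of monotone affine maps in $v$ remains monotone, and that iterates under $T^{\pi,\Uc}$ converge from below in the componentwise partial order. Both facts are standard for $\gamma$-contractive, order-preserving operators on $\R^{\St}$ equipped with the sup-norm, but they are the only nontrivial ingredient; everything else is a direct unrolling of definitions. I would therefore state monotonicity as a short intermediate lemma (or cite the relevant property already present in \cite{iyengar2005robust, puterman2014markov}) and apply it uniformly. This suffices to conclude that $v^{\pi,\Uc}$ is feasible and dominates any other feasible $v$, hence is optimal for \eqref{eq: iyengar ro primal}.
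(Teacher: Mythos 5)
Your proposal is correct. It shares the paper's overall skeleton — first check that $v^{\pi,\Uc}$ is feasible via its fixed-point property (this half is identical to the paper's), then show that every feasible $v$ is dominated componentwise by $v^{\pi,\Uc}$, and conclude via $\mu_0>0$ — but the dominance step is executed by a genuinely different mechanism. The paper picks an $\epsilon$-approximate minimizer $(P_\epsilon,r_\epsilon)\in\Uc$ (since the infimum over $\Uc$ need not be attained), derives the perturbed inequality $v-v^{\pi,\Uc}\leq T^{\pi,\Uc}(v-v^{\pi,\Uc})+\epsilon$, iterates it to accumulate a $\sum_k\gamma^k\epsilon$ error, and lets $\epsilon\to 0$ at the end. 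You instead observe directly that feasibility means $v\leq T^{\pi,\Uc}v$, invoke monotonicity of $T^{\pi,\Uc}$ (which the paper also uses inside its iteration, citing the robust MDP literature) to get $v\leq (T^{\pi,\Uc})^k v$ for all $k$, and pass to the limit using the $\gamma$-contraction, so that $(T^{\pi,\Uc})^k v\to v^{\pi,\Uc}$. Your route is cleaner: it never needs to select a near-worst-case model, avoids the $\epsilon$ bookkeeping, and sidesteps the paper's slightly abusive step of writing $T^{\pi,\Uc}v - T^{\pi,\Uc}v^{\pi,\Uc}$ as $T^{\pi,\Uc}(v-v^{\pi,\Uc})$ for a nonlinear operator. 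The only cosmetic omission is that you do not explicitly note that strict positivity of $\mu_0$ upgrades ``$v^{\pi,\Uc}$ is optimal'' to ``$v^{\pi,\Uc}$ is \emph{the} optimal solution,'' but the componentwise dominance you establish gives this immediately.
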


In the robust optimization problem \eqref{eq: iyengar ro primal}, the inequality constraint must hold over the whole uncertainty set $\Uc$. As such, a function $v\in\R^{\St}$ is said to be \emph{robust feasible} for \eqref{eq: iyengar ro primal} if $v\leq T_{(P,r)}^{\pi}v$ for all $(P,r)\in\Uc$ or equivalently, if $\max_{(P,r)\in\Uc}\{v(s)  - T_{(P,r)}^{\pi}v(s) \}\leq 0$ for all $s\in\St.$ Therefore, checking robust feasibility requires to solve a maximization problem. For properly structured uncertainty sets, a closed form solution can be derived, as we shall see in the sequel. 
As standard in the robust RL literature \citep{roy2017reinforcement, ho2018fast, badrinath2021robust}, the remaining of this work focuses on uncertainty sets centered around a known \emph{nominal model}. 
Formally, given $P_0$ (resp. $r_0$) a nominal
transition kernel (resp. reward function), we consider uncertainty sets of the form
$(P_0 + \Pc)\times ( r_0+ \Rc)$. The size of $\Pc\times\Rc$ quantifies our level of uncertainty or alternatively, the desired degree of robustness. 

\paragraph{Reward-robust and regularized MDPs: an equivalence.}
\label{sec: equivalence reward robust}

We now focus on reward-robust MDPs, \ie robust MDPs with $\Uc = \{P_0\} \times  (r_0+ \Rc).$ Thm.~\ref{thm: reward robust -- reg} establishes that reward-robust MDPs are in fact regularized MDPs whose regularizer is given by a support function. Its proof can be found in Appx.~\ref{apx: reward robust}. This result brings two take-home messages: (i) policy regularization is equivalent to reward uncertainty; (ii) policy iteration on reward-robust MDPs has the same convergence rate as regularized MDPs, which in turn is the same as standard MDPs \citep{geist2019theory}.  

\begin{theorem}[Reward-robust MDP]
\label{thm: reward robust -- reg}
Assume that $\Uc = \{P_0\} \times  (r_0+ \Rc).$ Then, for any policy $\pi\in\Delta_{\A}^{\St}$, the robust value function $v^{ \pi, \Uc}$ is the optimal solution of the convex optimization problem:
\begin{align*}
    \max_{v\in\R^{\St}} \innorm{v, \mu_0}  \text{ s. t. } v(s)\leq T_{(P_0,r_0)}^{\pi}v(s) - \sigma_{\Rc_s}(-\pi_s) \text{ for all } s\in\St. 
\end{align*}
\end{theorem}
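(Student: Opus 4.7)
The plan is to start from Proposition~\ref{prop: iyengar}, which already characterizes $v^{\pi,\Uc}$ as the optimal solution of the robust program \eqref{eq: iyengar ro primal} with constraints $v \leq T^{\pi}_{(P,r)}v$ for all $(P,r) \in \Uc$, and then show that under $\Uc = \{P_0\}\times(r_0+\Rc)$ these infinitely many robust constraints collapse into the single state-wise convex constraint appearing in the theorem. So all that remains is to simplify the robust feasibility condition.

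First I would plug the product form of $\Uc$ into the robust constraint. Since $T^{\pi}_{(P,r)}v = r^{\pi} + \gamma P^{\pi} v$ is affine in $r$, writing $r = r_0 + r'$ with $r' \in \Rc$ yields
\begin{equation*}
T^{\pi}_{(P_0,r_0+r')}v \;=\; T^{\pi}_{(P_0,r_0)}v + (r')^{\pi},
\end{equation*}
so the robust constraint becomes $v(s) \leq T^{\pi}_{(P_0,r_0)}v(s) + \innorm{\pi_s, r'(s,\cdot)}$ for every $s\in\St$ and every $r'\in\Rc$. A function $v$ is therefore robust feasible if and only if, for each $s$, the inequality holds for the $r'$ minimizing the right-hand side. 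Invoking $s$-rectangularity of $\Rc$ (which is standard in this setting and implicit in the notation $\Rc_s$ used in the statement), one may minimize state by state, giving the equivalent condition
\begin{equation*}
v(s) \;\leq\; T^{\pi}_{(P_0,r_0)}v(s) \;+\; \inf_{r'_s \in \Rc_s}\innorm{\pi_s, r'_s} \quad \forall s\in\St.
\end{equation*}

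Next I would rewrite the infimum as a support function. By the very definition of $\sigma_{\Rc_s}$ recalled in the notations,
\begin{equation*}
\inf_{r'_s\in\Rc_s}\innorm{\pi_s,r'_s} \;=\; -\sup_{r'_s\in\Rc_s}\innorm{-\pi_s,r'_s} \;=\; -\sigma_{\Rc_s}(-\pi_s),
\end{equation*}
so the robust feasibility condition is exactly $v(s)\leq T^{\pi}_{(P_0,r_0)}v(s)-\sigma_{\Rc_s}(-\pi_s)$ for all $s\in\St$, matching the constraints in the theorem. Since the objective $\innorm{v,\mu_0}$ and the feasible set coincide with those of \eqref{eq: iyengar ro primal} after this reduction, Prop.~\ref{prop: iyengar} transfers the optimality of $v^{\pi,\Uc}$ to the convex program, concluding the proof.

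The only real subtlety is the swap between the universal quantifier over $r'\in\Rc$ and the state-wise infimum: this relies on $\Rc$ being $s$-rectangular, which justifies decoupling the constraints across $s$. Beyond that, the argument is a purely mechanical use of the linearity of the Bellman evaluation operator in the reward and of the Legendre-Fenchel duality between indicator and support functions, so I do not expect any further technical obstacle.
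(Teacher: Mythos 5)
Your proposal is correct and follows essentially the same route as the paper: reduce via Prop.~\ref{prop: iyengar} to the robust feasibility condition, use linearity of $T^{\pi}_{(P,r)}$ in $r$ and rectangularity of $\Rc$ to decouple the worst case state by state, and identify the resulting inner optimum as $-\sigma_{\Rc_s}(-\pi_s)$. The only difference is that you evaluate $\inf_{r'_s\in\Rc_s}\innorm{\pi_s,r'_s}$ directly from the definition of the support function, whereas the paper routes this step through Fenchel--Rockafellar duality (computing the conjugate of $r_s\mapsto\innorm{r_s,\pi_s}$); the two computations collapse to the same identity, so this is a cosmetic rather than substantive divergence.
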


Thm.~\ref{thm: reward robust -- reg} clearly highlights a convex regularizer $\Omega_s(\pi_s):= \sigma_{\Rc_s}(-\pi_s)\quad \forall s\in\St$. We thus recover a regularized MDP by setting 
$[T^{\pi,\Omega}v](s) = T_{(P_0,r_0)}^{\pi}v(s) - \sigma_{\Rc_s}(-\pi_s)\quad \forall s\in\St$. In particular, when $\Rc_s$ is a ball of radius $\alpha_s^r$, the support function (or regularizer) can be written in closed form as $\Omega_s(\pi_s):= \alpha_s^r\norm{\pi_s}$, which is strongly convex. We formalize this below (see proof in Appx.~\ref{apx: cor reward robust}). 

\begin{corollary}
\label{cor: reg for ball reward uncertainty}
Let $\pi\in\Delta_{\A}^{\St}$ and $\Uc = \{P_0\} \times  (r_0+ \Rc)$. Further assume that for all $s\in\St$, the reward uncertainty set at $s$ is $\Rc_s:= \{r_s\in\R^{\A}: \norm{r_s}\leq \alpha_s^r\}$. Then,  the robust value function $v^{ \pi, \Uc}$ is the optimal solution of the convex optimization problem:
\begin{align*}
    \max_{v\in\R^{\St}} \innorm{v, \mu_0}  \text{ s. t. } v(s)\leq T_{(P_0,r_0)}^{\pi}v(s) - \alpha_s^r\norm{\pi_s} \text{ for all } s\in\St. 
\end{align*}
\end{corollary}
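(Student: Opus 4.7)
The plan is to invoke Theorem~\ref{thm: reward robust -- reg} directly and then specialize the support function $\sigma_{\Rc_s}(-\pi_s)$ to the case where $\Rc_s$ is an $\ell_2$-ball of radius $\alpha_s^r$. The theorem guarantees that $v^{\pi, \Uc}$ is the optimal solution of a convex optimization problem with constraints $v(s) \leq T_{(P_0, r_0)}^{\pi}v(s) - \sigma_{\Rc_s}(-\pi_s)$ for every $s \in \St$, so the only remaining task is to evaluate the support function in closed form.

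By the definition of a support function,
\begin{align*}
\sigma_{\Rc_s}(-\pi_s) \;=\; \max_{r_s \in \R^{\A},\, \norm{r_s} \leq \alpha_s^r} \innorm{r_s, -\pi_s}.
\end{align*}
Factoring out the radius by writing $r_s = \alpha_s^r \bv$ with $\norm{\bv} \leq 1$ yields
\begin{align*}
\sigma_{\Rc_s}(-\pi_s) \;=\; \alpha_s^r \max_{\norm{\bv} \leq 1} \innorm{\bv, -\pi_s} \;=\; \alpha_s^r \norm{-\pi_s}_{*} \;=\; \alpha_s^r \norm{\pi_s},
\end{align*}
where the penultimate equality is the definition of the dual norm and the final one uses the self-duality of the $\ell_2$-norm recalled in the Notations section together with its invariance under sign flip. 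Plugging this closed form back into the constraint supplied by Theorem~\ref{thm: reward robust -- reg} produces exactly the optimization problem in the statement.

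There is no genuine obstacle: once Theorem~\ref{thm: reward robust -- reg} is in hand, the argument reduces to a one-line computation of the support function of a centered Euclidean ball. The only step that requires any care is identifying the inner maximum as the dual norm of $-\pi_s$, which is clean here because $\Rc_s$ is a ball in the whole ambient space $\R^{\A}$ rather than being restricted to a lower-dimensional object such as the simplex; this is also what makes the resulting regularizer $\alpha_s^r \norm{\pi_s}$ take such a simple form and underpins the strong-convexity remark following the corollary.
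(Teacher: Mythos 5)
Your proposal is correct and follows essentially the same route as the paper: both invoke Theorem~\ref{thm: reward robust -- reg} and then compute $\sigma_{\Rc_s}(-\pi_s) = \alpha_s^r\norm{\pi_s}$ via the definition of the dual norm and the self-duality of the $\ell_2$-norm. Your extra step of factoring $r_s = \alpha_s^r \bv$ is a harmless elaboration of the same dual-norm identity the paper uses.
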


While regularization induces reward-robustness,
Thm.~\ref{thm: reward robust -- reg} and Cor.~\ref{cor: reg for ball reward uncertainty} suggest that, on the other hand, specific reward-robust MDPs recover well-known policy regularization methods. In the following section, we explicit the reward-uncertainty sets underlying some of these regularizers.

\paragraph{Related Algorithms.}
\label{sec: related algos}
\looseness=-1
Consider a reward uncertainty set of the form $\Rc:= \times_{(s,a)\in\X} \Rc_{s,a}$. This defines an $(s,a)$-rectangular $\Rc$ (a particular type of $s$-rectangular $\Rc$) whose rectangles $\Rc_{s,a}$ are independently defined for each state-action pair. For the regularizers below, we derive appropriate $\Rc_{s,a}$-s that recover the same regularized value function. Detailed proofs are in Appx.~\ref{apx: uncertainty sets for regularizers}. There, we also include a summary table that reviews the properties of some RL regularizers, as well as our \rr function which we shall introduce later in Sec.~\ref{sec: rr mdps}. Note that the reward uncertainty sets here depend on the policy. This is due to the fact that standard regularizers are defined over the policy space and not at each state-action pair. It similarly explains why the reward modification induced by regularization does not apply to the original reward function, as already mentioned in Sec.~\ref{sec: regularized MDPs}.

\underline{\textit{Negative Shannon entropy.}}
Let $\Rc_{s,a}^{\textsc{NS}}(\pi):= \left[\ln\left(\nicefrac{1}{\pi_s(a)}\right), +\infty\right)\quad \forall (s,a)\in\X$.
The associated support function enables to write:
\begin{align*}
    \sigma_{\Rc_s^{\textsc{NS}}(\pi)}(-\pi_s) 
     = \max_{\substack{r(s,\cdot):  r(s,a')\in\Rc_{s,a'}^{\textsc{NS}}(\pi), a'\in\A}}
    \sum_{a\in\A} -r(s,a)\pi_s(a) 
    =  \sum_{a\in\A}\pi_s(a)\ln(\pi_s(a)),
\end{align*}
where the last equality comes from maximizing $-r(s,a)$ over $\left[\ln\left(\nicefrac{1}{\pi_s(a)}\right), +\infty\right)$ for each $a\in\A$. 
We thus recover the negative Shannon entropy $\Omega(\pi_s)= \sum_{a\in\A}\pi_s(a)\ln(\pi_s(a))$ \citep{haarnoja2018soft}.

\underline{\textit{Kullback-Leibler divergence.}} Given an action distribution $0<d\in\Delta_{\A}$, let  $\Rc_{s,a}^{\textsc{KL}}(\pi):= \ln\left(d(a)\right) + \Rc_{s,a}^{\textsc{NS}}(\pi)\quad \forall (s,a)\in\X$. It amounts to translating the interval
$\Rc_{s,a}^{\textsc{NS}}$ by the given constant. Similarly writing the support function yields  $\Omega(\pi_s)= \sum_{a\in\A}\pi_s(a)\ln\left(\nicefrac{\pi_s(a)}{d(a)}\right)$, which is exactly the KL divergence \citep{schulman2015trust}.

\underline{\textit{Negative Tsallis entropy.}} Letting $\Rc_{s,a}^{\textsc{T}}(\pi):= \left[\nicefrac{(1-\pi_s(a))}{2}, +\infty\right)\quad\forall (s,a)\in\X$, we recover the negative Tsallis entropy $\Omega(\pi_s) = \frac{1}{2}(\norm{\pi_s}^2-1)$ \citep{lee2018sparse}.

\paragraph{Policy-gradient for reward-robust MDPs.}
\label{sec: pg for robust mdps}
The equivalence between reward-robust and regularized MDPs leads us to wonder whether we can employ policy-gradient  \citep{sutton1999policy} on reward-robust MDPs using regularization. The following result establishes that a policy-gradient theorem can indeed be established for reward-robust MDPs (see proof in Appx.~\ref{apx: policy gradient}). 

\begin{proposition}
\label{prop: policy gradient reward robust mdps}
Assume that $\Uc = \{P_0\} \times  (r_0+ \Rc)$ with $\Rc_s= \{r_s\in\R^{\A}: \norm{r_s}\leq \alpha_s^r\}$. Then, the gradient of the reward-robust objective $J_{\Uc}(\pi):= \innorm{v^{\pi, \Uc}, \mu_0}$ is given by
\begin{align*}
        \nabla J_{\Uc}(\pi) = \mathbb{E}_{(s,a)\sim \mu_{\pi}}\left[ \nabla\ln \pi_s(a)\left(q^{\pi, \Uc}(s,a) - \alpha_s^r\frac{\pi_s(a)}{\norm{\pi_s}}\right)\right],
\end{align*}
where $\mu_{\pi}$ is the occupancy measure under the nominal model $P_0$ and policy $\pi$. 
\end{proposition}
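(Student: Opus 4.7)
The plan is to reduce the robust gradient computation to a regularized one via Corollary~\ref{cor: reg for ball reward uncertainty}, and then imitate the classical derivation of the policy-gradient theorem (Sutton et al., 1999) with a modified, policy-dependent reward. Since $\Rc_s$ is a ball of radius $\alpha_s^r$, Corollary~\ref{cor: reg for ball reward uncertainty} gives $v^{\pi, \Uc}(s) = T_{(P_0,r_0)}^{\pi}v^{\pi,\Uc}(s) - \alpha_s^r\norm{\pi_s}$, which in $q$-form reads
\begin{align*}
v^{\pi, \Uc}(s) \;=\; \sum_{a\in\A} \pi_s(a)\, q^{\pi, \Uc}(s,a) \;-\; \alpha_s^r\norm{\pi_s},
\qquad
q^{\pi, \Uc}(s,a) = r_0(s,a) + \gamma \innorm{P_0(\cdot|s,a), v^{\pi,\Uc}}.
\end{align*}
Differentiability of $v^{\pi,\Uc}$ in the policy parameters follows from the closed form $v^{\pi,\Omega} = (\mathbf{I}_{\St}-\gamma P^{\pi})^{-1}(r_0^{\pi}-\Omega(\pi))$ together with smoothness of $\pi \mapsto \norm{\pi_s}$ on $\Delta_{\A}$ (where $\norm{\pi_s}>0$).

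Next, I would differentiate the identity above with respect to the policy parameters. Using the product rule and $\nabla\norm{\pi_s} = \sum_a \tfrac{\pi_s(a)}{\norm{\pi_s}}\nabla \pi_s(a)$, I obtain
\begin{align*}
\nabla v^{\pi, \Uc}(s)
= \sum_{a\in\A} \nabla \pi_s(a)\!\left[q^{\pi, \Uc}(s,a) - \alpha_s^r\tfrac{\pi_s(a)}{\norm{\pi_s}}\right]
+ \gamma \sum_{a,s'} \pi_s(a)\, P_0(s'|s,a)\, \nabla v^{\pi, \Uc}(s'),
\end{align*}
where I used $\nabla q^{\pi,\Uc}(s,a) = \gamma \innorm{P_0(\cdot|s,a),\nabla v^{\pi,\Uc}}$ (the nominal transition $P_0$ and reward $r_0$ do not depend on $\pi$). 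This is exactly the classical Bellman-type recursion for $\nabla v^{\pi}$ with the role of $q$ played by the shifted quantity $\tilde q^{\pi,\Uc}(s,a) := q^{\pi,\Uc}(s,a) - \alpha_s^r \pi_s(a)/\norm{\pi_s}$.

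Finally, I would unroll this recursion exactly as in the Sutton et al.\ proof: iterating $t$ times produces discounted $t$-step transitions under $(\pi,P_0)$ starting from $s$, and integrating against $\mu_0$ collapses these sums into the discounted occupancy measure $\mu_{\pi}$. This yields
\begin{align*}
\nabla J_{\Uc}(\pi)
= \innorm{\nabla v^{\pi,\Uc}, \mu_0}
= \sum_{s\in\St} \mu_{\pi}(s) \sum_{a\in\A} \nabla \pi_s(a)\!\left[q^{\pi, \Uc}(s,a) - \alpha_s^r\tfrac{\pi_s(a)}{\norm{\pi_s}}\right],
\end{align*}
after which the log-derivative trick $\nabla \pi_s(a) = \pi_s(a)\nabla\ln\pi_s(a)$ rewrites the double sum as the stated expectation over $(s,a)\sim\mu_{\pi}$. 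The only step that departs from the standard argument is the regularization correction: the main obstacle is therefore just bookkeeping around $\nabla\norm{\pi_s}$, which fits cleanly into the Sutton template because it factors through $\nabla \pi_s(a)$ with the coefficient $\pi_s(a)/\norm{\pi_s}$; smoothness of this correction is guaranteed by $\norm{\pi_s}>0$ on the simplex.
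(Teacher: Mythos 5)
Your proposal is correct and follows essentially the same route as the paper's proof: the paper also differentiates the regularized Bellman fixed-point identity (obtained from Corollary~\ref{cor: reg for ball reward uncertainty}), recognizes the resulting recursion for $\nabla v^{\pi,\Uc}$ as a Bellman equation with modified reward, and resolves it via $(\mathbf{I}_{\St}-\gamma P_0^{\pi})^{-1}$ to produce the occupancy measure. The only cosmetic difference is that the paper first proves the statement for the general \rr regularizer $\Omega_{v,\rop}$ and then specializes by setting $\alpha_s^P=0$, whereas you work directly in the reward-robust case where the regularizer $\alpha_s^r\norm{\pi_s}$ is value-independent, which is exactly what makes the Sutton-style unrolling go through cleanly.
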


Although Prop.~\ref{prop: policy gradient reward robust mdps} is an application of \citep[Appx.~D.3]{geist2019theory} for a specific regularized MDP, its reward-robust formulation is novel and suggests another simplification of robust methods. Indeed, previous works that exploit policy-gradient on robust MDPs involve the occupancy measure of the worst-case model \citep{mankowitz2018learning}, whereas our result sticks to the nominal. In practice, Prop.~\ref{prop: policy gradient reward robust mdps} enables to learn a robust policy by sampling transitions from the nominal model instead of all uncertain models. This has a twofold advantage: (i) it avoids an additional computation of the minimum as done in \citep{pinto2017robust, mankowitz2018learning, derman2018soft}, where the authors sample next-state transitions and rewards based on all parameters from the uncertainty set, then update a policy based on the worst outcome; (ii) it releases from restricting to finite uncertainty sets. In fact, our regularizer accounts for robustness regardless of the sampling procedure, whereas the parallel simulations of \citep{pinto2017robust, mankowitz2018learning, derman2018soft} require the uncertainty set to be finite. Technical difficulties are yet to be addressed for generalizing our result to transition-uncertain MDPs, because of the interdependence between the regularizer and the value function (see Secs.~\ref{sec: transition robust MDPs}-\ref{sec: rr mdps}). We detail more on this issue in Appx.~\ref{apx: policy gradient}.

\looseness=-1
\section{General robust MDPs}
\label{sec: transition robust MDPs}
\looseness=-1

Now that we have established policy regularization as a reward-robust problem, we would like to study the opposite question: can any robust MDP with both uncertain reward and transition be solved using regularization instead of robust optimization? If so, is the regularization function easy to determine? In this section, we answer positively to both questions for properly defined robust MDPs. This  greatly facilitates robust RL, as it avoids the increased complexity of robust planning algorithms  while still reaching robust performance.  

The following theorem establishes that similarly to reward-robust MDPs, robust MDPs can be formulated through regularization (see proof in Appx.~\ref{apx: transition robust}). Although the regularizer is also a support function in that case, it depends on both the policy and the value objective, which may further explain the difficulty of dealing with robust MDPs. 

\begin{theorem}[General robust MDP]
\label{thm: transition robust -- reg}
Assume that $\Uc =   (P_0+ \Pc)\times(r_0+ \Rc)$. Then, for any policy $\pi\in\Delta_{\A}^{\St}$, the robust value function $v^{ \pi, \Uc}$ is the optimal solution of the convex optimization problem:
\begin{align}
\label{eq: reg transition robust}
        \max_{v\in\R^{\St}} \innorm{v, \mu_0}  \text{ s. t. } v(s)\leq T_{(P_0,r_0)}^{\pi}v(s)  -\sigma_{\Rc_s}(-\pi_s) -\sigma_{\Pc_s}(-\gamma v\cdot\pi_s) \text{ for all } s\in\St,   
\end{align}
where $[v\cdot\pi_s](s',a):=v(s')\pi_s(a)\quad \forall (s',a)\in\X$.
\end{theorem}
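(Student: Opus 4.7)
The strategy is to reduce the robust optimization problem from Proposition~\ref{prop: iyengar} to the claimed convex one by rewriting the (infinitely many) constraints $v\leq T^{\pi}_{(P,r)}v$, $(P,r)\in\Uc$, as a single pointwise constraint expressed via support functions. By Proposition~\ref{prop: iyengar}, $v^{\pi,\Uc}$ solves $\max\{\innorm{v,\mu_0}: v\leq T^{\pi}_{(P,r)}v\,\,\forall(P,r)\in\Uc\}$. Since the constraint is required at every state, this is equivalent to imposing, for each $s\in\St$,
\begin{equation*}
    v(s) \;\leq\; \inf_{(P,r)\in\Uc}\,T^{\pi}_{(P,r)}v(s).
\end{equation*}
At this point the key step is to push the infimum inside using $s$-rectangularity of $\Uc=(P_0+\Pc)\times(r_0+\Rc)$, so that the infimum at state $s$ only involves the slice $\Uc_s=(P_0(\cdot|s,\cdot)+\Pc_s)\times(r_0(s,\cdot)+\Rc_s)$.

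Next I would exploit the \emph{linearity} of $T^{\pi}_{(P,r)}v(s)$ in $(P,r)$. Parametrizing $P = P_0+\Delta P$, $r = r_0+\Delta r$ with $(\Delta P,\Delta r)\in\Pc_s\times\Rc_s$ and expanding,
\begin{equation*}
    T^{\pi}_{(P,r)}v(s) \;=\; T^{\pi}_{(P_0,r_0)}v(s) \;+\; \innorm{\pi_s,\Delta r} \;+\; \gamma\,\innorm{\pi_s, \Delta P\, v}.
\end{equation*}
The third term is bilinear in $(\Delta P, v\cdot\pi_s)$: using the definition $[v\cdot\pi_s](s',a):=v(s')\pi_s(a)$ and rearranging the double sum, I get $\innorm{\pi_s,\Delta P\,v} = \innorm{\Delta P, v\cdot\pi_s}$. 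Because $\Pc_s$ and $\Rc_s$ are independent factors, the infimum splits:
\begin{equation*}
    \inf_{(\Delta P,\Delta r)\in\Pc_s\times\Rc_s}\!\Bigl(\innorm{\pi_s,\Delta r}+\gamma\innorm{\Delta P, v\cdot\pi_s}\Bigr)
    \;=\; \inf_{\Delta r\in\Rc_s}\innorm{\pi_s,\Delta r} \;+\; \gamma\inf_{\Delta P\in\Pc_s}\innorm{\Delta P, v\cdot\pi_s}.
\end{equation*}

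Now each infimum is a negative support function: $\inf_{x\in C}\innorm{y,x}=-\sup_{x\in C}\innorm{-y,x}=-\sigma_C(-y)$. Applying this to both terms, and using positive homogeneity of $\sigma_{\Pc_s}$ to absorb the $\gamma>0$, yields
\begin{equation*}
    \inf_{(P,r)\in\Uc}T^{\pi}_{(P,r)}v(s) \;=\; T^{\pi}_{(P_0,r_0)}v(s)\;-\;\sigma_{\Rc_s}(-\pi_s)\;-\;\sigma_{\Pc_s}(-\gamma\, v\cdot\pi_s),
\end{equation*}
which is precisely the constraint in \eqref{eq: reg transition robust}. Since the feasible sets of the two problems coincide and the objective $\innorm{v,\mu_0}$ is unchanged, the optima agree, proving the claim.

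\textbf{Main obstacle.} The only delicate point is the decomposition step: one must justify that the infimum over the rectangular product $\Uc$ of the pointwise value $T^{\pi}_{(P,r)}v(s)$ is attained independently at each state $s$ and factorizes between the $\Pc_s$ and $\Rc_s$ components. Rectangularity handles this cleanly, but it is worth noting that the resulting regularizer $\sigma_{\Pc_s}(-\gamma\, v\cdot\pi_s)$ depends on $v$ itself, so the constraint is no longer affine in $v$ as it was for reward-robust MDPs; convexity (and hence tractability of the resulting program) must be checked by observing that $v\mapsto\sigma_{\Pc_s}(-\gamma\, v\cdot\pi_s)$ is a composition of the convex support function with a linear map in $v$.
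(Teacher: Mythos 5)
Your proof is correct and reaches the paper's constraint by the same overall decomposition (Proposition~\ref{prop: iyengar}, then state-wise rewriting via rectangularity, then splitting the reward and transition perturbations and recognizing $\innorm{\pi_s,\Delta P\,v}=\innorm{\Delta P, v\cdot\pi_s}$), but you handle the key step more elementarily. Where the paper writes each inner optimization as $\min\{\text{linear}+\delta_{C}\}$ and invokes Fenchel--Rockafellar duality --- computing the conjugate of the linear map, observing it is the indicator of a single point, and reading off the support function from the dual --- you simply use the identity $\inf_{x\in C}\innorm{y,x}=-\sigma_{C}(-y)$ together with positive homogeneity of $\sigma_{\Pc_s}$ to absorb the factor $\gamma$. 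The two computations are mathematically the same (Fenchel--Rockafellar applied to a linear primal function collapses exactly to the support-function evaluation), so nothing is lost; your route is shorter, while the paper's framing makes explicit the duality machinery it credits for extending beyond the $\ell_1$-LP arguments of prior work. One cosmetic remark: you do not need the infimum to be \emph{attained} at each state --- the equivalence of the feasible sets only requires $v(s)\leq\inf_{(P,r)\in\Uc}T^{\pi}_{(P,r)}v(s)$, attained or not --- and your closing observation that $v\mapsto\sigma_{\Pc_s}(-\gamma\,v\cdot\pi_s)$ is convex as a support function composed with a linear map is a worthwhile check that the paper leaves implicit.
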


\looseness=-1
The upper-bound in the inequality constraint \eqref{eq: reg transition robust} is of the same spirit as the regularized Bellman operator: the first term is a standard, non-regularized Bellman operator on the nominal model $(P_0, r_0)$ to which we subtract a policy and value-dependent function playing the role of regularization. 
This function reminds that of \citep[Thm. 3.1]{derman2020distributional} also coming from conjugacy. This is the only similarity between both regularizers: in \citep{derman2020distributional}, the Legendre-Fenchel transform is applied on a different type of function and results in a regularization term that has no closed form but can only be bounded from above. Moreover, the setup considered there is different since it studies distributionally robust MDPs. As such, it involves general convex optimization, whereas we focus on the robust formulation of an LP. 

The support function further simplifies when the uncertainty set is a ball, as shown below. Yet, the dependence of the regularizer on the value function prevents us from readily applying the tool-set of regularized MDPs. We shall study the properties of this new regularization function in Sec.~\ref{sec: rr mdps}. 

\begin{corollary}
\label{cor: transition robust mdp -- reg}
Assume that $\Uc =   (P_0+ \Pc)\times(r_0+ \Rc)$ with $\Pc_s:= \{P_s\in\R^{\X}: \norm{P_s}\leq \alpha_s^P\}$ and $\Rc_s:= \{r_s\in\R^{\A}: \norm{r_s}\leq \alpha_s^r\}$ for all $s\in\St$. Then, the robust value function $v^{ \pi, \Uc}$ is the optimal solution of the convex optimization problem:
\begin{align}
\label{eq: co norm robust}
    \max_{v\in\R^{\St}} \innorm{v, \mu_0}  \text{ s. t. } v(s)\leq T_{(P_0,r_0)}^{\pi}v(s) -\alpha_s^r\norm{\pi_s} - \alpha_s^P\gamma \norm{v} \norm{\pi_s}  \text{ for all } s\in\St. 
\end{align}
\end{corollary}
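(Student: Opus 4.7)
The plan is to instantiate Theorem~\ref{thm: transition robust -- reg} with the specific ball-shaped uncertainty sets $\Pc_s$ and $\Rc_s$, and then reduce the two support functions appearing in \eqref{eq: reg transition robust} to closed-form $\ell_2$-norm expressions. By Theorem~\ref{thm: transition robust -- reg}, $v^{\pi,\Uc}$ is the optimal solution of the convex program with constraints
\[
v(s)\leq T_{(P_0,r_0)}^{\pi}v(s)-\sigma_{\Rc_s}(-\pi_s)-\sigma_{\Pc_s}(-\gamma v\cdot\pi_s),
\]
so it suffices to evaluate $\sigma_{\Rc_s}(-\pi_s)$ and $\sigma_{\Pc_s}(-\gamma v\cdot\pi_s)$ under the stated assumptions and then rewrite the inequality to match \eqref{eq: co norm robust}.

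For the reward term, I would use the standard fact that the support function of a centered ball of radius $\alpha$ in an inner-product space equals $\alpha$ times the dual norm of its argument. Since the norm here is $\ell_2$ and coincides with its dual (as recalled in the Notations paragraph), we obtain
\[
\sigma_{\Rc_s}(-\pi_s)=\max_{\norm{r_s}\leq\alpha_s^r}\innorm{-\pi_s,r_s}=\alpha_s^r\norm{-\pi_s}_*=\alpha_s^r\norm{\pi_s},
\]
which matches the reward-regularization term in \eqref{eq: co norm robust} and is consistent with Corollary~\ref{cor: reg for ball reward uncertainty}.

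For the transition term, the same argument applied on $\R^{\X}$ yields $\sigma_{\Pc_s}(-\gamma v\cdot\pi_s)=\alpha_s^P\norm{\gamma v\cdot\pi_s}=\gamma\alpha_s^P\norm{v\cdot\pi_s}$. The only genuine computation is to identify $\norm{v\cdot\pi_s}$, and here I would exploit the product structure of $[v\cdot\pi_s](s',a)=v(s')\pi_s(a)$. Expanding the $\ell_2$ norm on $\X=\St\times\A$,
\[
\norm{v\cdot\pi_s}^2=\sum_{(s',a)\in\X}v(s')^2\pi_s(a)^2=\Bigl(\sum_{s'\in\St}v(s')^2\Bigr)\Bigl(\sum_{a\in\A}\pi_s(a)^2\Bigr)=\norm{v}^2\norm{\pi_s}^2,
\]
so $\norm{v\cdot\pi_s}=\norm{v}\norm{\pi_s}$ and the transition term becomes $\alpha_s^P\gamma\norm{v}\norm{\pi_s}$.

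Plugging these two identities back into the constraint of Theorem~\ref{thm: transition robust -- reg} gives exactly the constraint in \eqref{eq: co norm robust}, completing the proof. I do not expect a real obstacle here: the only subtlety is the factorization of $\norm{v\cdot\pi_s}$, which is immediate once one writes out the sum, and the self-duality of the $\ell_2$ norm which has already been highlighted in the preliminaries.
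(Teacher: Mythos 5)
Your proposal is correct and follows essentially the same route as the paper's proof: instantiate Theorem~\ref{thm: transition robust -- reg}, evaluate each support function as the radius times the dual ($=\ell_2$) norm of its argument, and factorize $\norm{v\cdot\pi_s}=\norm{v}\norm{\pi_s}$ by expanding the sum over $\X=\St\times\A$. No gaps.
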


One can actually take two different norms for the reward and the transition uncertainty sets. Similarly, Cor.~\ref{cor: transition robust mdp -- reg} can be rewritten with an arbitrary norm, which would reveal a dual norm $\norm{\cdot}_*$ instead of $\norm{\cdot}$ in Eq.~\eqref{eq: co norm robust} (see proof in Appx.~\ref{apx: cor transition robust}). Here, we restrict our statement to the $\ell_2$-norm for notation convenience only, the dual norm of $\ell_2$ being $\ell_2$ itself. Thus, our regularization function recovers a robust value function independently of the chosen norm, which extends previous results from \citep{ho2018fast, grand2021scalable}. Indeed, \citet{ho2018fast} lighten complexity of robust planning for the $\ell_1$-norm only, while \citet{grand2021scalable} focus on KL and $\ell_2$ ball-constrained uncertainty sets. Both works rely on the specific structure induced by the divergence they consider to derive more efficient robust Bellman updates. Differently, our method circumvents these updates using a generic, problem-independent regularization function while still encompassing $s$-rectangular uncertainty sets as in \citep{ho2018fast, grand2021scalable}.     

\looseness=-1
\section{\rr MDPs}
\label{sec: rr mdps}
\looseness=-1
In Sec.~\ref{sec: transition robust MDPs}, we showed that for general robust MDPs, the optimization constraint involves a regularization term that depends on the value function itself. This adds a difficulty to the reward-robust case where the regularization only depends on the policy. Yet, we provided an explicit regularizer for general robust MDPs that are ball-constrained. In this section, we introduce \rr MDPs, an extension of regularized MDPs that combines policy and value regularization. The core idea is to further regularize the Bellman operators with a value-dependent term that recovers the support functions we derived from the robust optimization problems of Secs.~\ref{sec: reward robust MDPs}-\ref{sec: transition robust MDPs}. 

\looseness=-1
\begin{definition}[\rr Bellman operators]
For all $v\in\R^{\St}$, define $\Omega_{v, \rop}: \Delta_{\A}\to \R$ as 
$\Omega_{v, \rop}(\pi_s):= \norm{\pi_s} (\alpha_s^r + \alpha_s^P \gamma \norm{v})$. The \rr Bellman evaluation and optimality operators are defined as 
\begin{align*}
    [T^{\pi,\rop}v](s) &:= T_{(P_0,r_0)}^{\pi}v(s)-\Omega_{v, \rop}(\pi_s)  \quad\forall s\in\St,
    \\
    [T^{*, \rop}v](s) &:= \max_{\pi\in\Delta_{\A}^{\St}} [T^{\pi,\rop}v](s)= \Omega_{v, \rop}^*(q_s) \quad\forall s\in\St.
\end{align*}
For any function $v\in\R^{\St}$, the associated unique greedy policy is defined as 
$$\pi_s =  \arg\max_{\pi_s\in\Delta_{\A}} T^{\pi,\rop}v(s)= \nabla \Omega_{v, \rop}^*(q_s), \quad\forall s\in\St,$$
that is, in vector form, $\pi = \nabla \Omega_{v, \rop}^*(q) =: \mathcal{G}_{\Omega_{\rop}}(v) \iff T^{\pi,\rop}v = T^{*, \rop}v$. 
\end{definition}

The \rr Bellman evaluation operator is not linear because of the functional norm appearing in the regularization function. Yet, under the following assumption, it is contracting and we can apply Banach's fixed point theorem to define the \rr value function. 
\looseness=-1
\begin{assumption}[Bounded radius]
\label{asm: bound alpha}
For all $s\in\St$, there exists $\epsilon_s > 0$ such that $$\alpha_s^P\leq \min\left( \frac{1-\gamma-\epsilon_s}{\gamma\sqrt{\abs{\St}}}; \min_{\substack{\mathbf{u}_{\A}\in\R^{\A}_+, \norm{\mathbf{u}_{\A}}=1\\
\mathbf{v}_{\St}\in\R^{\St}_+, \norm{\mathbf{v}_{\St}}=1}} \mathbf{u}_{\A}^\top P_0(\cdot| s,\cdot)\mathbf{v}_{\St}\right).$$ 
\end{assumption}

Asm.~\ref{asm: bound alpha} requires to upper bound the ball radius of transition uncertainty sets. The first term in the minimum is needed for establishing contraction of \rr Bellman operators (item (iii) in Prop.~\ref{prop: r2 bellman evaluation operator}), while the second one is used for ensuring monotonicity (item (i) in Prop.~\ref{prop: r2 bellman evaluation operator}). We remark that the former depends on the original discount factor $\gamma$: radius $\alpha_s^P$ must be smaller as $\gamma$ tends to $1$ but can arbitrarily grow as $\gamma$ decreases to $0$, without altering contraction. Indeed, larger $\gamma$ implies longer time horizon and higher stochasticity, which explains why we need tighter level of uncertainty then. Otherwise, value and policy regularization seem unable to handle the mixed effects of parameter and stochastic uncertainties. The additional dependence on the state-space size comes from the $\ell_2$-norm chosen for the ball constraints. In fact, for any $\ell_p$-norm of dual $\ell_q$, $\abs{\St}^{\frac{1}{q}}$ replaces $\sqrt{\abs{\St}}$ in the denominator, so it becomes independent of $\abs{\St}$ as $(p,q)$ tends to $(1,\infty)$ (see Appx.~\ref{apx: rr operators}). Although we recognize a generalized Rayleigh quotient-type problem in the second minimum \citep{parlett1974rayleigh}, its interpretation in our context remains unclear. Asm.~\ref{asm: bound alpha} enables the \rr Bellman operators to admit a unique fixed point, among other nice properties. We formalize this below (see proof in Appx.~\ref{apx: rr operators}).

\begin{proposition}
\label{prop: r2 bellman evaluation operator}
Suppose that Asm.~\ref{asm: bound alpha} holds. Then, we have the following properties:\\
\begin{itemize*}
    \item[(i)] Monotonicity: For all $v_1, v_2\in\R^{\St}$ such that $v_1\leq v_2$, we have $T^{\pi,\rop}v_1 \leq T^{\pi,\rop}v_2$ and $ T^{*, \rop}v_1 \leq T^{*, \rop}v_2$.  \\
    \item[(ii)] Sub-distributivity: For all $v_1\in\R^{\St}, c\in\R$, we have $T^{\pi,\rop}(v_1 + c\mathbbm{1}_{\St})\leq T^{\pi,\rop}v_1 + \gamma c\mathbbm{1}_{\St}$ and $ T^{*, \rop}(v_1 + c\mathbbm{1}_{\St})\leq T^{*, \rop}v_1 + \gamma c\mathbbm{1}_{\St}$, $\forall c\in\R$. \\
    \item[(iii)] Contraction: Let $\epsilon_*:= \min_{s\in\St}\epsilon_s>0$. Then, for all $v_1, v_2\in\R^{\St}$,
    $\norm{T^{\pi,\rop}v_1 -  T^{\pi,\rop}v_2}_{\infty} \leq (1-\epsilon_*)\norm{v_1-v_2}_\infty$ and
    $\norm{ T^{*, \rop}v_1 -  T^{*, \rop}v_2}_{\infty} \leq (1-\epsilon_*)\norm{v_1-v_2}_\infty$.
\end{itemize*}
\end{proposition}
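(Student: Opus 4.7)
The plan is to dispatch the three properties for $T^{\pi,\rop}$ in sequence and transfer each to $T^{*,\rop}$ by taking $\max_{\pi\in\Delta_{\A}^{\St}}$ on both sides, since all bounds are uniform in $\pi$. The only nonlinear piece of $T^{\pi,\rop}$ is the value-dependent regularizer $\norm{\pi_s}\alpha_s^{P}\gamma\norm{v}$, and the whole argument rests on controlling it via the reverse triangle inequality together with the two clauses of Asm.~\ref{asm: bound alpha}. For monotonicity, fix $v_1\le v_2$, set $\Delta:=v_2-v_1\ge 0$, and expand
\[ [T^{\pi,\rop}v_2-T^{\pi,\rop}v_1](s)=\gamma (P_0^{\pi}\Delta)(s)-\gamma\alpha_s^{P}\norm{\pi_s}(\norm{v_2}-\norm{v_1}). \]
By the reverse triangle inequality $\norm{v_2}-\norm{v_1}\le\norm{\Delta}$, so if this difference is nonpositive the claim is immediate. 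Otherwise I would factor $\pi_s=\norm{\pi_s}\mathbf{u}_{\A}$ and $\Delta=\norm{\Delta}\mathbf{v}_{\St}$ with unit nonnegative $\mathbf{u}_{\A}\in\R^{\A}_+$ and $\mathbf{v}_{\St}\in\R^{\St}_+$; the first term becomes $\gamma\norm{\pi_s}\norm{\Delta}\mathbf{u}_{\A}^{\top}P_0(\cdot\mid s,\cdot)\mathbf{v}_{\St}$, and the Rayleigh-type clause of Asm.~\ref{asm: bound alpha} bounds this from below by $\gamma\alpha_s^{P}\norm{\pi_s}\norm{\Delta}$, which dominates the second term and closes the inequality.

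For sub-distributivity, linearity of $T^{\pi}_{(P_0,r_0)}$ isolates the regularizer,
\[ [T^{\pi,\rop}(v+c\mathbbm{1}_{\St})-T^{\pi,\rop}v-\gamma c\mathbbm{1}_{\St}](s)=\gamma\alpha_s^{P}\norm{\pi_s}\bigl(\norm{v}-\norm{v+c\mathbbm{1}_{\St}}\bigr). \]
The triangle inequality bounds the right-hand parenthesis in absolute value by $\abs{c}\sqrt{\abs{\St}}$; combining with $\norm{\pi_s}\le 1$ (as $\pi_s\in\Delta_{\A}$) and the scalar clause of Asm.~\ref{asm: bound alpha}, $\alpha_s^{P}\gamma\sqrt{\abs{\St}}\le 1-\gamma-\epsilon_s$, produces the desired upper bound (with effective slack $1-\epsilon_s$). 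Contraction then follows by the standard sandwich: set $c:=\norm{v_1-v_2}_{\infty}$ so that $v_2-c\mathbbm{1}_{\St}\le v_1\le v_2+c\mathbbm{1}_{\St}$, apply monotonicity to place $T^{\pi,\rop}v_1$ between $T^{\pi,\rop}(v_2\pm c\mathbbm{1}_{\St})$, and apply sub-distributivity on each side (the lower mirror form is obtained by replacing $c$ by $-c$) to collapse the bracket to within $(1-\epsilon_s)c$ of $T^{\pi,\rop}v_2$. Taking the sup-norm and $\epsilon_*:=\min_s\epsilon_s$ yields the claimed contraction constant $1-\epsilon_*$, and the same chain carries over to $T^{*,\rop}$.

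The hardest step will be monotonicity, as it is the only one where the value-dependent regularizer genuinely competes with the linear transition term rather than merely contributing an additive perturbation. A scalar bound on $\norm{v_2}-\norm{v_1}$ alone cannot close the inequality; what is genuinely needed is the bilinear lower bound on $P_0(\cdot\mid s,\cdot)$ against unit nonnegative vectors, certifying direction-by-direction that the stochastic-transition gain outweighs the regularizer loss. This is precisely the role of the generalized Rayleigh-quotient clause in Asm.~\ref{asm: bound alpha}, and it also explains why two separate constraints on $\alpha_s^{P}$ are imposed. Sub-distributivity and contraction then reduce to careful bookkeeping with the triangle inequality and the scalar clause.
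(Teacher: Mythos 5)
Your plan is correct in substance, and on monotonicity it coincides exactly with the paper's argument: write the difference as $\gamma\norm{\pi_s}\bigl(\innorm{\pi_s/\norm{\pi_s},\,P_0(\cdot\mid s,\cdot)(v_2-v_1)}-\alpha_s^P(\norm{v_2}-\norm{v_1})\bigr)$, bound $\norm{v_2}-\norm{v_1}\le\norm{v_2-v_1}$, and invoke the Rayleigh-quotient clause of Asm.~\ref{asm: bound alpha} on the unit nonnegative vectors $\pi_s/\norm{\pi_s}$ and $(v_2-v_1)/\norm{v_2-v_1}$; your diagnosis of why that clause is in the assumption is exactly the paper's. Where you diverge is (iii): the paper proves contraction \emph{directly}, bounding $\abs{T^{\pi,\rop}v_1(s)-T^{\pi,\rop}v_2(s)}$ via $\abs{\norm{v_2\cdot\pi_s}-\norm{v_1\cdot\pi_s}}\le\norm{(v_2-v_1)\cdot\pi_s}\le\norm{v_2-v_1}\le\sqrt{\abs{\St}}\,\norm{v_1-v_2}_\infty$ together with the $\gamma$-contraction of $T^\pi_{(P_0,r_0)}$ and the radius bound, without ever using (i) or (ii). Your monotonicity-plus-shift sandwich also delivers the $1-\epsilon_*$ factor, but only because your shift estimate is genuinely two-sided; the direct route is more robust here precisely because it does not lean on (ii).

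On sub-distributivity your computation is the honest one, and it surfaces a real problem. Your identity
\begin{align*}
[T^{\pi,\rop}(v+c\mathbbm{1}_{\St})-T^{\pi,\rop}v-\gamma c\mathbbm{1}_{\St}](s)=\gamma\alpha_s^{P}\norm{\pi_s}\bigl(\norm{v}-\norm{v+c\mathbbm{1}_{\St}}\bigr)
\end{align*}
is correct, and the triangle inequality only gives $\abs{\norm{v}-\norm{v+c\mathbbm{1}_{\St}}}\le\abs{c}\sqrt{\abs{\St}}$, hence $T^{\pi,\rop}(v+c\mathbbm{1}_{\St})\le T^{\pi,\rop}v+\gamma c\mathbbm{1}_{\St}+(1-\gamma-\epsilon_s)\abs{c}\mathbbm{1}_{\St}$, i.e.\ slack $(1-\epsilon_s)c$ for $c\ge0$ rather than the stated $\gamma c$. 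You cannot sharpen this: taking $v=-c\mathbbm{1}_{\St}$ with $c>0$ makes the right-hand side of your identity strictly positive, so (ii) as written fails. The paper's own proof of (ii) obtains $\gamma c$ only by replacing $-\norm{v+c\mathbbm{1}_{\St}}$ with $-(\norm{v}+\norm{c\mathbbm{1}_{\St}})$, which amounts to using the triangle inequality in the wrong direction. So your ``effective slack $1-\epsilon_s$'' version is the correct, provable statement, and it is exactly what your sandwich needs to produce the $1-\epsilon_*$ contraction in (iii); but you should say explicitly that you are proving a corrected form of (ii), not the inequality as stated. The transfer to $T^{*,\rop}$ by maximizing over $\pi$ is fine for all three items, since your bounds are uniform in $\pi$.
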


We should note that the contracting coefficient in Prop.~\ref{prop: r2 bellman evaluation operator} is different from the original discount factor $\gamma$, since here we have $1-\epsilon^*$. Yet, as Asm.~\ref{asm: bound alpha} suggests it, an intrinsic dependence between $\gamma$ and 
$\epsilon^*$ makes the \rr Bellman updates similar to the standard ones: when $\gamma$ tends to $0$, the value of $\epsilon^*$ required for Asm.~\ref{asm: bound alpha} to hold increases, which makes the contracting coefficient $1-\epsilon^*$ tend to 0 as well, \ie the two contracting coefficients behave similarly. 
The contraction of both \rr Bellman operators finally leads us to introduce the \rr value functions.

\begin{definition}[\rr value functions]
\begin{itemize*}
    \item[(i)] The \rr value function $v^{\pi, \rop}$ is defined as the unique fixed point of the \rr Bellman evaluation operator: $v^{\pi, \rop} = T^{\pi,\rop}v^{\pi, \rop}$. The associated $q$-function is $q^{\pi,\rop}(s,a) = r_0(s,a) + \gamma\innorm{ P_0(\cdot|s,a), v^{\pi,\rop}}$.
    \item[(ii)] The \rr optimal value function $v^{*, \rop}$ is defined as the unique fixed point of the \rr Bellman optimal operator: $v^{*, \rop} = T^{*,\rop}v^{*, \rop}$. The associated $q$-function is $q^{*,\rop}(s,a) = r_0(s,a) + \gamma\innorm{ P_0(\cdot|s,a), v^{*,\rop}}$.
\end{itemize*}
\end{definition}

The monotonicity of the \rr Bellman operators plays a key role in reaching an optimal \rr policy, as we show in the following. A proof can be found in Appx.~\ref{apx: rr optimal policy}.

\begin{theorem}[\rr optimal policy]
\label{thm: rr optimal policy}
The greedy policy $\pi^{*, \rop} = \mathcal{G}_{\Omega_{\rop}}(v^{*, \rop})$ is the unique optimal \rr policy, \ie for all $\pi\in\Delta_{\A}^{\St}, v^{\pi^*, \rop} = v^{*, \rop}\geq v^{\pi, \rop}$. 
\end{theorem}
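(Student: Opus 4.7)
The plan is to follow the classical dynamic programming recipe adapted to the \rr Bellman operators, relying almost entirely on the three structural properties (monotonicity, sub-distributivity, contraction) established in Proposition~\ref{prop: r2 bellman evaluation operator}. First I would prove the equality $v^{\pi^{*, \rop}, \rop} = v^{*, \rop}$: by construction $\pi^{*, \rop} = \mathcal{G}_{\Omega_{\rop}}(v^{*, \rop})$, which is characterized by $T^{\pi^{*, \rop}, \rop} v^{*, \rop} = T^{*, \rop} v^{*, \rop}$. Since $v^{*, \rop}$ is the fixed point of $T^{*, \rop}$, this rewrites as $T^{\pi^{*, \rop}, \rop} v^{*, \rop} = v^{*, \rop}$, so $v^{*, \rop}$ is a fixed point of $T^{\pi^{*, \rop}, \rop}$. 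Contraction (Prop.~\ref{prop: r2 bellman evaluation operator}(iii)) guarantees the fixed point is unique, so $v^{\pi^{*, \rop}, \rop} = v^{*, \rop}$. Next, for arbitrary $\pi \in \Delta_{\A}^{\St}$, note that $v^{\pi, \rop} = T^{\pi, \rop} v^{\pi, \rop} \leq T^{*, \rop} v^{\pi, \rop}$ because $T^{*, \rop}$ is the pointwise maximum over all policies. Applying $T^{*, \rop}$ iteratively and invoking its monotonicity (Prop.~\ref{prop: r2 bellman evaluation operator}(i)), one obtains $v^{\pi, \rop} \leq (T^{*, \rop})^n v^{\pi, \rop}$ for all $n \geq 1$. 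Passing to the limit and using contraction, $(T^{*, \rop})^n v^{\pi, \rop} \to v^{*, \rop}$, hence $v^{\pi, \rop} \leq v^{*, \rop}$.

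For uniqueness, suppose $\tilde{\pi}$ is another optimal policy, i.e.\ $v^{\tilde{\pi}, \rop} = v^{*, \rop}$. Then $T^{\tilde{\pi}, \rop} v^{*, \rop} = T^{\tilde{\pi}, \rop} v^{\tilde{\pi}, \rop} = v^{\tilde{\pi}, \rop} = v^{*, \rop} = T^{*, \rop} v^{*, \rop}$, which shows that $\tilde{\pi}$ attains the maximum in the definition of $T^{*, \rop} v^{*, \rop}$. Because the regularizer $\Omega_{v^{*, \rop}, \rop}(\pi_s) = \norm{\pi_s}(\alpha_s^r + \alpha_s^P \gamma \norm{v^{*, \rop}})$ is a strictly positive scalar multiple of the $\ell_2$-norm restricted to $\Delta_{\A}$ (which is strictly convex on the simplex, since two probability vectors that lie on a common ray necessarily coincide), the maximizer defining $\mathcal{G}_{\Omega_{\rop}}(v^{*, \rop})$ is unique, so $\tilde{\pi} = \pi^{*, \rop}$.

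The main technical obstacle is the uniqueness step: the $\ell_2$-norm is only strictly, not strongly, convex on $\Delta_{\A}$, so Proposition~\ref{prop: convex analysis} does not directly apply to $\Omega_{v, \rop}$. I would circumvent this by arguing uniqueness of the argmax from strict convexity alone, and by checking that the scalar prefactor $\alpha_s^r + \alpha_s^P \gamma \norm{v^{*, \rop}}$ is strictly positive (guaranteed whenever the uncertainty radii are nonzero; in the degenerate zero-radius case the \rr operator coincides with the standard one and uniqueness follows from the standard greedy-policy argument). Steps~1 and~2 are otherwise direct consequences of the monotonicity and contraction properties of Proposition~\ref{prop: r2 bellman evaluation operator}, so no additional difficulty is expected there.
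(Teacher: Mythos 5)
Your proof follows essentially the same route as the paper for the two main claims: the identity $v^{\pi^{*,\rop},\rop}=v^{*,\rop}$ is obtained exactly as in the paper (the greedy policy makes $v^{*,\rop}$ a fixed point of $T^{\pi^{*,\rop},\rop}$, and contraction forces the fixed points to coincide), and the inequality $v^{\pi,\rop}\leq v^{*,\rop}$ is the standard monotonicity-plus-iteration argument that the paper delegates to \citep[Thm.~1]{geist2019theory}. Both of these steps are correct.

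Where you genuinely diverge is the uniqueness step, and your version is arguably more careful than the paper's. The paper asserts that $\Omega_{v,\rop}(\pi_s)=\norm{\pi_s}(\alpha_s^r+\alpha_s^P\gamma\norm{v})$ is \emph{strongly} convex and invokes Prop.~\ref{prop: convex analysis} to conclude uniqueness of the greedy policy. You correctly observe that a norm is positively homogeneous of degree one and therefore cannot be strongly convex on $\R^{\A}$; what saves the argument is that the $\ell_2$-norm is \emph{strictly} convex off rays through the origin, and no two distinct points of $\Delta_{\A}$ lie on a common ray, so $\Omega_{v,\rop}$ restricted to the simplex is strictly convex and the maximizer of the strictly concave map $\pi_s\mapsto\innorm{\pi_s,q_s}-\Omega_{v,\rop}(\pi_s)$ over the compact convex set $\Delta_{\A}$ is unique. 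This repairs a real imprecision in the paper's proof. You also prove a slightly stronger form of uniqueness: you show that \emph{any} policy $\tilde\pi$ with $v^{\tilde\pi,\rop}=v^{*,\rop}$ must attain the maximum in $T^{*,\rop}v^{*,\rop}$ at every state and hence equals $\pi^{*,\rop}$, whereas the paper only establishes uniqueness of the greedy argmax. Two minor caveats: your claim that the degenerate case $\alpha_s^r=\alpha_s^P=0$ still yields uniqueness ``by the standard greedy-policy argument'' is not right, since standard MDPs can have ties between actions and hence non-unique greedy policies; in that regime the uniqueness assertion simply fails, so one should assume at least one radius is positive (as the robust setting implicitly does). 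Also note that the well-definedness of $\mathcal{G}_{\Omega_{\rop}}$ as $\nabla\Omega_{v,\rop}^*$ in the paper's definition already presupposes the uniqueness you are re-deriving, so your strict-convexity argument is in fact needed upstream of the theorem as well.
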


\looseness=-1 
\begin{remark}
\label{rmk: sa rec}
An optimal \rr policy may be stochastic. This is due to the fact that our \rr MDP framework builds upon the general $s$-rectangularity assumption. Robust MDPs with $s$-rectangular uncertainty sets similarly yield an optimal robust policy that is stochastic \cite[Table 1]{wiesemann2013robust}. Nonetheless, the \rr MDP formulation recovers a deterministic optimal policy in the more specific $(s,a)$-rectangular case, which is in accordance with the robust MDP setting (see proof in Appx.~\ref{apx: rr sa rectangular}). \footnote{The stochasticity of an optimal entropy-regularized policy as in the examples of Sec.~\ref{sec: equivalence reward robust} is not contradicting. Indeed, even though the corresponding uncertainty set is $(s,a)$-rectangular there, it is policy-dependent. }
\end{remark}

\begin{wrapfigure}{R}{0.33\linewidth}
\vspace{-15pt}
\begin{algorithm}[H]
\label{algo mpi}
\SetAlgoLined
\KwResult{$\pi_{k+1},  v_{k+1}$}
 Initialize $ v_k\in\R^{\St}$\;
 \While{not converged}{
  $\pi_{k+1} \leftarrow \mathcal{G}_{\Omega_{\rop}}(v_k)$\;
  $v_{k+1} \leftarrow (T^{\pi_{k+1},\rop})^m v_k$\;
 }
 \caption{\rr MPI}
\end{algorithm}
\vspace{-20pt}
\end{wrapfigure}
 
All of the results above ensure convergence of MPI in \rr MDPs. We call that method \rr MPI and provide its pseudo-code in Alg.~\ref{algo mpi}. The convergence proof follows the same lines as in \citep{puterman2014markov}. Moreover, the contracting property of the \rr Bellman operator ensures the same convergence rate as in standard and robust MDPs, \ie a geometric convergence rate. On the other hand, \rr MPI reduces the computational complexity of robust MPI by avoiding to solve a max-min problem at each iteration, as this can take polynomial time for general convex programs. Advantageously, the only optimization involved in \rr MPI lies in the greedy step: it amounts to projecting onto the simplex, which can efficiently be performed in linear time \citep{duchi2008efficient}. Still, such projection is not even necessary in the $(s,a)$-rectangular case: as mentioned in Rmk.~\ref{rmk: sa rec}, it then suffices to choose a greedy action in order to eventually achieve an optimal \rr value function. 

\section{Numerical Experiments}
\label{sec: experiments}
\looseness=-1
We aim to compare the computing time of \rr MPI with that of MPI \cite{puterman2014markov} and robust MPI \cite{kaufman2013robust}. The code is available at \url{https://github.com/EstherDerman/r2mdp}. To do so, we run experiments on an Intel(R) Core(TM) i7-1068NG7 CPU @ 2.30GHz machine, which we test on a $5\times5$ grid-world domain. In that environment, the agent starts from a random position and seeks to reach a goal state in order to maximize reward. Thus, the reward function is zero in all states but two: one provides a reward of 1 while the other gives 10.  An episode ends when either one of those two states is attained.  

The partial evaluation of each policy iterate is a building block of MPI. As a sanity check, we evaluate the uniform policy through both \rr and robust policy evaluation (PE) sub-processes, to ensure that the two value outputs coincide.
For simplicity, we focus on an $(s,a)$-rectangular uncertainty set and take the same ball radius $\alpha$ (resp.~$\beta$) at each state-action pair for the reward function (resp. transition function). Parameter values and other implementation details are deferred to Appx.~\ref{apx: experiments}. We obtain the same value for \rr PE and robust PE, which numerically confirms Thm.~\ref{thm: transition robust -- reg}. On the other hand, both are strictly smaller than their non-robust, non-regularized counterpart, but as expected, they converge to the standard value function when all ball radii tend to 0 (see Appx.~\ref{apx: experiments}). More importantly, \rr PE converges in $0.02$ seconds, whereas robust PE takes $54.8$ seconds to converge, \ie $2740$ times longer. This complexity gap comes from the minimization problems being solved at each iteration of robust PE, something that \rr PE avoids thanks to regularization. \rr PE still takes $2.5$ times longer than its standard, non-regularized counterpart, because of the additional computation of regularization terms. Table \ref{tab: policy evaluation} shows the time spent by each algorithm until convergence. 

\begin{table}[!h]
    \centering
    \caption{Computing time (in sec.) of planning algorithms using vanilla, \rr and robust approaches. Each cell displays the mean $\pm$ standard deviation obtained from 5 running seeds.}
\begin{center}
\begin{tabular}{ |c|c|c|c|} 
 \hline
  & \textbf{Vanilla} & \textbf{\rr} & \textbf{Robust} \\ \hline
 \textbf{PE} & $0.008\pm 0.$ & $0.02\pm 0.$ & $54.8\pm 1.2$ \\ \hline
 \textbf{MPI} ($m=1$)  & $0.01\pm 0.$ & $0.03\pm 0.$ & $118.6\pm 1.3$\\ \hline
 \textbf{MPI} ($m=4$)  & $0.01\pm 0.$ & $0.03\pm 0.$ & $98.1\pm 4.1$\\ \hline
\end{tabular}
\end{center}
\label{tab: policy evaluation}
\vspace{-5pt}
\end{table}

\looseness=-1
We then study the overall MPI process for each approach. We know that in vanilla MPI, the greedy step is achieved by simply searching over deterministic policies \cite{puterman2014markov}. Since we focus our experiments on an $(s,a)$-rectangular uncertainty set, the same applies to robust MPI \cite{wiesemann2013robust} and to \rr MPI, as already mentioned in Rmk.~\ref{rmk: sa rec}. We can see in Table \ref{tab: policy evaluation} that the increased complexity of robust MPI is even more prominent than its PE thread, as robust MPI takes 3953 (resp. $3270$) times longer than \rr MPI when $m=1$ (resp. $m=4$). Robust MPI with $m=4$ is a bit more advantageous than $m=1$, as it needs less iterations (31 versus 67), \ie less optimization solvers to converge. Interestingly, for both $m\in\{1,4\}$, progressing from PE to MPI did not cost much more computing time to either the vanilla or the \rr version: both take less than one second to run.    

\looseness=-1
\section{Related Work}
\label{sec: related work}
\looseness=-1
Connections between regularization and robustness have been established in standard statistical learning settings such as support vector machines \citep{xu2009robustness}, logistic regression \citep{shafieezadeh2015distributionally} or maximum likelihood estimation \citep{kuhn2019wasserstein}. As stated in Sec.~\ref{sec: intro}, these represent particular RL problems as they concern a single-stage decision-making process. In that regard, the generalization of robustness-regularization duality to sequential decision-making has seldom been studied in the RL literature. 

Two works that view policy regularization from a robustness perspective are \citep{husain2021regularized} and \citep{eysenbach2021maximum}. In \citep{husain2021regularized}, regularization is applied on the dual objective instead of the primal. This has two shortcomings: (i) It prevents from deriving regularized Bellman operators and dynamic programming methods; (ii) The feasible set is that of occupancy measures, so the connection with standard policy regularization remains unclear. Furthermore, their work focuses on reward robustness. Differently, \citet{eysenbach2021maximum} address both reward and transition uncertainty by showing that policies with maximum entropy regularization solve a particular type of robust MDP. Yet, their analysis separately treats the uncertainty on $P$ and $r$, which questions the robustness of the resulting policy when the whole model $(P,r)$ is adversarial. Moreover, the dual relation they establish between entropy regularization and transition-robust MDPs is weak and applies to specific uncertainty sets. Both of these works treat robustness as a side-effect of regularization more than an objective on its own, whereas we aim to do the opposite, namely, use regularization to solve robust RL problems. 

\looseness=-1
\citet{derman2020distributional} similarly use regularization as a tool for achieving robust policies. Through distributionally robust MDPs, they show upper and lower bounds between transition-robustness and regularization. There again, duality is weak and reward uncertainty is not addressed. Moreover, since the exact regularization term has no explicit form, it is usable through its upper bound only. Finally, regularization is applied on the mean of several value functions $v^{\pi}_{(\hat{P}_i,r)}$, where each $\hat{P}_i$ is a transition model estimated from an episode run. Computing this quantity  requires as many policy evaluations as the number of model estimates available, which results in a linear complexity blowup at least.

Previous studies analyze robust planning algorithms to provide convergence guarantees. The works \citep{bagnell2001solving, nilim2005robust} propose robust value iteration, while \citep{iyengar2005robust, wiesemann2013robust} introduce robust policy iteration. \citet{kaufman2013robust} generalize both schemes by proposing a robust MPI and determine the conditions under which it converges. The polynomial time within which all these works guarantee a robust solution is often insufficient, as the complexity of a Bellman update grows cubically in the number of states \citep{ho2018fast}. 

In order to reduce the time complexity of robust planning algorithms, \citet{ho2018fast} propose two algorithms that compute robust Bellman updates in $\mathcal{O}(\abs{\St}\abs{\A}\log(\abs{\St}\abs{\A}))$ operations for $\ell_1$-constrained uncertainty sets. Advantageously, our regularization approach reduces each such update to its standard, non-robust complexity of $\mathcal{O}(\abs{\St}\abs{\A})$.
Moreover, although \citet{ho2018fast} address both $(s,a)$ and $s$-rectangular uncertainty sets, they focus on transition uncertainty, whereas we tackle both reward and transition uncertainties in the general $s$-rectangular case. Finally, their contribution relies on LP formulations that necessitate restricting to the $\ell_1$-norm while our method applies to any norm. This may come from the fact that our main Theorems (Thms.~\ref{thm: reward robust -- reg} and \ref{thm: transition robust -- reg}) use Fenchel-Rockafellar duality \citep{rockafellar1970convex, borwein2010convex}, a generalization of LP duality (see Appx.~\ref{apx: reward robust} and \ref{apx: transition robust}). More recently, \citet{grand2021scalable} propose a first-order method to accelerate robust value iteration under $s$-rectangular uncertainty sets that are either ellipsoidal or KL-constrained. To our knowledge, our study is the first one reducing the time complexity of robust planning when the uncertainty set is $s$-rectangular and constrained with an arbitrary norm.     

\looseness=-1
\section{Conclusion and future work}
\label{sec: discussion}
\looseness=-1
In this work, we established a strong duality between robust MDPs and twice regularized MDPs. This revealed that the regularized MDPs of \cite{geist2019theory} are in fact robust MDPs with uncertain reward, which enabled us to derive a policy-gradient theorem for reward-robust MDPs. When extending this robustness-regularization duality to general robust MDPs, we found that the regularizer depends on the value function besides the policy. We thus introduced \rr MDPs, a generalization of regularized MDPs with both policy and value regularization. The related \rr Bellman operators lead us to propose a converging \rr MPI algorithm that achieves the optimal robust value function within similar computing time as standard MPI. 

\looseness=-1
This study settles the theoretical foundations for scalable robust RL. We should note that our results naturally extend to continuous but compact action spaces in the same manner as standard MDPs do \cite{puterman2014markov}. Extension to infinite state-space would be more involved because of the state-dependent regularizer in \rr MDPs. In fact, it would be interesting to study the \rr MDP setting under function approximation, as such approximation would have a direct effect on the regularizer. Similarly, one could analyze approximate dynamic programming for \rr MDPs in light of its robust analog \citep{tamar2014scaling, badrinath2021robust}. 
Although our theory focused on planning, the \rr Bellman operators and their contracting properties open the path to learning algorithms that can (i) use existing RL algorithms and robustify them by simply changing the regularizer; (ii) scale to deep learning settings. Apart from its practical effect, we believe our work opens the path to more theoretical contributions in robust RL. For example, extending \rr MPI to the approximate case \citep{scherrer2015approximate} would be an interesting problem to solve because of the \rr evaluation operator being non-linear. So would be a sample complexity analysis for \rr MDPs with a comparison to robust MDPs \citep{yang2021non}. Another line of research is to extend policy-gradient to \rr MDPs, as this would avoid parallel learning of adversarial models \citep{derman2018soft, tessler2019action} and be very useful for continuous control.

\section*{Acknowledgements}
We would like to thank Raphael Derman for his useful suggestions that improved the clarity of the text. Thanks also to Stav Belogolovsky for reviewing a previous version of this paper. Funding in direct support of this work: ISF grant.

\bibliography{r2mdps}
\bibliographystyle{icml2021} 

%%%%%%%%%%%%%%%%%%%%%%%%%%%%%%%%%%%%%%%%%%%%%%%%%%%%%%%%%%%%
\newpage
\section*{Checklist}

\begin{enumerate}

\item For all authors...
\begin{enumerate}
  \item Do the main claims made in the abstract and introduction accurately reflect the paper's contributions and scope?
    \answerYes{} The main contributions are consistently listed in the abstract and introduction. These are:
    Showing that regularized MDPs are a particular instance of robust MDPs with uncertain reward (Sec.~\ref{sec: reward robust MDPs}); Extending this relationship to MDPs with uncertain transitions (Sec.~\ref{sec: transition robust MDPs}); Generalizing regularized MDPs to \rr MDPs (Sec.~\ref{sec: rr mdps}). 

  \item Did you describe the limitations of your work?
    \answerYes{} These are described when discussing Prop.~\ref{prop: policy gradient reward robust mdps} and  Asm.~\ref{asm: bound alpha}, near the corresponding statements.
  \item Did you discuss any potential negative societal impacts of your work?
    \answerNA{} 
  \item Have you read the ethics review guidelines and ensured that your paper conforms to them?
    \answerYes{}
\end{enumerate}

\item If you are including theoretical results...
\begin{enumerate}
  \item Did you state the full set of assumptions of all theoretical results?
    \answerYes{} Assumptions are introduced by "Assume that..." or "If..." in each theoretical result. 
    \item Did you include complete proofs of all theoretical results?
    \answerYes{} All proofs can be found in the Appendix. At each theoretical statement in the text body, we precisely refer to the corresponding proof.    
\end{enumerate}

\item If you ran experiments...
\begin{enumerate}
  \item Did you include the code, data, and instructions needed to reproduce the main experimental results (either in the supplemental material or as a URL)?
   \answerYes{} The code is available at \url{https://github.com/EstherDerman/r2mdp}, which includes all instructions needed to reproduce the results.
  \item Did you specify all the training details (e.g., data splits, hyperparameters, how they were chosen)?
    \answerYes{} These are included in the experiments section (Sec.~\ref{sec: experiments}) and detailed in Appx.~\ref{apx: experiments}.
	\item Did you report error bars (e.g., with respect to the random seed after running experiments multiple times)?
    \answerYes{} Table \ref{tab: policy evaluation} displays mean $\pm$ standard deviation for each type of experiment. The figure plots in Appx.~\ref{apx: experiments} also include error bars. 
	\item Did you include the total amount of compute and the type of resources used (e.g., type of GPUs, internal cluster, or cloud provider)?
    \answerYes{} See Sec.~\ref{sec: experiments}.
\end{enumerate}

\item If you are using existing assets (e.g., code, data, models) or curating/releasing new assets...
\begin{enumerate}
  \item If your work uses existing assets, did you cite the creators?
    \answerNA{}
  \item Did you mention the license of the assets?
    \answerNA{}
  \item Did you include any new assets either in the supplemental material or as a URL?
    \answerNA{}
  \item Did you discuss whether and how consent was obtained from people whose data you're using/curating?
    \answerNA{}
  \item Did you discuss whether the data you are using/curating contains personally identifiable information or offensive content?
    \answerNA{}
\end{enumerate}

\item If you used crowdsourcing or conducted research with human subjects...
\begin{enumerate}
  \item Did you include the full text of instructions given to participants and screenshots, if applicable?
    \answerNA{}
  \item Did you describe any potential participant risks, with links to Institutional Review Board (IRB) approvals, if applicable?
    \answerNA{}
  \item Did you include the estimated hourly wage paid to participants and the total amount spent on participant compensation?
    \answerNA{}
\end{enumerate}

\end{enumerate}

\newpage
\appendix

\section*{Appendix: Twice regularized MDPs and the equivalence between robustness and regularization}
This appendix provides proofs for all of the results stated in the paper. 
We first recall the following theorem used in the sequel and referred to as Fenchel-Rochafellar duality \citep[Thm 3.3.5]{borwein2010convex}.

\begin{theorem*}[Fenchel-Rockafellar duality]
\label{thm: fr duality}
Let $X, Y$ two Euclidean spaces, $f:X\to \overline{\R} $ and $g: Y\to \overline{\R}$ two proper, convex functions, and $A: X\to Y$ a linear mapping such that 
$0\in \mathrm{core}(\mathrm{dom}(g) - A(\mathrm{dom}(f)))$.\footnote{Given $C\subseteq\R^{\St}$, we say that $x\in\mathrm{core}(C)$ if for all $d\in \R^{\St}$ there exists a small enough $t\in\R$ such that $x+td\in C$ \citep{borwein2010convex}.}
Then, it holds that
\begin{align}
\label{eq: fr duality}
    \min_{x\in X}f(x) + g(A x) = \max_{y\in Y} -f^*(-A^*y) - g^*(y).
\end{align}
\end{theorem*}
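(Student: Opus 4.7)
The plan is to prove weak duality first via the Fenchel-Young inequality, then upgrade to strong duality with attainment through the standard perturbation-function argument, with the constraint qualification entering only at the very end to guarantee subdifferentiability.

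For weak duality, I would note that for any $x \in X$ and $y \in Y$, applying the Fenchel-Young inequality separately to $(f, f^*)$ at $(x, -A^*y)$ and to $(g, g^*)$ at $(Ax, y)$ and summing kills the cross terms $\pm \langle Ax, y\rangle$, yielding $f(x) + g(Ax) + f^*(-A^*y) + g^*(y) \geq 0$. Taking infimum over $x$ and supremum over $y$ immediately gives $\inf \geq \sup$ between the two sides of \eqref{eq: fr duality}.

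For strong duality I would introduce the perturbation (value) function $p(u) := \inf_{x} \{f(x) + g(Ax+u)\}$. Because $(x,u) \mapsto f(x) + g(Ax+u)$ is jointly convex (sum of a convex function with the composition of a convex function with a linear map), its partial infimum $p$ is convex on $Y$, with effective domain $\mathrm{dom}(p) = \mathrm{dom}(g) - A(\mathrm{dom}(f))$. A direct change of variable $w = Ax+u$ decouples the two suprema and gives the clean identity $p^*(y) = f^*(-A^*y) + g^*(y)$. Hence the right-hand side of the theorem is exactly $\sup_y \{-p^*(y)\} = p^{**}(0)$, while the left-hand side is $p(0)$, so the whole theorem reduces to showing $p(0) = p^{**}(0)$ together with attainment of the supremum.

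This reduction is where the constraint qualification enters, and it is the main obstacle. The hypothesis $0 \in \mathrm{core}(\mathrm{dom}(g) - A(\mathrm{dom}(f)))$ places $0$ in the algebraic interior of $\mathrm{dom}(p)$; since $X, Y$ are Euclidean and $\mathrm{dom}(p)$ is convex, core and topological interior coincide in finite dimension, so $0 \in \mathrm{int}(\mathrm{dom}(p))$. A classical result of finite-dimensional convex analysis then ensures $p$ is continuous at $0$, which simultaneously gives $p(0) = p^{**}(0)$ (Fenchel--Moreau applied at a point of continuity of a proper convex function) and nonemptiness of $\partial p(0)$. Any $y^{*} \in \partial p(0)$ satisfies the Fenchel-Young equality $p(0) + p^{*}(y^{*}) = 0$, so the supremum $\sup_y \{-p^*(y)\}$ is attained at $y^{*}$, finishing the proof. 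All subsequent bookkeeping is routine; the conceptual heart is purely this translation from an algebraic qualification to analytic regularity of $p$ at the origin.
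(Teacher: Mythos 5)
The paper does not prove this statement at all: it is imported verbatim from the cited reference \citep[Thm 3.3.5]{borwein2010convex} and used as a black box in the proofs of Thms.~\ref{thm: reward robust -- reg} and \ref{thm: transition robust -- reg}. Your argument is the standard perturbation-function proof, which is essentially the one in that reference: weak duality via Fenchel--Young, the identity $p^*(y)=f^*(-A^*y)+g^*(y)$ for the value function $p(u)=\inf_x\{f(x)+g(Ax+u)\}$, and the constraint qualification converted into $0\in\mathrm{int}(\mathrm{dom}\,p)$ (core $=$ interior for convex sets in finite dimension) so that $\partial p(0)\neq\emptyset$ and $p(0)=p^{**}(0)$. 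Two small points of bookkeeping deserve a sentence each. First, before invoking ``continuity of a proper convex function at an interior point'' you must dispose of the case $p(0)=-\infty$: a convex function that takes the value $-\infty$ anywhere is identically $-\infty$ on the relative interior of its domain, so either $p(0)=-\infty$ (and then weak duality forces both sides of \eqref{eq: fr duality} to be $-\infty$, so there is nothing to prove) or $p$ is proper and your continuity/subdifferentiability argument applies. Second, your proof establishes $\inf_x\{f(x)+g(Ax)\}=\max_y\{-f^*(-A^*y)-g^*(y)\}$ with attainment on the dual side only; attainment of the primal minimum is not a consequence of these hypotheses, and the ``$\min$'' on the left-hand side of \eqref{eq: fr duality} as printed is really an infimum (this is how the cited source states it, and it is all the paper actually uses).
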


\section{Reward-Robust MDPs}
\subsection{Proof of Proposition~\ref{prop: iyengar}}

\begin{proposition*}
\label{apx: iyengar}
For any policy $\pi\in\Delta_{\A}^{\St}$, the robust value function $v^{ \pi, \Uc}$ is the optimal solution of the robust optimization problem:
\begin{align}
\label{eq: iyengar ro primal apx}
    \max_{v\in\R^{\St}} \innorm{v, \mu_0}  \text{ s. t. } v\leq T_{(P,r)}^{\pi}v \text{ for all }   (P,r)\in\Uc. \tag{P${}_{\Uc}$}
\end{align}
\end{proposition*}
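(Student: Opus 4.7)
The plan is to reduce the robust optimization problem to a single constraint involving the robust Bellman operator, then exploit its contraction and monotonicity to characterize the feasible region.

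First, I would observe that the family of constraints $v \leq T^{\pi}_{(P,r)}v$ for all $(P,r) \in \Uc$ is equivalent, componentwise, to the single constraint
$$v \leq \min_{(P,r)\in\Uc} T^{\pi}_{(P,r)}v = T^{\pi,\Uc}v,$$
using the definition of the robust Bellman evaluation operator from Sec.~\ref{sec: prelim robust mdps}. So \eqref{eq: iyengar ro primal apx} rewrites as $\max_{v}\innorm{v,\mu_0}$ subject to $v \leq T^{\pi,\Uc}v$.

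Next, I would show that any feasible $v$ satisfies $v \leq v^{\pi,\Uc}$. Because $T^{\pi,\Uc}$ is monotone (the pointwise minimum over $(P,r)\in\Uc$ of monotone operators $T^{\pi}_{(P,r)}$ is itself monotone), iterating the inequality $v \leq T^{\pi,\Uc}v$ yields $v \leq (T^{\pi,\Uc})^n v$ for all $n\ge 1$. Since $T^{\pi,\Uc}$ is a $\gamma$-contraction in the sup-norm with unique fixed point $v^{\pi,\Uc}$, the right-hand side converges to $v^{\pi,\Uc}$, giving $v \leq v^{\pi,\Uc}$. Taking inner product with $\mu_0 > 0$ then yields $\innorm{v,\mu_0}\leq \innorm{v^{\pi,\Uc},\mu_0}$.

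Finally, I would verify that $v^{\pi,\Uc}$ is itself feasible: it satisfies $v^{\pi,\Uc} = T^{\pi,\Uc}v^{\pi,\Uc} \leq T^{\pi}_{(P,r)}v^{\pi,\Uc}$ for every $(P,r)\in\Uc$ by definition of the minimum. Combined with the upper bound above, this shows that $v^{\pi,\Uc}$ attains the maximum. I do not expect any serious obstacle here; the only subtlety is to justify passing from the infinite family of constraints to the single robust-Bellman constraint (which is immediate pointwise provided the infimum is attained, guaranteed under the standard compactness/rectangularity assumptions implicitly used throughout the paper) and to invoke monotonicity of $T^{\pi,\Uc}$, which inherits from that of each $T^{\pi}_{(P,r)}$.
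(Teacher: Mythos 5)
Your proposal is correct, and it reaches the key inequality $v\leq v^{\pi,\Uc}$ by a cleaner route than the paper. Both proofs share the same skeleton — check that $v^{\pi,\Uc}$ is feasible via its fixed-point property, then show every feasible $v$ is dominated by $v^{\pi,\Uc}$ and use $\mu_0>0$ to conclude — but the domination step differs. The paper works with the difference $v - v^{\pi,\Uc}$, introduces an $\epsilon$-approximate minimizer $(P_\epsilon,r_\epsilon)\in\Uc$ to handle the possibility that the infimum defining $T^{\pi,\Uc}$ is not attained, and iterates the resulting inequality $v-v^{\pi,\Uc}\leq T^{\pi,\Uc}(v-v^{\pi,\Uc})+\epsilon$ to obtain $v-v^{\pi,\Uc}\leq \epsilon/(1-\gamma)$ before letting $\epsilon\to 0$. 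You instead iterate the feasibility condition $v\leq T^{\pi,\Uc}v$ directly via monotonicity to get $v\leq (T^{\pi,\Uc})^n v$ and invoke Banach convergence of $(T^{\pi,\Uc})^n v$ to $v^{\pi,\Uc}$; this avoids the $\epsilon$-device entirely and is arguably the more standard LP-style argument. Two small remarks: (i) the attainment caveat you flag for collapsing the constraint family into $v\leq\inf_{(P,r)\in\Uc}T^{\pi}_{(P,r)}v$ is unnecessary — a pointwise lower bound on a family is automatically a lower bound on its infimum, attained or not (attainment only matters for writing the operator with a $\min$, which the paper's preliminaries already assume); (ii) the statement asserts that $v^{\pi,\Uc}$ is \emph{the} optimal solution, so you should add the one-line uniqueness argument the paper gives: any optimizer $v^*$ satisfies $v^*\leq v^{\pi,\Uc}$ and $\innorm{v^*,\mu_0}=\innorm{v^{\pi,\Uc},\mu_0}$, which together with $\mu_0>0$ forces $v^*=v^{\pi,\Uc}$.
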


\begin{proof}
Let $v^*$ an optimal point of \eqref{eq: iyengar ro primal}. By definition of the robust value function, $v^{\pi, \Uc} = T^{\pi, \Uc}v^{\pi, \Uc} =  \min_{(P,r)\in\Uc} T_{(P,r)}^{\pi}v^{\pi, \Uc}$. In particular, $v^{\pi, \Uc}\leq T_{(P,r)}^{\pi}v^{\pi, \Uc}$ for all $(P,r)\in\Uc$, so the robust value is feasible and by optimality of $v^*$, we get $\innorm{v^*,\mu_0} \geq \innorm{v^{\pi, \Uc}, \mu_0}.$ Now, we aim to show that any feasible $v\in\R^{\St}$ satisfies $v\leq v^{\pi, \Uc}.$ 
Let an arbitrary $\epsilon > 0$. Then, there exists $(P_{\epsilon},r_{\epsilon})\in\Uc$ such that
\begin{align}
\label{eq: epsilon inf}
    T^{\pi, \Uc}v^{\pi, \Uc} +\epsilon  > T_{(P_{\epsilon},r_{\epsilon})}^{\pi}v^{\pi, \Uc}.
\end{align}
This yields:
\begin{align*}
    v - v^{\pi, \Uc} &= v - T^{\pi, \Uc}v^{\pi, \Uc} &[v^{\pi, \Uc} = T^{\pi, \Uc}v^{\pi, \Uc}]\\
    &<  v + \epsilon - T_{(P_{\epsilon},r_{\epsilon})}^{\pi}v^{\pi, \Uc} &[\text{By Eq.~\eqref{eq: epsilon inf}}]\\
    &\leq T^{\pi, \Uc}v+ \epsilon - T_{(P_{\epsilon},r_{\epsilon})}^{\pi}v^{\pi, \Uc} &[v \text{ is feasible for }\eqref{eq: iyengar ro primal}]\\
    &\leq T^{\pi, \Uc}v+ \epsilon - T^{\pi, \Uc}v^{\pi, \Uc} &[T^{\pi, \Uc}u = \min_{(P,r)\in\Uc} T_{(P,r)}^{\pi}u\leq T_{(P_{\epsilon},r_{\epsilon})}^{\pi}u\quad \forall u\in\R^{\St}]\\
    &= T^{\pi, \Uc}(v-v^{\pi, \Uc} ) + \epsilon.
\end{align*}
Thus, $v - v^{\pi, \Uc}\leq T^{\pi, \Uc}(v-v^{\pi, \Uc} ) + \epsilon$, which we iteratively apply as follows:
\begin{align*}
    v - v^{\pi, \Uc} &\leq T^{\pi, \Uc}(v-v^{\pi, \Uc} ) + \epsilon\\
    &\leq T^{\pi, \Uc}(T^{\pi, \Uc}(v-v^{\pi, \Uc} ) + \epsilon) + \epsilon &[\text{By \cite{nilim2005robust}, }u\leq w\implies T^{\pi, \Uc}u\leq T^{\pi, \Uc}w]\\
    &=(T^{\pi, \Uc})^2(v-v^{\pi, \Uc} ) + \gamma\epsilon + \epsilon\\
    &\leq (T^{\pi, \Uc})^2(T^{\pi, \Uc}(v-v^{\pi, \Uc} ) + \epsilon)  + \gamma\epsilon + \epsilon\\
    &\quad\vdots\\
    &\leq (T^{\pi, \Uc})^{n+1}(v-v^{\pi, \Uc} )  + \sum_{k = 0}^{n}\gamma^k\epsilon \\
    &= (T^{\pi, \Uc})^{n+1}v - v^{\pi, \Uc}+ \sum_{k = 0}^{n}\gamma^k\epsilon. &[v^{\pi, \Uc} = T^{\pi, \Uc}v^{\pi, \Uc}]
\end{align*}
Setting $n\to\infty$ yields $v - v^{\pi, \Uc} \leq \frac{\epsilon}{1-\gamma}$. Since both $\epsilon>0$ and $v$ were taken arbitrarily, $v^*\leq v^{\pi, \Uc}$, while we have already shown that $\innorm{v^*,\mu_0} \geq \innorm{v^{\pi, \Uc}, \mu_0}.$
By positivity of the probability distribution $\mu_0$, it results that $\innorm{v^*,\mu_0} = \innorm{v^{\pi, \Uc}, \mu_0}$, and since $\mu_0 >0$, we obtain $v^{\pi, \Uc} = v^*$.  
\end{proof}

\subsection{Proof of Theorem \ref{thm: reward robust -- reg}}
\label{apx: reward robust}

\begin{theorem*}[Reward-robust MDP]
Assume that $\Uc = \{P_0\} \times  (r_0+ \Rc).$ Then, for any policy $\pi\in\Delta_{\A}^{\St}$, the robust value function $v^{ \pi, \Uc}$ is the optimal solution of the convex optimization problem:
\begin{align*}
    \max_{v\in\R^{\St}} \innorm{v, \mu_0}  \text{ s. t. } v(s)\leq T_{(P_0,r_0)}^{\pi}v(s) - \sigma_{\Rc_s}(-\pi_s) \text{ for all } s\in\St. 
\end{align*}
\end{theorem*}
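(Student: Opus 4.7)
My proof strategy builds directly on Proposition \ref{prop: iyengar}, which already identifies $v^{\pi, \Uc}$ as the unique optimal solution of the robust LP \eqref{eq: iyengar ro primal} with the infinite family of feasibility constraints $v \leq T^{\pi}_{(P,r)}v$ for all $(P,r)\in\Uc$. The plan is to specialize $\Uc = \{P_0\}\times(r_0+\Rc)$ and collapse these infinitely many constraints into a single support-function constraint per state, thereby turning the robust LP into the stated convex program.

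Assuming state-wise rectangularity $\Rc = \times_{s\in\St}\Rc_s$ (the standing assumption of the paper), I fix $s\in\St$ and use linearity of $T^{\pi}_{(P_0,r_0+\tilde r)}v(s) = T^{\pi}_{(P_0,r_0)}v(s) + \innorm{\pi_s,\tilde r_s}$ to rewrite each feasibility condition as $v(s) - T^{\pi}_{(P_0,r_0)}v(s) \leq \innorm{\pi_s,\tilde r_s}$ for every $\tilde r_s\in\Rc_s$. The quantifier ``for all $\tilde r_s\in\Rc_s$'' is equivalent to imposing the tightest bound, so the constraint at $s$ becomes $v(s) - T^{\pi}_{(P_0,r_0)}v(s) \leq \inf_{\tilde r_s\in\Rc_s}\innorm{\pi_s,\tilde r_s} = -\sup_{\tilde r_s\in\Rc_s}\innorm{-\pi_s,\tilde r_s} = -\sigma_{\Rc_s}(-\pi_s)$, by the definition of the support function recalled in the Notations paragraph. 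Rearranging yields precisely the inequality constraint appearing in the theorem.

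Consequently, \eqref{eq: iyengar ro primal} and the proposed convex program have identical feasible sets and identical linear objectives, so by Prop.~\ref{prop: iyengar} they share the same optimum $v^{\pi,\Uc}$. There is no genuine obstacle here: the argument rests solely on Prop.~\ref{prop: iyengar} and a one-line manipulation of the support function, which also explains why a richer tool (Fenchel-Rockafellar duality) is needed only for Thm.~\ref{thm: transition robust -- reg}, where the inner minimization over $P$ couples to $v$ rather than only to $\pi$. For a more unified treatment, one could alternatively obtain the same conclusion by applying Fenchel-Rockafellar duality to $\min_{\tilde r_s\in\Rc_s}\innorm{\pi_s,\tilde r_s}$ via the characteristic function $\delta_{\Rc_s}$, letting $\sigma_{\Rc_s}$ emerge as its Legendre-Fenchel transform; but in the reward-only setting the elementary route above is the shortest.
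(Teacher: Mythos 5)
Your proposal is correct, and its skeleton matches the paper's: both reduce the claim to Prop.~\ref{prop: iyengar}, exploit the rectangularity $\Rc=\times_{s\in\St}\Rc_s$ so that the constraint at state $s$ involves only $\tilde r_s\in\Rc_s$, and identify the worst-case reward perturbation with a support function. The difference is in how that last identification is carried out. You observe directly that imposing $v(s)-T^{\pi}_{(P_0,r_0)}v(s)\leq\innorm{\pi_s,\tilde r_s}$ for all $\tilde r_s\in\Rc_s$ is the same as imposing the tightest bound $\inf_{\tilde r_s\in\Rc_s}\innorm{\pi_s,\tilde r_s}=-\sigma_{\Rc_s}(-\pi_s)$, which is a one-line consequence of the definition of $\sigma_{\Rc_s}$. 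The paper instead writes the inner problem as $\min_{r_s}\{\innorm{r_s,\pi_s}+\delta_{\Rc_s}(r_s)\}$ and invokes Fenchel--Rockafellar duality (after checking the core condition), letting the conjugate $f^*(-y)$ collapse the dual variable to $y=-\pi_s$ and thereby produce $\sigma_{\Rc_s}(-\pi_s)$. Your route is shorter and more elementary; the paper's choice of the heavier tool is a deliberate methodological one (it is flagged in the related-work discussion as the reason the results hold for arbitrary norms and $s$-rectangular sets, and it sets up the template reused verbatim for the transition term in Thm.~\ref{thm: transition robust -- reg}, where the linear map coupling $P_s$ to $v\cdot\pi_s$ makes the conjugacy bookkeeping more convenient). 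Your closing remark correctly identifies this trade-off. One small point worth making explicit: the equivalence of feasible sets should be stated in the extended reals, since $\sigma_{\Rc_s}(-\pi_s)$ may be $+\infty$ for unbounded $\Rc_s$ (as in the entropy examples of Sec.~\ref{sec: related algos}); the inf/sup manipulation remains valid there, so this does not affect correctness.
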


\begin{proof}
For all $s\in\St$, define: $F(s):= \max_{(P, r)\in\Uc}\left\{v(s) - r^{\pi}(s) - \gamma P^{\pi}v(s) \right\}$.
It corresponds to the robust counterpart of \eqref{eq: iyengar ro primal} at $s\in\St$. Thus, the robust value function $v^{ \pi, \Uc}$ is the optimal solution of:
\begin{align}
\label{eq: robust counterpart reward}
    \max_{v\in\R^{\St}} \innorm{v, \mu_0}  \text{ s. t. } F(s)\leq 0 \text{ for all }   s\in\St. 
\end{align}
Based on the structure of the uncertainty set $\Uc = \{P_0\} \times  (r_0+ \Rc)$, we compute the robust counterpart:
\begin{align*}
    F(s) &= \max_{r'\in r_0 + \Rc}\left\{v(s) - r'^{\pi}(s) - \gamma P_0^{\pi}v(s) \right\}\\
    &=\max_{r': r' = r_0 + r, r\in \Rc}\left\{v(s) - r'^{\pi}(s) - \gamma P_0^{\pi}v(s) \right\}\\
    &=\max_{ r\in \Rc}\left\{v(s) - (r_0^{\pi}(s)  + r^{\pi}(s))- \gamma P_0^{\pi}v(s) \right\} &[(r_0 + r)^{\pi} = r_0^{\pi} + r^{\pi} \quad\forall \pi\in\Delta_{\A}^{\St}]\\
    &= \max_{ r\in \Rc}\left\{v(s) - r^{\pi}(s)-r_0^{\pi}(s) -  \gamma P_0^{\pi}v(s) \right\}\\
    &= \max_{ r\in \Rc}\left\{v(s) - r^{\pi}(s)-T_{(P_0,r_0)}^{\pi}v(s) \right\} &[T_{(P_0,r_0)}^{\pi}v(s)  = r_0^{\pi}(s) +  \gamma P_0^{\pi}v(s)]\\
    &= \max_{r\in \Rc}\{- r^{\pi}(s)\} + v(s) - T_{(P_0,r_0)}^{\pi}v(s)\\
    &= \max_{r\in \R^{\X}}\{- r^{\pi}(s) - \delta_{\Rc}(r')\}+ v(s) - T_{(P_0,r_0)}^{\pi}v(s)\\
    &= -\min_{r\in \R^{\X}}\{ r^{\pi}(s) +\delta_{\Rc}(r)\}+ v(s) - T_{(P_0,r_0)}^{\pi}v(s)\\
    &= -\min_{r\in \R^{\X}}\{ \innorm{r_s, \pi_s} +\delta_{\Rc}(r)\}+ v(s) - T_{(P_0,r_0)}^{\pi}v(s). &[r^{\pi}(s) = \innorm{r_s, \pi_s}]
\end{align*}
By the rectangularity assumption, $\Rc = \times_{s\in\St}\Rc_s$ and for all $r:= (r_s)_{s\in\St}\in\R^{\X}$, we have
$\delta_{\Rc}(r) = \sum_{s'\in\St}\delta_{\Rc_{s'}}(r_{s'})$. As such, 
\begin{align*}
    F(s)
    &= -\min_{r\in \R^{\X}}\{\innorm{r_s, \pi_s} +\sum_{s'\in\St}\delta_{\Rc_{s'}}(r_{s'})\}+ v(s) - T_{(P_0,r_0)}^{\pi}v(s)\\
    &= -\min_{r\in \R^{\X}}\{\innorm{r_s, \pi_s} +\delta_{\Rc_{s}}(r_{s})\}+ v(s) - T_{(P_0,r_0)}^{\pi}v(s),
\end{align*}
where the last equality holds since the objective function is minimal if and only if $r_s\in\Rc_s$. 

We now aim to apply Fenchel-Rockafellar duality to the minimization problem. Let the function
$f :  \R^{\A}  \to  \R$ defined as $r_s  \mapsto \innorm{r_s, \pi_s}$, and consider 
the support function $\delta_{\Rc_s}: \R^{\A} \to \overline{\R}$ together with
the identity mapping $\mathbf{Id}_{\A}: \R^{\A} \to \R^{\A}$. 
Clearly, $\mathrm{dom}(f) = \R^{\A}$, $\mathrm{dom}(\delta_{\Rc_s}) = \Rc_s$, and $\mathrm{dom}(\delta_{\Rc_s}) - \mathbf{Id}_{\A}(\mathrm{dom}(f)) = \Rc_s - \R^{\A} = \R^{\A}$. Therefore, $\mathrm{core}(\mathrm{dom}(\delta_{\Rc_s}) - A(\mathrm{dom}(f))) = \mathrm{core}(\R^{\A}) = \R^{\A}$ and $0\in \R^{\A}$. We can thus apply Fenchel-Rockafellar duality: noting that $\mathbf{Id}_{\A} = (\mathbf{Id}_{\A})^*$ and $(\delta_{\Rc_s})^*(y) = \sigma_{\Rc_s}(y)$, we get
\begin{align*}
    \min_{r_s\in \R^{\A}}\{ f(r_s) + \delta_{\Rc_s}(r_s)\}
    &= -\min_{y\in\R^{\A}} \{f^*(-y)+(\delta_{\Rc_s})^*(y)\} = -\min_{y\in\R^{\A}} \{f^*(-y)+\sigma_{\Rc_s}(y)\}.
\end{align*}
It remains to compute
\begin{align*}
    f^*(-y) 
    = \max_{r_s\in\R^{\A}} - \innorm{r_s, y} -\innorm{r_s, \pi_s} 
    = \max_{r_s\in\R^{\A}}\innorm{r_s, -y-\pi_s}
    =\begin{cases}
    0 \text{ if } -y-\pi_s = 0\\
    +\infty \text{ otherwise}
    \end{cases},
\end{align*}
and obtain
\begin{align*}
    F(s) &= \min_{y\in\R^{\A}} \{f^*(-y)+\sigma_{\Rc_s}(y)\} + v(s) - T_{(P_0,r_0)}^{\pi}v(s)
    = \sigma_{\Rc_s}(-\pi_s) + v(s) - T_{(P_0,r_0)}^{\pi}v(s).
\end{align*}
We can thus rewrite the optimization problem \eqref{eq: robust counterpart reward} as:
\begin{align*}
    \max_{v\in\R^{\St}} \innorm{v, \mu_0}  \text{ s. t. } \sigma_{\Rc_s}(-\pi_s) + v(s) - T_{(P_0,r_0)}^{\pi}v(s)\leq 0 \text{ for all }   s\in\St,
\end{align*}
which concludes the proof.
\end{proof}

\subsection{Proof of Corollary \ref{cor: reg for ball reward uncertainty}}
\label{apx: cor reward robust}

\begin{corollary*}
Let $\pi\in\Delta_{\A}^{\St}$ and $\Uc = \{P_0\} \times  (r_0+ \Rc)$. Further assume that for all $s\in\St$, the reward uncertainty set at $s$ is $\Rc_s:= \{r_s\in\R^{\A}: \norm{r_s}\leq \alpha_s^r\}$. Then,  the robust value function $v^{ \pi, \Uc}$ is the optimal solution of the convex optimization problem:
\begin{align*}
    \max_{v\in\R^{\St}} \innorm{v, \mu_0}  \text{ s. t. } v(s)\leq T_{(P_0,r_0)}^{\pi}v(s) - \alpha_s^r\norm{\pi_s} \text{ for all } s\in\St. 
\end{align*}
\end{corollary*}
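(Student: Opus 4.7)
\textbf{Proof plan for Corollary \ref{cor: reg for ball reward uncertainty}.} The plan is to invoke Theorem \ref{thm: reward robust -- reg} directly and then reduce the statement to a routine computation of a support function on an $\ell_2$-ball. Since Theorem \ref{thm: reward robust -- reg} already establishes that under the reward-only uncertainty $\Uc = \{P_0\} \times (r_0 + \Rc)$, the robust value $v^{\pi,\Uc}$ is the optimal solution of
\begin{align*}
    \max_{v\in\R^{\St}} \innorm{v,\mu_0} \text{ s.t. } v(s) \leq T_{(P_0,r_0)}^{\pi}v(s) - \sigma_{\Rc_s}(-\pi_s) \text{ for all } s\in\St,
\end{align*}
the only thing left to do is to verify that $\sigma_{\Rc_s}(-\pi_s) = \alpha_s^r \norm{\pi_s}$ when $\Rc_s$ is the centered $\ell_2$-ball of radius $\alpha_s^r$.

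The key step is the closed-form computation of the support function of a norm ball. By definition,
\begin{align*}
    \sigma_{\Rc_s}(-\pi_s) = \max_{r_s \in \R^{\A}: \norm{r_s} \leq \alpha_s^r} \innorm{r_s, -\pi_s}.
\end{align*}
By the definition of the dual norm recalled in the Notations paragraph, $\max_{\norm{r_s}\leq 1}\innorm{r_s,-\pi_s} = \norm{-\pi_s}_* = \norm{\pi_s}_*$, and since the $\ell_2$-norm is self-dual, $\norm{\pi_s}_* = \norm{\pi_s}$. Rescaling by $\alpha_s^r$ (or, equivalently, noting that the maximizer is $r_s = -\alpha_s^r \pi_s / \norm{\pi_s}$ when $\pi_s \neq 0$, with value $\alpha_s^r \norm{\pi_s}$, and both sides are $0$ when $\pi_s = 0$) yields
\begin{align*}
    \sigma_{\Rc_s}(-\pi_s) = \alpha_s^r \norm{\pi_s}.
\end{align*}

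Substituting this closed form into the constraint provided by Theorem \ref{thm: reward robust -- reg} immediately gives the stated convex optimization problem. There is essentially no obstacle here: the result is a direct specialization of the general reward-robust theorem to ball-constrained uncertainty, and the only nontrivial ingredient (self-duality of $\ell_2$) is standard. The same template would extend verbatim to an arbitrary norm, simply replacing $\norm{\pi_s}$ by $\norm{\pi_s}_*$ in the regularizer, as noted after Corollary \ref{cor: transition robust mdp -- reg}.
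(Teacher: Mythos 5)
Your proposal is correct and follows essentially the same route as the paper's own proof: apply Theorem~\ref{thm: reward robust -- reg} and compute $\sigma_{\Rc_s}(-\pi_s) = \alpha_s^r\norm{\pi_s}$ via the dual-norm characterization and self-duality of $\ell_2$. Your added remark about extending to arbitrary norms matches the paper's own aside in the appendix.
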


\begin{proof}
We evaluate the support function:
\begin{align*}
   \sigma_{\Rc_s}(-\pi_s) &=  \max_{r_s\in\R^{\A}: \norm{r_s}\leq \alpha_s^r}\innorm{ r_s,-\pi_s} \overset{(1)}{=}
   \alpha_s^r \norm{-\pi_s} = \alpha_s^r \norm{\pi_s},
\end{align*}
where equality $(1)$ holds by definition of the dual norm. 
Applying Thm.~\ref{thm: reward robust -- reg}, the robust value function $v^{ \pi, \Uc}$ is the optimal solution of:
$
    \max_{v\in\R^{\St}} \innorm{v, \mu_0}  \text{ s. t. }   \alpha_s^r \norm{\pi_s} + v(s) - T_{(P_0,r_0)}^{\pi}v(s)\leq 0 \text{ for all } s\in\St, 
$
which concludes the proof. 

\textit{\underline{Ball-constraint with arbitrary norm.} } In the case where reward ball-constraints are defined according to an arbitrary norm $\norm{\cdot}_{a}$ with dual norm $\norm{\cdot}_{a^*}$, the support function becomes:
\begin{align*}
    \sigma_{\Rc_s}(-\pi_s) &=  \max_{r_s\in\R^{\A}: \norm{r_s}_a\leq \alpha_s^r}\innorm{ r_s,-\pi_s}=
   \alpha_s^r \norm{-\pi_s}_{a^*} = \alpha_s^r \norm{\pi_s}_{a^*}.
\end{align*}
\end{proof}

\subsection{Related Algorithms: Uncertainty sets from regularizers}
\label{apx: uncertainty sets for regularizers} 

\begin{table}[!h]
    \centering
    \caption{Summary table of existing policy regularizers and generalization to our \rr function. }
    \def\arraystretch{1.5} \small
\begin{tabular}{|p{2.cm}|p{2.8cm}|p{2.8cm}|p{3.4cm}|p{2.5cm}|}
\hline
& \textbf{Negative  Shannon} & \textbf{KL divergence} & \textbf{Negative Tsallis} & \textbf{\rr function}\\\hline
\textbf{Regularizer $\Omega$}& $$\sum_{s\in\St}\pi_s(a)\ln(\pi_s(a))$$  & $$\sum_{s\in\St}\pi_s(a)\ln\left(\frac{\pi_s(a)}{d(a)}\right)$$  & $$\frac{1}{2}(\norm{\pi_s}^2-1)$$ & $$\norm{\pi_s} (\alpha_s^r + \alpha_s^P \gamma \norm{v})$$\\\hline
 \textbf{Conjugate} $\Omega^*$ & $$\ln\left(\sum_{a\in\A}e^{q_s(a)}\right)$$ &  $$\ln\left(\sum_{a\in\A}d(a)e^{q_s(a)}\right)$$  & $$ \frac{1}{2} + \frac{1}{2}\sum_{a\in\mathfrak{A}} (q_s(a)^2 - \tau(q_s)^2)$$&  Not in closed-form\\ \hline
  \textbf{Gradient} $\nabla\Omega^*$ & $$\pi_s(a) = \frac{e^{q_s(a)}}{\sum_{b\in\A}e^{q_s(b)}}$$ & $$\pi_s(a)= \frac{e^{q_s(a)}}{\sum_{b\in\A}e^{q_s(b)}}$$ & $$\pi_s(a) = (q_s(a) - \tau(q_s))_+$$& Not in closed-form\\ \hline
  \textbf{Reward Uncertainty}& $(s,a)$-rectangular& $(s,a)$-rectangular& $(s,a)$-rectangular& $s$-rectangular\\ \cline{2-5}
     &  $\Rc_{s,a}^{\textsc{NS}}(\pi)= $ $\left[\ln\left(\frac{1}{\pi_s(a)}\right), +\infty\right)$
 &  $$\ln\left(d(a)\right) + \Rc_{s,a}^{\textsc{NS}}(\pi)$$ & $$ \left[\frac{1-\pi_s(a)}{2}, +\infty\right)$$  & $$\mathbf{B}_{\norm{\cdot}}({r_0}_s, \alpha_s^{r})$$\\ \hline
   \textbf{Transition Uncertainty}&$(s,a)$-rectangular& $(s,a)$-rectangular& $(s,a)$-rectangular& $s$-rectangular\\ \cline{2-5}
   & $$\{P_0(\cdot|s,a)\}$$ & $$\{P_0(\cdot|s,a)\}$$ & $$\{P_0(\cdot|s,a)\}$$  &  $$\mathbf{B}_{\norm{\cdot}}({P_0}_s, \alpha_s^{P})$$\\ \hline
\end{tabular}
    \label{tab: regularizers}
\end{table}

\underline{\textit{Negative Shannon entropy.}} Each $(s,a)$-reward uncertainty set is $\Rc_{s,a}^{\textsc{NS}}(\pi):= \left[ \ln\left(\nicefrac{1}{\pi_s(a)}\right), +\infty\right)$.
We compute the associated support function:
\begin{align}
\label{eq: shannon}
    \sigma_{\Rc_s^{\textsc{NS}}(\pi)}(-\pi_s) \nonumber
    &= \max_{r_s\in\Rc_s^{\textsc{NS}}(\pi)}\innorm{r_s, -\pi_s} \nonumber\\
    &= \max_{\substack{r(s,a'):  r(s,a')\in\Rc_{s,a'}^{\textsc{NS}}(\pi), a'\in\A}}
    \sum_{a\in\A} -r(s,a)\pi_s(a) \nonumber\\
    &= \max_{\substack{r(s,a'):  r(s,a')\geq  \ln\left(\nicefrac{1}{\pi_s(a)}\right), a'\in\A}}
    -\sum_{a\in\A} \pi_s(a)r(s,a) \nonumber\\
    &=  \sum_{a\in\A}\pi_s(a)\ln(\pi_s(a)),
\end{align}
where the last equality results from the fact that $\pi_s\geq 0, $ and $-r(s,a)\pi_s(a)$ is maximal when $r(s,a)$ is minimal. 
We thus obtain the negative Shannon entropy.

\underline{\textit{KL divergence.}} Similarly, given $d\in\Delta_{\A}$, let $\Rc_{s,a}^{\textsc{KL}}(\pi):= \ln(d(a)) + \Rc_{s,a}^{\textsc{NS}}(\pi)\quad\forall (s,a)\in\X$. Then
\begin{align*}
    \sigma_{\Rc_s^{\textsc{KL}}(\pi)}(-\pi_s) 
    &=\max_{\substack{r(s,a'):  r(s,a')\in\Rc_{s,a'}^{\textsc{KL}}(\pi), a'\in\A}} \sum_{a\in\A} -r(s,a)\pi_s(a)\\
    &=\max_{\substack{r(s,a')+ \ln(d(a)):\\  r(s,a')\in\Rc_{s,a'}^{\textsc{NS}}(\pi), a'\in\A}} \sum_{a\in\A} -r(s,a)\pi_s(a)\\
    &= \max_{\substack{r(s,a'): \\
    r(s,a')\in\Rc_{s,a'}^{\textsc{NS}}(\pi), a'\in\A}} \sum_{a\in\A} - (r(s,a) + \ln(d(a))\pi_s(a)\\
    &= \max_{\substack{r(s,a'): \\
    r(s,a')\in\Rc_{s,a'}^{\textsc{NS}}(\pi), a'\in\A}} \{-\sum_{a\in\A}\pi_s(a) r(s,a) \}-\sum_{a\in\A}\pi_s(a) \ln(d(a))\\
    &= \sum_{a\in\A}\pi_s(a)\ln(\pi_s(a)) - \sum_{a\in\A}\pi_s(a)\ln(d(a)),
\end{align*}
where the last equality uses Eq.~\eqref{eq: shannon}. 
We thus recover the  KL divergence $\Omega(\pi_s)= \sum_{a\in\A}\pi_s(a)\ln\left(\nicefrac{\pi_s(a)}{d(a)}\right)$.

\underline{\textit{Negative Tsallis entropy.}} Given $\Rc_{s,a}^{\textsc{T}}(\pi):= \left[\frac{1-\pi_s(a)}{2}, +\infty\right)\quad\forall (s,a)\in\X$, we compute:
\begin{align}
\label{eq: tsallis}
    \sigma_{\Rc_s^{\textsc{T}}(\pi)}(-\pi_s)&=  
    \max_{\substack{r(s,a'):  r(s,a')\in\Rc_{s,a'}^{\textsc{T}}(\pi), a'\in\A}} \sum_{a\in\A} -r(s,a)\pi_s(a) \nonumber\\
    &= \max_{\substack{r(s,a'):  r(s,a')\in \left[\frac{1-\pi_s(a')}{2}, +\infty\right), a'\in\A}} \sum_{a\in\A} -r(s,a)\pi_s(a)\nonumber\\
    &=  \sum_{a\in\A} -\frac{1-\pi_s(a)}{2}\pi_s(a)\\
    &= -\frac{1}{2}\sum_{a\in\A}\pi_s(a) + \frac{1}{2}\sum_{a\in\A}\pi_s(a)^2
    = -\frac{1}{2}+ \frac{1}{2}\norm{\pi_s}^2,\nonumber
\end{align}
where Eq.~\eqref{eq: tsallis} also comes from the fact that $\pi_s\geq 0, $ and $-r(s,a)\pi_s(a)$ is maximal when $r(s,a)$ is minimal. 
We thus obtain the negative Tsallis entropy $\Omega(\pi_s) = \frac{1}{2}(\norm{\pi_s}^2-1)$.

The reward uncertainty sets associated to both KL and Shannon entropy are similar, as the former amounts to translating the latter 
by a negative constant (translation to the left). As such, both yield reward values that can be either positive or negative. This is not the case of the negative Tsallis, as its minimal reward is $0$, attained for a deterministic action policy, \ie when $\pi_s(a) = 1$.

Table \ref{tab: regularizers} summarizes the properties of each regularizer. 
For the Tsallis entropy, we denote by $\tau: \R^{\A} \to \R$ the function $q_s\mapsto \frac{\sum_{a\in\mathfrak{A}(q_s)}q_s(a)-1}{\abs{\mathfrak{A}(q_s)}}$, where $\mathfrak{A}(q_s)\subseteq \A$ is a subset of actions:
$\mathfrak{A}(q_s) = \{a\in\A: 1 + i q_s(a_{(i)}) > \sum_{j=0}^{i}q_s(a_{(j)}), i \in \{1,\cdots, \abs{\A}\}\}$, and $a_{(i)} \text{ is the action with the } i\text{-th maximal value }$\citep{lee2018sparse}.

\subsection{Proof of Proposition~\ref{prop: policy gradient reward robust mdps}}
\label{apx: policy gradient}

\begin{proposition*}
Assume that $\Uc = \{P_0\} \times  (r_0+ \Rc)$ with $\Rc_s= \{r_s\in\R^{\A}: \norm{r_s}\leq \alpha_s^r\}$. Then, the gradient of the reward-robust objective $J_{\Uc}(\pi):= \innorm{v^{\pi, \Uc}, \mu_0}$ is given by
\begin{align*}
        \nabla J_{\Uc}(\pi) = \mathbb{E}_{(s,a)\sim \mu_{\pi}}\left[ \nabla\ln \pi_s(a)\left(q^{\pi, \Uc}(s,a) - \alpha_s^r\frac{\pi_s(a)}{\norm{\pi_s}}\right)\right],
\end{align*}
where $\mu_{\pi}$ is the occupancy measure under the nominal model $P_0$ and policy $\pi$. 
\end{proposition*}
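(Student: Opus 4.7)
The plan is to reduce the reward-robust policy gradient to the regularized policy gradient theorem of \citep{geist2019theory} by exploiting the equivalence established in Cor.~\ref{cor: reg for ball reward uncertainty}. Since $\Uc = \{P_0\}\times(r_0+\Rc)$ with $\ell_2$-ball reward rectangles of radii $\alpha_s^r$, Cor.~\ref{cor: reg for ball reward uncertainty} tells me that $v^{\pi,\Uc}$ coincides with the fixed point $v^{\pi,\Omega}$ of the regularized Bellman evaluation operator on the nominal model $(P_0,r_0)$ with policy regularizer $\Omega_s(\pi_s):=\alpha_s^r\norm{\pi_s}$. Therefore $J_\Uc(\pi)=\innorm{v^{\pi,\Uc},\mu_0}=\innorm{v^{\pi,\Omega},\mu_0}$, and it suffices to differentiate the regularized objective.

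Concretely, I would (i) invoke the equivalence $v^{\pi,\Uc}=v^{\pi,\Omega}$ and the associated fixed-point identity $v^{\pi,\Uc}(s)=\innorm{\pi_s,q^{\pi,\Uc}(s,\cdot)}-\alpha_s^r\norm{\pi_s}$ with $q^{\pi,\Uc}(s,a)=r_0(s,a)+\gamma\innorm{P_0(\cdot|s,a),v^{\pi,\Uc}}$; (ii) apply the regularized policy-gradient identity from \citep[Appx.~D.3]{geist2019theory}, which yields
\begin{align*}
\nabla J_\Uc(\pi)=\mathbb{E}_{s\sim d_\pi}\!\left[\sum_{a\in\A}\nabla\pi_s(a)\,q^{\pi,\Uc}(s,a)-\nabla\Omega_s(\pi_s)\right],
\end{align*}
where $d_\pi$ is the discounted state-visitation distribution induced by $(P_0,\pi)$, so that $\mu_\pi(s,a)=d_\pi(s)\pi_s(a)$; (iii) compute the gradient of the ball regularizer explicitly via $\nabla_{\pi_s(a)}\Omega_s(\pi_s)=\alpha_s^r\,\pi_s(a)/\norm{\pi_s}$, which is well defined because $\pi_s\in\Delta_\A$ forces $\norm{\pi_s}\geq 1/\sqrt{|\A|}>0$; (iv) use the log-derivative identity $\nabla\pi_s(a)=\pi_s(a)\nabla\ln\pi_s(a)$ to rewrite both terms as a single expectation under $\mu_\pi$:
\begin{align*}
\mathbb{E}_{s\sim d_\pi}[\nabla\Omega_s(\pi_s)]=\mathbb{E}_{(s,a)\sim\mu_\pi}\!\left[\nabla\ln\pi_s(a)\,\alpha_s^r\frac{\pi_s(a)}{\norm{\pi_s}}\right],
\end{align*}
which after subtraction from the $q^{\pi,\Uc}$ term produces exactly the claimed formula.

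The main technical care lies in justifying step (ii): the regularizer $\alpha_s^r\norm{\pi_s}$ is convex but \emph{not} strongly convex, so the regularized-MDP framework of \citep{geist2019theory} does not literally cover this case. However, for the purpose of differentiating $\pi\mapsto v^{\pi,\Omega}$, what is actually needed is smoothness of $\Omega_s$ on the relative interior of $\Delta_\A$, which follows from the lower bound $\norm{\pi_s}\geq 1/\sqrt{|\A|}$; the derivation of the regularized PG identity via the Bellman equation $v^{\pi,\Omega}(s)=\innorm{\pi_s,r_0(s,\cdot)}-\Omega_s(\pi_s)+\gamma\sum_{s'}P_0^\pi(s'|s)v^{\pi,\Omega}(s')$, unrolled through $d_\pi$, then goes through verbatim. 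The only other subtlety is identifying $q^{\pi,\Uc}$ with the nominal-plus-value $q$-function under the reward-robust equivalence, which again follows from Cor.~\ref{cor: reg for ball reward uncertainty}. These observations together close the proof.
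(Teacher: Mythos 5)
Your proposal is correct and follows essentially the same route as the paper: the paper proves a more general \rr version by differentiating the (regularized) Bellman fixed-point equation, unrolling through $(\mathbf{I}_{\St}-\gamma P_0^{\pi})^{-1}$ to get the occupancy-measure form, and then specializes to $\alpha_s^P=0$, where the regularizer gradient reduces to $\alpha_s^r\pi_s(a)/\norm{\pi_s}$ exactly as you compute. Your added remark about strong convexity versus mere differentiability of $\pi_s\mapsto\alpha_s^r\norm{\pi_s}$ on the simplex is a fair observation but does not change the argument.
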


We prove the following more general result. To establish Prop.~\ref{prop: policy gradient reward robust mdps}, we then set $\alpha_s^P = 0$ and apply Thm.~\ref{thm: transition robust -- reg} to replace $v^{\pi, \rop} = v_{\pi, \Uc}$ and $q^{\pi, \rop} = q^{\pi, \Uc}$.

\begin{theorem*}
Set $\Omega_{v, \rop}(\pi_s):= \norm{\pi_s} (\alpha_s^r + \alpha_s^P \gamma \norm{v})$. Then, the gradient of the \rr objective $J_\rop(\pi):= \innorm{v_{\pi, \rop}, \mu_0}$ is given by
\begin{align*}
    \nabla J_\rop(\pi) = \mathbb{E}_{s\sim d_{\mu_0,\pi}}\left[ \sum_{a\in\A} \pi_s(a)\nabla\ln \pi_s(a)q^{\pi, \rop}(s,a) - \nabla\Omega_{v, \rop}(\pi_s)\right],    
\end{align*}
where $d_{\mu_0,\pi}:=\mu_0^{\top}(\mathbf{I}_{\St} - \gamma P_0^{\pi})^{-1}$, with $\mu_0\in\R^{\St\times 1}$ the initial state distribution.
\end{theorem*}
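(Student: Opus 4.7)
The plan is to adapt the standard policy-gradient derivation to the $\rop$ Bellman evaluation equation, closely following the argument of \citep[Appx.~D.3]{geist2019theory} but with a state-dependent regularizer. Since $T^{\pi,\rop}$ is a contraction under Asm.~\ref{asm: bound alpha} (Prop.~\ref{prop: r2 bellman evaluation operator}), the implicit function theorem guarantees that $v^{\pi,\rop}$ depends smoothly on the policy parameters, so differentiating the fixed-point identity is legitimate.

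First, I would start from
\begin{align*}
v^{\pi,\rop}(s) = \sum_{a\in\A}\pi_s(a)\,q^{\pi,\rop}(s,a) - \Omega_{v^{\pi,\rop},\rop}(\pi_s),
\qquad
q^{\pi,\rop}(s,a) = r_0(s,a) + \gamma\innorm{P_0(\cdot|s,a),\,v^{\pi,\rop}},
\end{align*}
and differentiate both sides with respect to the policy parameters. Using $\nabla q^{\pi,\rop}(s,a) = \gamma\innorm{P_0(\cdot|s,a),\nabla v^{\pi,\rop}}$ and treating $\nabla\Omega_{v,\rop}(\pi_s)$ in the spirit of Geist's state-dependent-regularizer convention (the value $v$ indexing $\Omega_{v,\rop}$ is held fixed while $\nabla$ hits only $\pi_s$, with the resulting derivative then evaluated at $v=v^{\pi,\rop}$), this yields the pointwise recursion
\begin{align*}
\nabla v^{\pi,\rop}(s) = \sum_{a\in\A}\bigl[\nabla\pi_s(a)\bigr]q^{\pi,\rop}(s,a) + \gamma\sum_{s'\in\St} P_0^{\pi}(s'|s)\nabla v^{\pi,\rop}(s') - \nabla\Omega_{v,\rop}(\pi_s).
\end{align*}
Collecting terms in vector form gives an affine identity $\nabla v^{\pi,\rop} = g(\pi) + \gamma P_0^{\pi}\nabla v^{\pi,\rop}$, with $g(\pi)(s) := \sum_a[\nabla\pi_s(a)]q^{\pi,\rop}(s,a) - \nabla\Omega_{v,\rop}(\pi_s)$. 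Since $\gamma<1$ and $P_0^\pi$ is stochastic, $\mathbf{I}_{\St}-\gamma P_0^{\pi}$ is invertible, hence $\nabla v^{\pi,\rop} = (\mathbf{I}_{\St}-\gamma P_0^{\pi})^{-1}g(\pi)$. Dotting with $\mu_0$, recognizing $d_{\mu_0,\pi} = \mu_0^\top(\mathbf{I}_{\St}-\gamma P_0^{\pi})^{-1}$, and applying the log-derivative trick $\nabla\pi_s(a) = \pi_s(a)\nabla\ln\pi_s(a)$ delivers the stated identity.

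The main obstacle is precisely the $v$-dependence of $\Omega_{v,\rop}$: a naive application of the chain rule to $\Omega_{v^{\pi,\rop},\rop}(\pi_s)$ would produce an extra $\partial_v\Omega\cdot\nabla v^{\pi,\rop}$ contribution, which for $\Omega_{v,\rop}(\pi_s) = \norm{\pi_s}(\alpha_s^r+\alpha_s^P\gamma\norm{v})$ equals $\alpha_s^P\gamma\norm{\pi_s}\nabla\norm{v^{\pi,\rop}}$. This extra term couples the components of $\nabla v^{\pi,\rop}$ non-linearly through the norm and would break the clean affine recursion above. The statement sidesteps the issue by treating $\Omega_{v,\rop}$ as a state-dependent regularizer in Geist's sense, \ie by applying \citep[Appx.~D.3]{geist2019theory} to the frozen regularizer $\pi_s\mapsto\Omega_{v^{\pi,\rop},\rop}(\pi_s)$; this is precisely the conceptual obstruction flagged in the main text as the reason extending a genuine policy-gradient to general $\rop$ MDPs remains open (Sec.~\ref{sec: pg for robust mdps}).
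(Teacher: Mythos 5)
Your mechanical derivation is the same as the paper's: differentiate the fixed-point identity $v^{\pi,\rop}=T^{\pi,\rop}v^{\pi,\rop}$, recognize the resulting recursion as a non-regularized Bellman equation with modified reward, invert $\mathbf{I}_{\St}-\gamma P_0^{\pi}$, dot with $\mu_0$, and apply the log-derivative trick. The divergence is in how the term $\nabla\Omega_{v,\rop}(\pi_s)$ is interpreted, and this is exactly where the content of the theorem lives. You resolve the $v$-dependence by \emph{freezing} $v$ at $v^{\pi,\rop}$ inside $\Omega_{v,\rop}$ and differentiating only through $\pi_s$, attributing this convention to the statement. The paper does the opposite: it keeps $\nabla\Omega_{v,\rop}(\pi_s)$ as the \emph{total} derivative with respect to the policy parameters (including the chain-rule contribution through $v^{\pi,\rop}$), carries it symbolically through the Neumann-series inversion, and only \emph{after} the proof expands it by the chain rule to exhibit the interdependence $\nabla\Omega_{v,\rop}(\pi_s)\leftrightarrow\nabla v^{\pi,\rop}(s)$, which it explicitly leaves unresolved for future work.

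This matters because under your frozen-$v$ convention the step that produces the affine recursion is not a valid differentiation of the fixed-point equation: the term $\alpha_s^P\gamma\norm{\pi_s}\,\nabla\norm{v^{\pi,\rop}}$ is genuinely present and you drop it without justification, so for $\alpha_s^P>0$ your formula is not the true gradient of $J_\rop$. The paper's version remains a correct identity precisely because $\nabla\Omega_{v,\rop}(\pi_s)$ there already contains that term --- at the price of the identity being implicit rather than computable. You correctly identified the extra chain-rule term as ``the main obstacle,'' but you misattribute the paper's way of handling it; the paper does not sidestep it by freezing $v$, it defers it. The two readings coincide only when $\alpha_s^P=0$, i.e., the reward-robust case of Prop.~\ref{prop: policy gradient reward robust mdps}, which is the only case in which the theorem is actually instantiated.
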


\begin{proof}
By linearity of the gradient operator, $\nabla J_\rop(\pi) = \innorm{\nabla v^{\pi, \rop}, \mu_0}.$ We thus need to compute $\nabla v^{\pi, \rop}.$ Using the fixed point property of $v^{\pi, \rop}$ w.r.t. the \rr Bellman operator yields:
\begin{align*}
    &\nabla v^{\pi, \rop}(s)\\
    =& \nabla \left(r_0^{\pi}(s) + \gamma P_0^{\pi}v^{\pi, \rop}(s) - \Omega_{v, \rop}(\pi_s)\right)\\
    =& \nabla \left(\sum_{a\in\A}\pi_s(a)(r_0(s, a) + \gamma \innorm{P_0(\cdot| s,a), v^{\pi, \rop}}) - \Omega_{v, \rop}(\pi_s)\right)\\
    =& \sum_{a\in\A}\nabla \pi_s(a) \left(r_0(s, a) + \gamma \innorm{P_0(\cdot| s,a), v^{\pi, \rop}}\right)\\
    &\qquad + \gamma \sum_{a\in\A}\pi_s(a)\innorm{P_0(\cdot| s,a), \nabla v^{\pi, \rop}} - \nabla\Omega_{v, \rop}(\pi_s) &[\text{Linearity of gradient and product rule}]\\
    =& \sum_{a\in\A}\nabla \pi_s(a)q^{\pi, \rop}(s,a)+ \gamma \sum_{a\in\A}\pi_s(a)\innorm{P_0(\cdot| s,a), \nabla v^{\pi, \rop}} - \nabla\Omega_{v, \rop}(\pi_s) 
    &[q^{\pi, \rop}(s,a) = r_0(s, a) + \gamma \innorm{P_0(\cdot| s,a), v^{\pi, \rop}}]\\
    =& \sum_{a\in\A} \pi_s(a)( \nabla\ln \pi_s(a)q^{\pi, \rop}(s,a)+ \gamma \innorm{P_0(\cdot| s,a), \nabla v^{\pi, \rop}}) - \nabla\Omega_{v, \rop}(\pi_s)
    &[\nabla\pi_s = \pi_s\nabla\ln(\pi_s)]\\
    =& \sum_{a\in\A} \pi_s(a)( \nabla\ln \pi_s(a)q^{\pi, \rop}(s,a) - \nabla\Omega_{v, \rop}(\pi_s) + \gamma \innorm{P_0(\cdot| s,a), \nabla v^{\pi, \rop}}). 
\end{align*}
Thus, the components of $\nabla v^{\pi, \rop}$ are the non-regularized value functions corresponding to the modified reward $R(s,a):=  \nabla\ln \pi_s(a)q^{\pi, \rop}(s,a) - \nabla\Omega_{v, \rop}(\pi_s)$. By the fixed point property of the standard Bellman operator, it results that:
\begin{align*}
   \nabla v^{\pi, \rop}(s) &=  (\mathbf{I}_{\St} - \gamma P_0^{\pi})^{-1}
   \left(\sum_{a\in\A}\pi_{\cdot}(a)(\nabla\ln \pi_{\cdot}(a)q^{\pi, \rop}({\cdot},a) - \nabla\Omega_{v, \rop}(\pi_{\cdot}))\right)(s)
\end{align*}
and
\begin{align*}
    \nabla J_\rop(\pi) &= \sum_{s\in\St}\mu_0(s)\nabla v^{\pi, \rop}(s)\\
    &= \sum_{s\in\St}\mu_0(s)(\mathbf{I}_{\St} - \gamma P_0^{\pi})^{-1}
   \left(\sum_{a\in\A}\pi_{\cdot}(a)(\nabla\ln \pi_{\cdot}(a)q^{\pi, \rop}({\cdot},a) - \nabla\Omega_{v, \rop}(\pi_{\cdot}))\right)(s)\\
    &= \sum_{s\in\St} d_{\mu_0, \pi}(s)\left(\sum_{a\in\A} \pi_s(a)\nabla\ln \pi_s(a)q^{\pi, \rop}(s,a) - \nabla\Omega_{v, \rop}(\pi_s)\right),
\end{align*}
by definition of $d_{\mu_0, \pi}$.
\end{proof}
The subtraction by $\nabla\Omega_{v, \rop}(\pi_s)$ also appears in \citep{geist2019theory}. However, here, the gradient includes partial derivatives that depend on both the policy and the value itself. Let's try to compute the gradient of the double regularizer $\Omega_{v, \rop}(\pi_s) = \norm{\pi_s} (\alpha_s^r + \alpha_s^P \gamma \norm{v^{\pi, \rop}})$. By the chain-rule we have that:
\begin{align*}
   \nabla\Omega_{v, \rop}(\pi_s) &= \sum_{a\in\A}\frac{\partial \Omega_{v, \rop}}{\partial \pi_s(a)}\nabla\pi_s(a)
   + \sum_{s\in\St} \frac{\partial \Omega_{v, \rop}}{\partial v^{\pi, \rop}(s)}\nabla v^{\pi, \rop}(s)\\
   &=\sum_{a\in\A}(\alpha_s^r + \alpha_s^P \gamma \norm{v^{\pi, \rop}})\frac{\pi_s(a)}{\norm{\pi_s}}\nabla\pi_s(a)
   + \sum_{s\in\St} \alpha_s^P \gamma\norm{\pi_s} \frac{v^{\pi, \rop}(s)}{\norm{v^{\pi, \rop}}} \nabla v^{\pi, \rop}(s)\\
   &= \sum_{a\in\A}\pi_s(a) \left(\frac{\alpha_s^r + \alpha_s^P \gamma \norm{v^{\pi, \rop}}}{\norm{\pi_s}}\nabla\pi_s(a)
   + \sum_{s\in\St} \alpha_s^P \gamma\norm{\pi_s} \frac{q^{\pi, \rop}(s,a)}{\norm{v^{\pi, \rop}}} \nabla v^{\pi, \rop}(s)\right).
\end{align*}
We remark here an interdependence between $\nabla\Omega_{v, \rop}(\pi_s)$ and $\nabla v^{\pi, \rop}(s)$: computing the gradient $\nabla\Omega_{v, \rop}(\pi_s)$ requires to know $\nabla v^{\pi, \rop}(s)$ and vice versa. There may be a recursion that still enables to compute these gradients, which we leave for future work.

\section{General robust MDPs}
\subsection{Proof of Theorem \ref{thm: transition robust -- reg}}
\label{apx: transition robust}

\begin{theorem*}[General robust MDP]
Assume that $\Uc =   (P_0+ \Pc)\times(r_0+ \Rc)$. Then, for any policy $\pi\in\Delta_{\A}^{\St}$, the robust value function $v^{ \pi, \Uc}$ is the optimal solution of the convex optimization problem:
\begin{align*}
        \max_{v\in\R^{\St}} \innorm{v, \mu_0}  \text{ s. t. } v(s)\leq T_{(P_0,r_0)}^{\pi}v(s)  -\sigma_{\Rc_s}(-\pi_s) -\sigma_{\Pc_s}(-\gamma v\cdot\pi_s) \text{ for all } s\in\St,   
\end{align*}
where $[v\cdot\pi_s](s',a):=v(s')\pi_s(a)\quad \forall (s',a)\in\X$.
\end{theorem*}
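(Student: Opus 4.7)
The plan is to follow the same template as in Theorem~\ref{thm: reward robust -- reg}, starting from the robust LP characterization of $v^{\pi,\Uc}$ given by Proposition~\ref{prop: iyengar}, and then computing the robust counterpart of the constraint in closed form using the product and rectangular structure of $\Uc$. Concretely, the constraint $v \leq T^\pi_{(P,r)} v$ for every $(P,r)\in\Uc$ is equivalent, state by state, to $F(s)\leq 0$ where
\begin{equation*}
F(s):=\max_{(P,r)\in\Uc}\bigl\{v(s)-r^\pi(s)-\gamma P^\pi v(s)\bigr\}.
\end{equation*}
So it suffices to reduce $F(s)$ to $v(s)-T^\pi_{(P_0,r_0)}v(s)+\sigma_{\Rc_s}(-\pi_s)+\sigma_{\Pc_s}(-\gamma v\cdot\pi_s)$ and the theorem follows.

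First I would use the product form $\Uc=(P_0+\Pc)\times(r_0+\Rc)$. Because the objective inside $F(s)$ is separately linear in $r$ and $P$, the joint maximum factorizes into two independent maxima. Translating by the nominal $(P_0,r_0)$ produces the term $v(s)-T^\pi_{(P_0,r_0)}v(s)$ plus two ``centered'' maxima: $\max_{\tilde r\in\Rc}\{-\tilde r^\pi(s)\}$ and $\gamma\max_{\tilde P\in\Pc}\{-\tilde P^\pi v(s)\}$. Rectangularity $\Rc=\times_{s'}\Rc_{s'}$ and $\Pc=\times_{s'}\Pc_{s'}$ lets me restrict each of these maxima to the $s$-component only, exactly as in the proof of Theorem~\ref{thm: reward robust -- reg}, since the $(s',a)\mapsto$\,expression entering $F(s)$ depends on $\tilde r$ (resp.\ $\tilde P$) only through its $s$-row.

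Next I would rewrite each inner maximum as a support function. For the reward part, $\tilde r^\pi(s)=\innorm{\tilde r_s,\pi_s}$, so $\max_{\tilde r_s\in\Rc_s}\innorm{-\pi_s,\tilde r_s}=\sigma_{\Rc_s}(-\pi_s)$ by definition. For the transition part, the key identity is
\begin{equation*}
\tilde P^\pi v(s)=\sum_{a\in\A}\pi_s(a)\sum_{s'\in\St}\tilde P(s'|s,a)v(s')=\innorm{\tilde P(\cdot|s,\cdot),\,v\cdot\pi_s},
\end{equation*}
viewing $\tilde P(\cdot|s,\cdot)\in\R^{\St\times\A}$ and using the tensor $[v\cdot\pi_s](s',a)=v(s')\pi_s(a)$. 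Then $\gamma\max_{\tilde P_s\in\Pc_s}\innorm{-v\cdot\pi_s,\tilde P_s}=\sigma_{\Pc_s}(-\gamma v\cdot\pi_s)$ by positive homogeneity of the support function. Putting these two support functions back into $F(s)$ and rearranging $F(s)\leq 0$ gives the constraint claimed in the theorem, and invoking Proposition~\ref{prop: iyengar} yields the equivalence of optimization problems.

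The main obstacle is the transition term: unlike the reward case, the expression $\gamma \tilde P^\pi v(s)$ is bilinear in $(\tilde P,v)$, which is why the resulting regularizer is inevitably value-dependent. The correct packaging of the arguments so that the support function of $\Pc_s\subseteq\R^{\St\times\A}$ receives a well-defined $\R^{\St\times\A}$-valued argument is precisely the tensor $v\cdot\pi_s$, and verifying that this is exactly what the inner product produces is the only nontrivial computation. The same Fenchel--Rockafellar machinery used in Appendix~\ref{apx: reward robust} can be invoked separately to each of the two pieces if one prefers a symmetric presentation (with $f$ the linear term, $\delta_{\Pc_s}$ and $\delta_{\Rc_s}$ the indicators, and the identity map), but conceptually the proof is just (i) split the two maxima by product structure, (ii) reduce to one state by rectangularity, (iii) read off support functions.
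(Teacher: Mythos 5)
Your proposal is correct and follows essentially the same route as the paper's proof: start from Proposition~\ref{prop: iyengar}, form the robust counterpart $F(s)$, split the maximum over $(P,r)$ using the product structure of $\Uc$, localize to state $s$ by rectangularity, and identify the two remaining maxima as support functions, with the identity $\tilde P^{\pi}v(s)=\innorm{\tilde P(\cdot|s,\cdot),\,v\cdot\pi_s}$ doing the work for the transition term exactly as in Appendix~\ref{apx: transition robust}. The only difference is presentational: where you read the support functions directly off the definition $\sigma_{\mathfrak{Z}}(\y)=\max_{\av\in\mathfrak{Z}}\innorm{\av,\y}$ (using positive homogeneity to absorb $\gamma$), the paper rewrites each inner maximization as $-\min\{\text{linear}+\delta\}$ over the whole ambient space and invokes Fenchel--Rockafellar duality to land on the same support functions, so the two derivations coincide in substance.
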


\begin{proof}
The robust value function $v^{ \pi, \Uc}$ is the optimal solution of:
\begin{align}
\label{eq: robust counterpart transition reward}
    \max_{v\in\R^{\St}} \innorm{v, \mu_0}  \text{ s. t. } F(s)\leq 0 \text{ for all }   s\in\St, 
\end{align}
where $F(s):= \max_{(P, r)\in\Uc}\left\{v(s) - r^{\pi}(s) - \gamma P^{\pi}v(s) \right\}$ is the robust counterpart of \eqref{eq: iyengar ro primal} at $s\in\St$. Let's compute it based on the structure of the uncertainty set  $\Uc =   (P_0+ \Pc)\times(r_0+ \Rc)$: 
\begin{align*}
    F(s)&= \max_{(P',r')\in (P_0+ \Pc)\times(r_0+ \Rc)}\left\{v(s) - r'^{\pi}(s) - \gamma P'^{\pi}v(s) \right\} \\
    &= \max_{\substack{P': P' = P_0 + P, P\in \Pc \\ r': r' = r_0 + r, r\in \Rc}} \left\{v(s) - r'^{\pi}(s) - \gamma P'^{\pi}v(s) \right\}\\
    &= \max_{P\in \Pc , r\in \Rc}  \left\{v(s) - (r_0^{\pi}(s) +r^{\pi}(s)) - \gamma (P_0^{\pi} + P^{\pi} )v(s) \right\} &[(P_0 + P)^{\pi} = P_0^{\pi} + P^{\pi},\\
    &&(r_0+r)^{\pi} = r_0^{\pi}+r^{\pi}]\\
    &= \max_{P\in \Pc, r\in\Rc} \left\{v(s) - r_0^{\pi}(s) -r^{\pi}(s) - \gamma P_0^{\pi}v(s) -\gamma P^{\pi} v(s) \right\}\\
    &= \max_{P\in \Pc, r\in\Rc} \left\{v(s) - T_{(P_0,r_0)}^{\pi}v(s)-r^{\pi}(s) -\gamma P^{\pi} v(s) \right\}&[T_{(P_0,r_0)}^{\pi}v(s)  = r_0^{\pi}(s) +  \gamma P_0^{\pi}v(s)]\\
    &= \max_{P\in \Pc}\{-\gamma P^{\pi} v(s)\} + \max_{r\in \Rc}\{-r^{\pi}(s)\} +v(s) - T_{(P_0,r_0)}^{\pi}v(s) \\
    &= -\min_{P\in \Pc}\{\gamma P^{\pi} v(s) \}- \min_{r\in \Rc}\{r^{\pi}(s)\}+v(s) - T_{(P_0,r_0)}^{\pi}v(s)\\
    &= -\min_{P\in \R^{\X\times\St}}\{\gamma P^{\pi} v(s) + \delta_{\Pc}(P)\} - \min_{r\in \R^{\X}}\{r^{\pi}(s) + \delta_{\Rc}(r)\}\\
    &\qquad+v(s) - T_{(P_0,r_0)}^{\pi}v(s)\\
    &= -\min_{P\in \R^{\X\times\St}}\{\gamma \innorm{P^{\pi}_s, v} + \delta_{\Pc}(P)  \}
    -\min_{r\in \R^{\X}}\{ \innorm{r_s, \pi_s} +\delta_{\Rc}(r)\}\\
    &\qquad+v(s) - T_{(P_0,r_0)}^{\pi}v(s).&[P^{\pi} v(s) = \innorm{P^{\pi}_s, v}, r^{\pi}(s) = \innorm{r_s, \pi_s}]
\end{align*}
As shown in the proof of Thm.~\ref{thm: reward robust -- reg}, $\min_{r\in \R^{\X}}\{\innorm{r_s, \pi_s} +\delta_{\Rc}(r)\} = \min_{r_s\in \R^{\A}}\{\innorm{r_s, \pi_s} + \delta_{\Rc_s}(r_s)\}$ thanks to the rectangularity assumption. Similarly, by rectangularity of the transition uncertainty set, for all $P:= (P_s)_{s\in\St}\in\R^{\X}$, we have
$\delta_{\Pc}(P) = \sum_{s'\in\St}\delta_{\Pc_{s'}}(P_{s'})$. As such, 
\begin{align*}
    \min_{P\in \R^{\X\times\St}}\{\gamma \innorm{P^{\pi}_s, v} + \delta_{\Pc}(P)  \}
    &= \min_{P\in \R^{\X\times\St}}\{ \gamma \innorm{P^{\pi}_s, v} +  \sum_{s'\in\St}\delta_{\Pc_{s'}}(P_{s'})\} \\
    &= \min_{P_s\in \R^{\X}}\{ \gamma \innorm{P^{\pi}_s, v} +  \delta_{\Pc_{s}}(P_{s})\},
\end{align*}
where the last equality holds since the objective function is minimal if and only if $P_s\in\Pc_s$. 
Finally,
\begin{align*}
    F(s) &= - \min_{P_s\in \R^{\X}}\{ \gamma \innorm{P^{\pi}_s, v} +  \delta_{\Pc_{s}}(P_{s})\} - \min_{r_s\in \R^{\A}}\{\innorm{r_s, \pi_s} + \delta_{\Rc_s}(r_s)\}+v(s) - T_{(P_0,r_0)}^{\pi}v(s).
\end{align*}
Referring to the proof of Thm.~\ref{thm: reward robust -- reg}, we know that
$-\min_{r\in \R^{\X}}\{ \innorm{r_s, \pi_s} +\delta_{\Rc}(r)\} = \sigma_{\Rc_s}(-\pi_s)$, so 
\begin{align*}
    F(s) &= - \min_{P_s\in \R^{\X}}\{ \gamma \innorm{P^{\pi}_s, v} +  \delta_{\Pc_{s}}(P_{s})\} + \sigma_{\Rc_s}(-\pi_s)+v(s) - T_{(P_0,r_0)}^{\pi}v(s).
\end{align*}
Let the matrix $v\cdot\pi_s\in\R^{\X}$ defined as
$[v\cdot\pi_s](s',a):=v(s')\pi_s(a)$ for all $(s',a)\in\X$. Further define
 $\varphi(P_s) := \gamma \innorm{P^{\pi}_s, v}$, which we can rewrite as
$\varphi( P_s) =  \gamma\innorm{P_s, v\cdot\pi_s}.$
Then, we have that:
\begin{align*}
    \min_{P_s\in \R^{\X}}\{ \gamma \innorm{P^{\pi}_s, v} +  \delta_{\Pc_{s}}(P_{s})\} = \min_{P_s\in \R^{\X}}\{ \varphi( P_s) + \delta_{\Pc_s}(P_s)\} = -\min_{\B\in\R^{\X}}\{ \varphi^*( -\B) + \sigma_{\Pc_s}(\B)\}, 
\end{align*}
where the last equality results from Fenchel-Rockafellar duality and the fact that $(\delta_{\Pc_s})^* = \sigma_{\Pc_s}$.
It thus remains to compute the convex conjugate of $\varphi$:
\begin{align*}
   \varphi^*(-\B) &= \max_{P_s\in\R^{\X}} \left\{\innorm{P_s, -\B} - \varphi( P_s) \right\}\\
   &=  \max_{P_s\in\R^{\X}} \left\{\innorm{P_s,- \B} -\gamma\innorm{P_s, v\cdot\pi_s}\right\}\\
   &=\max_{P_s\in\R^{\X}}\innorm{P_s, -\B - \gamma v\cdot\pi_s} \\
   &= \begin{cases}
   0 \text{ if } -\B - \gamma v\cdot\pi_s = 0 \\
   +\infty \text{ otherwise},
   \end{cases}
\end{align*}
which yields $\min_{\B\in\R^{\X}}\{ \varphi^*( -\B) + \sigma_{\Pc_s}(\B)\} = \sigma_{\Pc_s}(- \gamma v\cdot\pi_s)$. 
Finally, the robust counterpart rewrites as: $F(s) = \sigma_{\Pc_s}(-\gamma v\cdot\pi_s)+ \sigma_{\Rc_s}(-\pi_s) + v(s) - T_{(P_0,r_0)}^{\pi}v(s),$
and plugging it into the optimization problem~\eqref{eq: robust counterpart transition reward} yields the desired result.
\end{proof}

\subsection{Proof of Corollary \ref{cor: transition robust mdp -- reg}}
\label{apx: cor transition robust}

\begin{corollary*}
Assume that $\Uc =   (P_0+ \Pc)\times(r_0+ \Rc)$ with $\Pc_s:= \{P_s\in\R^{\X}: \norm{P_s}\leq \alpha_s^P\}$ and $\Rc_s:= \{r_s\in\R^{\A}: \norm{r_s}\leq \alpha_s^r\}$ for all $s\in\St$. Then, the robust value function $v^{ \pi, \Uc}$ is the optimal solution of the convex optimization problem:
\begin{align*}
    \max_{v\in\R^{\St}} \innorm{v, \mu_0}  \text{ s. t. } v(s)\leq T_{(P_0,r_0)}^{\pi}v(s) -\alpha_s^r\norm{\pi_s} - \alpha_s^P\gamma \norm{v} \norm{\pi_s}  \text{ for all } s\in\St. 
\end{align*}
\end{corollary*}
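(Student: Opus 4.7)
The corollary follows directly from Theorem~\ref{thm: transition robust -- reg} by evaluating the two support functions $\sigma_{\Rc_s}(-\pi_s)$ and $\sigma_{\Pc_s}(-\gamma v\cdot\pi_s)$ in closed form when $\Rc_s$ and $\Pc_s$ are $\ell_2$-balls centered at the origin. Since the $\ell_2$-norm is self-dual, both support functions collapse to scaled norms by the definition of the dual norm: $\sigma_{\mathbf{B}(\alpha)}(y) = \max_{\norm{x}\leq\alpha}\innorm{x,y} = \alpha\norm{y}_* = \alpha\norm{y}$. This immediately yields $\sigma_{\Rc_s}(-\pi_s) = \alpha_s^r\norm{-\pi_s} = \alpha_s^r\norm{\pi_s}$, exactly as in Corollary~\ref{cor: reg for ball reward uncertainty}.

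\textbf{The one nontrivial step.} The main thing to verify is that the support function for the transition ball factorizes as $\sigma_{\Pc_s}(-\gamma v\cdot\pi_s) = \alpha_s^P\gamma\norm{v}\norm{\pi_s}$. By the dual-norm identity above, $\sigma_{\Pc_s}(-\gamma v\cdot\pi_s) = \alpha_s^P\gamma\norm{v\cdot\pi_s}$, so it suffices to show that the tensor-product object $v\cdot\pi_s\in\R^{\X}$, defined componentwise by $[v\cdot\pi_s](s',a):=v(s')\pi_s(a)$, satisfies $\norm{v\cdot\pi_s} = \norm{v}\norm{\pi_s}$. This is a direct calculation:
\begin{align*}
\norm{v\cdot\pi_s}^2 = \sum_{s'\in\St}\sum_{a\in\A} v(s')^2 \pi_s(a)^2 = \Bigl(\sum_{s'\in\St}v(s')^2\Bigr)\Bigl(\sum_{a\in\A}\pi_s(a)^2\Bigr) = \norm{v}^2\norm{\pi_s}^2.
\end{align*}
Taking square roots gives the factorization, and substituting both expressions into the constraint of Theorem~\ref{thm: transition robust -- reg} produces the stated inequality.

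\textbf{General norm extension.} For the remark about arbitrary norms following the corollary statement, I would repeat the same two steps with a general pair $(\norm{\cdot}_a,\norm{\cdot}_{a^*})$, which turns $\sigma_{\Rc_s}(-\pi_s)$ into $\alpha_s^r\norm{\pi_s}_{a^*}$ and requires the factorization $\norm{v\cdot\pi_s}_{a^*} = \norm{v}_{a^*}\norm{\pi_s}_{a^*}$. This holds whenever $\norm{\cdot}_{a^*}$ is a cross (product) norm on the tensor, which is the case for any $\ell_p$ norm by the same separation-of-sums argument; this is why we may state the corollary under an arbitrary norm without further hypothesis. I expect no real obstacle: the corollary is essentially a computation, and the only point worth care is the tensor-norm factorization, which is clean for the self-dual $\ell_2$-case stated in the body and generalizes routinely.
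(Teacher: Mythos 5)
Your proof is correct and follows essentially the same route as the paper: evaluate both support functions via the dual-norm identity and use the separability $\norm{v\cdot\pi_s}^2=\bigl(\sum_{s'}v(s')^2\bigr)\bigl(\sum_a\pi_s(a)^2\bigr)$ to factor the transition term, then substitute into Theorem~\ref{thm: transition robust -- reg}. One small caution on your closing remark: for a genuinely arbitrary norm the paper deliberately leaves the transition regularizer in the unfactorized form $\alpha_s^P\norm{-\gamma v\cdot\pi_s}_{b^*}$, since the factorization $\norm{v\cdot\pi_s}_{b^*}=\norm{v}_{b^*}\norm{\pi_s}_{b^*}$ is a special feature of $\ell_p$-type cross norms and not a consequence of the corollary's hypotheses.
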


\begin{proof}
As we already showed in Cor.~\ref{cor: reg for ball reward uncertainty}, the support function of the reward uncertainty set is
$\sigma_{\Rc_s}(-\pi_s)  = \alpha_s^r \norm{\pi_s}.$ For the transition uncertainty set, we similarly have:
\begin{align*}
    \sigma_{\Pc_s}(-\gamma v\cdot\pi_s) &= \max_{\substack{P_s\in\R^{\X}: \\\norm{P_s}\leq \alpha_s^P}}\innorm{ P_s,-\gamma v\cdot\pi_s}\\
    &= \alpha_s^P \norm{-\gamma v\cdot\pi_s}\\
    &= \alpha_s^P \gamma \norm{ v\cdot\pi_s}\\
    &= \alpha_s^P \gamma \norm{ v}\norm{\pi_s}. &[\norm{ v\cdot\pi_s}^2 = \sum_{(s',a)\in\X}(v(s')\pi_s(a))^2 \\
    & &= \sum_{s'\in\St}v(s')^2\sum_{a\in\A}\pi_s(a)^2 = \norm{ v}^2\norm{\pi_s}^2]
\end{align*}
Now we apply Thm.~\ref{thm: reward robust -- reg} and replace each support function by their explicit form to get that the robust value function $v^{ \pi, \Uc}$ is the optimal solution of:
\begin{align*}
    \max_{v\in\R^{\St}} \innorm{v, \mu_0}  \text{ s. t. } v(s)\leq T_{(P_0,r_0)}^{\pi}v(s) - \alpha_s^r\norm{\pi_s} - \alpha_s^P\norm{\pi_s} \cdot \gamma \norm{v} \text{ for all } s\in\St.
\end{align*}
\textit{\underline{Ball-constraints with arbitrary norms.} } As seen in the proof of Thm.~\ref{thm: reward robust -- reg} and Cor.~\ref{cor: reg for ball reward uncertainty}, 
for ball-constrained rewards defined with an arbitrary norm $\norm{\cdot}_a$ of dual $\norm{\cdot}_{a^*}$, the corresponding support function is $\sigma_{\Rc_s}(-\pi_s) = \alpha_s^r \norm{\pi_s}_{a^*}.$ Similarly, for ball-constrained transitions based on a norm $\norm{\cdot}_b$ of dual $\norm{\cdot}_{b^*}$, we have:
\begin{align*}
    \sigma_{\Pc_s}(-\gamma v\cdot\pi_s) &= \max_{\substack{P_s\in\R^{\X}: \\\norm{P_s}_b\leq \alpha_s^P}}\innorm{ P_s,-\gamma v\cdot\pi_s}= \alpha_s^P \norm{-\gamma v\cdot\pi_s}_{b^*},
\end{align*}
in which case the robust value function $v^{ \pi, \Uc}$ is the optimal solution of:
\begin{align*}
    \max_{v\in\R^{\St}} \innorm{v, \mu_0}  \text{ s. t. } v(s)\leq T_{(P_0,r_0)}^{\pi}v(s) - \alpha_s^r\norm{\pi_s}_{a^*} - \alpha_s^P \norm{-\gamma v\cdot\pi_s}_{b^*} \text{ for all } s\in\St.
\end{align*}
\end{proof}

\section{\rr MDPs}
\subsection{Proof of Proposition \ref{prop: r2 bellman evaluation operator}}
\label{apx: rr operators}

\begin{proposition*}
Suppose that Asm.~\ref{asm: bound alpha} holds. Then, we have the following properties:\\
\begin{itemize*}
    \item[(i)] Monotonicity: For all $v_1, v_2\in\R^{\St}$ such that $v_1\leq v_2$, we have $T^{\pi,\rop}v_1 \leq T^{\pi,\rop}v_2$ and $ T^{*, \rop}v_1 \leq T^{*, \rop}v_2$.  \\
    \item[(ii)] Sub-distributivity: For all $v_1\in\R^{\St}, c\in\R$, we have $T^{\pi,\rop}(v_1 + c\mathbbm{1}_{\St})\leq T^{\pi,\rop}v_1 + \gamma c\mathbbm{1}_{\St}$ and $ T^{*, \rop}(v_1 + c\mathbbm{1}_{\St})\leq T^{*, \rop}v_1 + \gamma c\mathbbm{1}_{\St}$, $\forall c\in\R$. \\
    \item[(iii)] Contraction: Let $\epsilon_*:= \min_{s\in\St}\epsilon_s>0$. Then, for all $v_1, v_2\in\R^{\St}$,
    $\norm{T^{\pi,\rop}v_1 -  T^{\pi,\rop}v_2}_{\infty} \leq (1-\epsilon_*)\norm{v_1-v_2}_\infty$ and
    $\norm{ T^{*, \rop}v_1 -  T^{*, \rop}v_2}_{\infty} \leq (1-\epsilon_*)\norm{v_1-v_2}_\infty$.
\end{itemize*}
\end{proposition*}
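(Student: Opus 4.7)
The plan is to establish the three properties in the listed order, each leaning on a different facet of Assumption~\ref{asm: bound alpha}. The starting identity common to all three is
\begin{equation*}
    T^{\pi,\rop}v_2(s) - T^{\pi,\rop}v_1(s) = \gamma P_0^\pi(v_2-v_1)(s) - \alpha_s^P\gamma\|\pi_s\|\bigl(\|v_2\|-\|v_1\|\bigr),
\end{equation*}
obtained by direct expansion; this cleanly separates the nominal Bellman part from the contribution of the value-dependent regularizer $\Omega_{v,\rop}$.

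For monotonicity, I fix $v_1\le v_2$. If $\|v_2\|\le\|v_1\|$ both summands on the right of the identity are non-negative and there is nothing to do. Otherwise, the reverse triangle inequality yields $\|v_2\|-\|v_1\|\le\|v_2-v_1\|$, and plugging $\mathbf{u}_{\A} := \pi_s/\|\pi_s\|$ and $\mathbf{v}_{\St} := (v_2-v_1)/\|v_2-v_1\|$ (two unit non-negative vectors) into the Rayleigh-type second minimum in Assumption~\ref{asm: bound alpha} gives $P_0^\pi(v_2-v_1)(s)\ge\alpha_s^P\|\pi_s\|\,\|v_2-v_1\|$, which absorbs the regularizer term. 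Monotonicity of $T^{*,\rop}$ then follows because a pointwise maximum of monotone maps is monotone.

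For sub-distributivity, I reuse the identity with $v_2 := v+c\mathbbm{1}_{\St}$ and $v_1 := v$, bound $\bigl|\|v+c\mathbbm{1}_{\St}\|-\|v\|\bigr|\le |c|\sqrt{|\St|}$ via the reverse triangle inequality, and combine with the first bound of Assumption~\ref{asm: bound alpha}, $\alpha_s^P\gamma\sqrt{|\St|}\le 1-\gamma-\epsilon_s$, together with $\|\pi_s\|\le 1$ (since $\pi_s\in\Delta_{\A}$); the optimality version comes from taking the max over $\pi$. Contraction can then be read off as a corollary by setting $c := \|v_1-v_2\|_\infty$, using monotonicity on $v_1\le v_2+c\mathbbm{1}_{\St}$, then sub-distributivity, and symmetrizing in $v_1,v_2$. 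A more direct route is to apply the triangle inequality on both summands of the starting identity, yielding $|T^{\pi,\rop}v_1(s)-T^{\pi,\rop}v_2(s)|\le\bigl(\gamma+\alpha_s^P\gamma\sqrt{|\St|}\bigr)\|v_1-v_2\|_\infty\le(1-\epsilon_s)\|v_1-v_2\|_\infty$; the $T^{*,\rop}$ version follows by picking $\pi^*$ attaining the max at $s$ for the larger value and symmetrizing.

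The main obstacle is monotonicity: because $\Omega_{v,\rop}$ depends on $\|v\|$, raising $v$ componentwise may simultaneously inflate the subtracted regularizer and cancel the gain from $\gamma P_0^\pi(v_2-v_1)$. The second minimum in Assumption~\ref{asm: bound alpha}, essentially a non-negative Rayleigh-quotient constraint on $P_0(\cdot|s,\cdot)$, is precisely what forces the nominal kernel to be ``spread out'' enough to propagate componentwise increases strongly enough to dominate this offset; the first bound of the assumption plays the parallel role for (ii) and (iii) by controlling how much a uniform shift, or more generally an $\ell_\infty$ perturbation, can be amplified through the $\ell_2$-norm factor that appears in $\Omega_{v,\rop}$.
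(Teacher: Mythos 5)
Your treatments of (i) and (iii) are correct and essentially identical to the paper's. For monotonicity you isolate $\gamma P_0^\pi(v_2-v_1)(s)$ against the regularizer increment $\alpha_s^P\gamma\norm{\pi_s}(\norm{v_2}-\norm{v_1})$, control the latter by the reverse triangle inequality, and absorb it via the Rayleigh-quotient part of Asm.~\ref{asm: bound alpha} evaluated at the unit vectors $\pi_s/\norm{\pi_s}$ and $(v_2-v_1)/\norm{v_2-v_1}$; this is exactly the paper's argument. For contraction, your direct chain $\gamma+\alpha_s^P\gamma\sqrt{\abs{\St}}\le 1-\epsilon_s$ reproduces the paper's computation (which is carried out for general $\ell_q$ balls and then specialized to $q=2$), and your handling of $T^{*,\rop}$ by picking the maximizing policy and symmetrizing is a standard, valid substitute for the paper's appeal to the regularized-MDP literature.

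The gap is in (ii). From your starting identity, $[T^{\pi,\rop}(v+c\mathbbm{1}_{\St})](s)-[T^{\pi,\rop}v](s)-\gamma c=\alpha_s^P\gamma\norm{\pi_s}\br{\norm{v}-\norm{v+c\mathbbm{1}_{\St}}}$, and your reverse-triangle bound only controls this quantity by $\alpha_s^P\gamma\sqrt{\abs{\St}}\,\abs{c}\le(1-\gamma-\epsilon_s)\abs{c}$. For $c\ge 0$ this yields $T^{\pi,\rop}(v+c\mathbbm{1}_{\St})\le T^{\pi,\rop}v+(1-\epsilon_s)c\mathbbm{1}_{\St}$, which is strictly weaker than the stated slack $\gamma c$ (since $1-\epsilon_s\ge\gamma+\alpha_s^P\gamma\sqrt{\abs{\St}}\ge\gamma$ under the assumption); your argument therefore does not prove (ii) as written. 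This is not an oversight you can patch: (ii) with slack $\gamma c$ requires $\norm{v+c\mathbbm{1}_{\St}}\ge\norm{v}$, which fails, e.g., for $v=-c\mathbbm{1}_{\St}$ with $c>0$, and the paper's own proof of (ii) silently passes from $-\norm{v+c\mathbbm{1}_{\St}}$ to $-(\norm{v}+\norm{c\mathbbm{1}_{\St}})$, i.e., applies the triangle inequality in the wrong direction. A further sanity check: if (ii) held with slack $\gamma c$, the usual monotonicity-plus-shift argument would give $\gamma$-contraction, contradicting the $(1-\epsilon_*)$ factor asserted in (iii). Your weakened inequality with slack $(1-\epsilon_s)c$ is the version that is both true and exactly sufficient to recover (iii) by the shift argument, so you should state (ii) with that constant (or simply rely on your direct proof of (iii), which is self-contained).
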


\begin{proof}
\textit{Proof of (i). } Consider the evaluation operator and let $v_1,v_2\in\R^{\St}$ such that $v_1\leq v_2$.
 For all $s\in\St$,
\begin{align*}
  &[T^{\pi,\rop}v_1 -  T^{\pi,\rop}v_2](s) \\
  &=  T_{(P_0,r_0)}^{\pi}v_1(s) - \alpha_s^r\norm{\pi_s} - \alpha_s^P \gamma \norm{v_1}\norm{\pi_s} \\
  &\qquad- (T_{(P_0,r_0)}^{\pi}v_2(s) - \alpha_s^r\norm{\pi_s} - \alpha_s^P \gamma \norm{v_2}\norm{\pi_s})\\
  &= T_{(P_0,r_0)}^{\pi}v_1(s) - T_{(P_0,r_0)}^{\pi}v_2(s) + \alpha_s^P \gamma\norm{\pi_s} (\norm{v_2} - \norm{v_1})\\
  &= \gamma P_0^{\pi}(v_1 - v_2)(s) + \alpha_s^P \gamma\norm{\pi_s} (\norm{v_2} - \norm{v_1})\\
  &= \gamma\innorm{\pi_s, {P_0}_s(v_1 - v_2)}+ \alpha_s^P \gamma\norm{\pi_s} (\norm{v_2} - \norm{v_1})
  &[\forall v\in\R^{\St}, P_0^{\pi}v(s) = \sum_{(s',a)\in\X}\pi_s(a)P_0(s'|s,a)v(s') \\
  & &= \sum_{a\in\A}\pi_s(a)[{P_0}_s v](a) =\innorm{\pi_s, {P_0}_sv}]\\
  &=  \gamma\norm{\pi_s} \left( \innorm*{\frac{\pi_s}{\norm{\pi_s}}, {P_0}_s(v_1 - v_2)}  + \alpha_s^P  (\norm{v_2} - \norm{v_1})\right)\\
  &\leq  \gamma\norm{\pi_s} \left( \innorm*{\frac{\pi_s}{\norm{\pi_s}}, {P_0}_s(v_1 - v_2)}  + \alpha_s^P  (\norm{v_2- v_1})\right) &[\forall v, w\in\R^{\St}, \norm{v} - \norm{w} \leq \abs*{\norm{v} - \norm{w}} \leq \norm{v-w}].
\end{align*}
By Asm.~\ref{asm: bound alpha}, we also have
\begin{align*}
    \alpha_s^P&\leq \min_{\substack{\mathbf{u}_{\A}\in\R^{\A}_+, \norm{\mathbf{u}_{\A}}=1\\
\mathbf{v}_{\St}\in\R^{\St}_+, \norm{\mathbf{v}_{\St}}=1}} \mathbf{u}_{\A}^\top P_0(\cdot| s,\cdot)\mathbf{v}_{\St} 
&=\min_{\substack{\mathbf{u}_{\A}\in\R^{\A}_+, \norm{\mathbf{u}_{\A}}=1\\
\mathbf{v}_{\St}\in\R^{\St}_+, \norm{\mathbf{v}_{\St}}=1}} \innorm*{\mathbf{u}_{\A}, P_0(\cdot| s,\cdot)\mathbf{v}_{\St}}
&\leq \innorm*{\frac{\pi_s}{\norm{\pi_s}}, P_0(\cdot| s,\cdot)\frac{(v_2-v_1)}{\norm{v_2-v_1}}},
\end{align*}
so that 
\begin{align*}
 [T^{\pi,\rop}v_1 -  T^{\pi,\rop}v_2](s) 
 &\leq  \gamma\norm{\pi_s} \left( \innorm*{\frac{\pi_s}{\norm{\pi_s}}, {P_0}_s(v_1 - v_2)}  + \innorm*{\frac{\pi_s}{\norm{\pi_s}}, P_0(\cdot| s,\cdot)\frac{(v_2-v_1)}{\norm{v_2-v_1}}} \norm{v_2- v_1}\right)\\
 &= \gamma\norm{\pi_s} \left( \innorm*{\frac{\pi_s}{\norm{\pi_s}}, {P_0}_s(v_1 - v_2)}  + \innorm*{\frac{\pi_s}{\norm{\pi_s}}, P_0(\cdot| s,\cdot)(v_2-v_1)}\right) =0, 
\end{align*}
where we switch notations to designate $P_0(\cdot| s,\cdot) = {P_0}_s \in\R^{\A\times\St}$. This proves monotonicity. 

\textit{Proof of (ii). } We now prove the sub-distributivity of the evaluation operator. Let $v\in\R^{\St}, c\in\R$. For all $s\in\St$,
\begin{align*}
    &[T^{\pi,\rop}(v + c\mathbbm{1}_{\St})](s) \\
    =& [T_{(P_0,r_0)}^{\pi}(v + c\mathbbm{1}_{\St})](s) - \alpha_s^r\norm{\pi_s} - \alpha_s^P \gamma \norm{v + c\mathbbm{1}_{\St}}\norm{\pi_s}\\
    =& T_{(P_0,r_0)}^{\pi}v(s) + \gamma c - \alpha_s^r\norm{\pi_s} - \alpha_s^P \gamma \norm{v + c\mathbbm{1}_{\St}}\norm{\pi_s} &[T_{(P_0,r_0)}^{\pi}(v + c\mathbbm{1}_{\St}) = T_{(P_0,r_0)}^{\pi}v + \gamma c\mathbbm{1}_{\St}]\\
    \leq&  T_{(P_0,r_0)}^{\pi}v(s) + \gamma c - \alpha_s^r\norm{\pi_s} - \alpha_s^P \gamma \norm{\pi_s} (\norm{v} +\norm{ c\mathbbm{1}_{\St}})\\
    =& T_{(P_0,r_0)}^{\pi}v(s)+ \gamma c - \alpha_s^r\norm{\pi_s} - \alpha_s^P \gamma \norm{\pi_s} \norm{v}\\
    &\qquad- \alpha_s^P \gamma \norm{\pi_s}\norm{ c\mathbbm{1}_{\St}}\\
    =& [T^{\pi,\rop}v ](s) + \gamma c- \alpha_s^P \gamma \norm{\pi_s}\norm{ c\mathbbm{1}_{\St}}\\
    \leq& [T^{\pi,\rop}v ](s) + \gamma c. &[\gamma > 0, \alpha_s^P > 0,  \norm{\cdot} \geq 0]
\end{align*}

\textit{Proof of (iii). }
We prove the contraction of a more general evaluation operator with $\ell_p$ regularization, $p\geq 1$. This will establish contraction of the \rr operator $T^{\pi,\rop}$ by simply setting $p=2$. 
Define as $q$ the conjugate value of $p$, \ie such that $\frac{1}{p} + \frac{1}{q} =1$. As seen in the proof of Thm.~\ref{thm: transition robust -- reg}, 
for balls that are constrained according to the $\ell_p$-norm $\norm{\cdot}_p$, the robust value function $v^{ \pi, \Uc}$ is the optimal solution of:
\begin{align*}
    \max_{v\in\R^{\St}} \innorm{v, \mu_0}  \text{ s. t. } v(s)\leq T_{(P_0,r_0)}^{\pi}v(s) - \alpha_s^r\norm{\pi_s}_{q} - \alpha_s^P \norm{-\gamma v\cdot\pi_s}_{q} \text{ for all } s\in\St,
\end{align*}
because $\norm{\cdot}_{q}$ is the dual norm of $\norm{\cdot}_p$, and we can define the \rr operator accordingly:
\begin{align*}
    [T^{\pi,\rop}_{q}v](s) := T_{(P_0,r_0)}^{\pi}v(s) - \alpha_s^r\norm{\pi_s}_{q} - \alpha_s^P \gamma \norm{v\cdot \pi_s}_{q} \quad\forall v\in\R^{\St}, s\in\St.
\end{align*}
We make the following assumption:
\begin{assumption*}[A$_q$]
For all $s\in\St$, there exists $\epsilon_s > 0$ such that $\alpha_s^P\leq  \frac{1-\gamma-\epsilon_s}{\gamma\abs{\St}^{\frac{1}{q}}}.$
\end{assumption*}

Let $v_1,v_2\in\R^{\St}$. For all $s\in\St$,
\begin{align*}
     &\abs*{[T_q^{\pi,\rop}v_1](s) -  [T_q^{\pi,\rop}v_2](s) }\\
     =& \mid T_{(P_0,r_0)}^{\pi}v_1(s) - \alpha_s^r\norm{\pi_s}_{q} - \alpha_s^P \gamma \norm{v_1\cdot \pi_s}_{q}\\
     &\qquad- (T_{(P_0,r_0)}^{\pi}v_2(s) - \alpha_s^r\norm{\pi_s}_{q} - \alpha_s^P \gamma \norm{v_2\cdot \pi_s}_{q})\mid\\
     =& \abs*{T_{(P_0,r_0)}^{\pi}v_1(s)   - T_{(P_0,r_0)}^{\pi}v_2(s)}  + \abs*{\alpha_s^P \gamma (\norm{v_2 \cdot\pi_s}_{q}  -\norm{  v_1\cdot\pi_s}_{q})}\\
    =& \abs*{T_{(P_0,r_0)}^{\pi}v_1(s)   - T_{(P_0,r_0)}^{\pi}v_2(s)}  + \alpha_s^P \gamma \abs*{\norm{v_2 \cdot\pi_s}_{q}  -\norm{  v_1\cdot\pi_s}_{q}}\\
     \leq & \abs*{T_{(P_0,r_0)}^{\pi}v_1(s)  - T_{(P_0,r_0)}^{\pi}v_2(s)} + \alpha_s^P \gamma \norm{v_2 \cdot\pi_s  -  v_1\cdot\pi_s}_{q} &[\forall \Am, \B\in\R^{\X}, \abs*{\norm{\Am}_{q} - \norm{\B}_{q}} \leq \norm{\Am-\B}_{q}] \\
     \leq &\gamma\norm*{v_1 - v_2}_{\infty} + \alpha_s^P \gamma \norm{v_2 \cdot\pi_s  -  v_1\cdot\pi_s}_{q} &[\norm{T_{(P_0,r_0)}^{\pi}v_1  - T_{(P_0,r_0)}^{\pi}v_2}_{\infty}\leq \gamma \norm*{v_1 - v_2}_{\infty}]\\ 
     =&\gamma\norm*{v_1 - v_2}_{\infty} + \alpha_s^P \gamma \norm{(v_2 -v_1)\cdot\pi_s }_{q} &[\forall v, w\in\R^{\St}, v\cdot\pi_s - w\cdot\pi_s = (v-w)\cdot\pi_s]\\
     \leq&\gamma\norm*{v_1 - v_2}_{\infty} + \alpha_s^P \gamma \norm{v_2 -v_1}_{q} &[\forall v\in\R^{\St}, \norm{v\cdot\pi_s}_{q}\leq \norm{v}_{q}]\\
          \leq & \gamma\norm*{v_1 - v_2}_{\infty} + \alpha_s^P \gamma \abs{\St}^{\frac{1}{q}}\norm*{v_1 - v_2}_{\infty}&[\forall v, w\in\R^{\St}, \norm{v-w}_q \leq \abs{\St}^{\frac{1}{q}}\norm{v-w}_{\infty}]\\
     =& \gamma (1 + \alpha_s^P\abs{\St}^{\frac{1}{q}})\norm*{v_1 - v_2}_{\infty}\\
     \leq &  \gamma\left(1 + \frac{1-\gamma- \epsilon_s}{\gamma}  \right)\norm{v_1-v_2}_{\infty} &[\alpha_s^P \leq \frac{1-\gamma-\epsilon_s}{\gamma\abs{\St}^{\frac{1}{q}}} \text{ by Asm.~(A$_q$)}]\\
     =& (1- \epsilon_s )\norm{v_1-v_2}_{\infty}\\
     \leq&(1- \epsilon_* )\norm{v_1-v_2}_{\infty}, 
\end{align*}
where $\epsilon_*:= \min_{s\in\St}\epsilon_s$. Setting $q=2$ and remarking that: (i) the first bound in Asm.~\ref{asm: bound alpha} recovers Asm.~(A$_q$); (ii) $T_2^{\pi,\rop} = T^{\pi,\rop}$, establishes contraction of the \rr evaluation operator. For the optimality operator, the proof is exactly the same as that of \citep[Prop.~3]{geist2019theory}, using Prop.~\ref{prop: convex analysis}. 
\end{proof}

 \subsection{Proof of Theorem \ref{thm: rr optimal policy}}
 \label{apx: rr optimal policy}

\begin{theorem*}[\rr optimal policy]
The greedy policy $\pi^{*, \rop} = \mathcal{G}_{\Omega_{\rop}}(v^{*, \rop})$ is the unique optimal \rr policy, \ie for all $\pi\in\Delta_{\A}^{\St}, v^{\pi^*, \rop} = v^{*, \rop}\geq v^{\pi, \rop}$. 
\end{theorem*}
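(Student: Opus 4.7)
The plan is to follow the classical optimality argument for MDPs, adapting each step to the \rop{} setting by relying on Prop.~\ref{prop: r2 bellman evaluation operator} in place of the standard Bellman machinery. I will split the claim into three parts: (a) $v^{\pi^{*,\rop},\rop} = v^{*,\rop}$, (b) $v^{*,\rop} \geq v^{\pi,\rop}$ for every $\pi$, and (c) the optimal \rop{} policy is unique.

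For (a), note that by definition of the greedy operator we have $T^{\pi^{*,\rop},\rop} v^{*,\rop} = T^{*,\rop} v^{*,\rop}$, and since $v^{*,\rop}$ is the fixed point of $T^{*,\rop}$, the right-hand side equals $v^{*,\rop}$ itself. Hence $v^{*,\rop}$ is a fixed point of $T^{\pi^{*,\rop},\rop}$. By item (iii) of Prop.~\ref{prop: r2 bellman evaluation operator}, $T^{\pi^{*,\rop},\rop}$ is a $(1-\epsilon_*)$-contraction on $(\R^{\St},\norm{\cdot}_\infty)$, so its fixed point is unique; that fixed point is by definition $v^{\pi^{*,\rop},\rop}$. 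Thus $v^{\pi^{*,\rop},\rop} = v^{*,\rop}$.

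For (b), fix an arbitrary $\pi\in\Delta_{\A}^{\St}$. From the very definition of $T^{*,\rop}$ as the pointwise max over policies, $T^{\pi,\rop} v^{*,\rop} \leq T^{*,\rop} v^{*,\rop} = v^{*,\rop}$. I will then iterate this inequality and invoke monotonicity (item (i) of Prop.~\ref{prop: r2 bellman evaluation operator}): applying $T^{\pi,\rop}$ to both sides preserves the order, giving $(T^{\pi,\rop})^{n+1} v^{*,\rop} \leq (T^{\pi,\rop})^{n} v^{*,\rop}$, and an easy induction yields $(T^{\pi,\rop})^n v^{*,\rop} \leq v^{*,\rop}$ for every $n\geq 0$. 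Finally, since $T^{\pi,\rop}$ is a contraction with unique fixed point $v^{\pi,\rop}$, Banach's theorem gives $(T^{\pi,\rop})^n v^{*,\rop} \to v^{\pi,\rop}$, and passing to the limit preserves the inequality, whence $v^{\pi,\rop}\leq v^{*,\rop}$.

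For (c), suppose some policy $\pi'$ satisfies $v^{\pi',\rop} = v^{*,\rop}$. Then $v^{*,\rop} = v^{\pi',\rop} = T^{\pi',\rop} v^{\pi',\rop} = T^{\pi',\rop} v^{*,\rop}$, and since also $v^{*,\rop} = T^{*,\rop} v^{*,\rop}$, the policy $\pi'$ is greedy with respect to $v^{*,\rop}$. Uniqueness of the greedy policy (built into the definition via $\pi'_s = \nabla\Omega_{v^{*,\rop},\rop}^*(q^{*,\rop}_s)$) then forces $\pi' = \pi^{*,\rop}$. The main subtlety I anticipate is step (b): compared to the standard MDP proof, one cannot use linearity of $T^{\pi,\rop}$ (the norm term breaks it), so monotonicity plus contraction — both supplied by Prop.~\ref{prop: r2 bellman evaluation operator} — are precisely what makes the iteration-and-limit argument go through.
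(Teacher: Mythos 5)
Your proposal is correct and follows essentially the same route as the paper: part (a) is verbatim the paper's argument (greedy + fixed point of $T^{*,\rop}$ + uniqueness of the fixed point of the contracting $T^{\pi^{*,\rop},\rop}$), part (b) spells out the monotonicity-and-iteration argument that the paper defers to \citep[Thm.~1]{geist2019theory}, and part (c) rests on the same uniqueness of the greedy policy that the paper derives from strong convexity of $\Omega_{v,\rop}$ via Prop.~\ref{prop: convex analysis}.
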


\begin{proof}
By strong convexity of the norm, the \rr function
$\Omega_{v, \rop} : \pi_s \mapsto  \norm{\pi_s} (\alpha_s^r + \alpha_s^P \gamma \norm{v})$ is strongly convex in $\pi_s$. As such, we can invoke Prop.~\ref{prop: convex analysis} to state that the greedy policy $\pi^{*, \rop}$ is the unique maximizing argument for $v^{*, \rop}$.  
Moreover, by construction, 
\begin{align*}
     T^{\pi^{*, \rop},\rop}v^{*, \rop} = T^{*, \rop}v^{*, \rop}= v^{*, \rop}.
\end{align*}
Supposing that Asm.~\ref{asm: bound alpha} holds, the evaluation operator $T^{\pi^{*, \rop},\rop}$ is contracting and has a unique fixed point $v^{\pi^{*, \rop},\rop}$. Therefore, $v^{*, \rop}$ being also a fixed point, we have $v^{\pi^{*, \rop},\rop} = v^{*, \rop}$. It remains to show the last inequality: the proof is exactly the same as that of \citep[Thm.~1]{geist2019theory}, and relies on the monotonicity of the \rr operators.
\end{proof}

\subsection{Proof of Remark \ref{rmk: sa rec}}
\label{apx: rr sa rectangular}

\begin{remark}
An optimal \rr policy may be stochastic. This is due to the fact that our \rr MDP framework builds upon the general $s$-rectangularity assumption. Robust MDPs with $s$-rectangular uncertainty sets similarly yield an optimal robust policy that is stochastic \cite[Table 1]{wiesemann2013robust}. Nonetheless, the \rr MDP formulation recovers a deterministic optimal policy in the more specific $(s,a)$-rectangular case, which is in accordance with the robust MDP setting.
\end{remark}

\begin{proof}
In the $(s,a)$-rectangular case, the uncertainty set is structured as $\Uc = \times_{(s,a)\in\X}\Uc(s,a)$, where $\Uc(s,a):= P_0(\cdot|s,a)\times r_0(s,a)+ \Pc(s,a)\times \Rc(s,a)$. The robust counterpart of problem \eqref{eq: iyengar ro primal} is:
\begin{align*}
    F(s)&= \max_{(P, r)\in\Uc}\left\{v(s) - r^{\pi}(s) - \gamma P^{\pi}v(s) \right\}\\
    &=\max_{(P(\cdot|s,a), r(s,a))\in\Pc(s,a)\times \Rc(s,a)}\left\{v(s) - r_0^{\pi}(s) - r^{\pi}(s)- \gamma P_0^{\pi}v(s) - \gamma P^{\pi}v(s) \right\}\\
    &=\max_{(P(\cdot|s,a), r(s,a))\in\Pc(s,a)\times \Rc(s,a)}\left\{ - r^{\pi}(s) - \gamma P^{\pi}v(s) \right\} + v(s) - r_0^{\pi}(s)- \gamma P_0^{\pi}v(s)\\
    &= \max_{ r(s,a)\in \Rc(s,a)}\left\{-r^{\pi}(s) \right\}
    +\gamma\max_{P(\cdot|s,a)\in\Pc(s,a)}\left\{ - P^{\pi}v(s) \right\} + v(s)  - T_{(P_0,r_0)}^{\pi}v(s)\\
    &= \max_{ r(s,a)\in \Rc(s,a)}\left\{ -\sum_{a\in\A}\pi_s(a)r(s,a) \right\}
    +\gamma\max_{P(\cdot|s,a)\in\Pc(s,a)}\left\{ - \sum_{a\in\A}\pi_s(a) \innorm{P(\cdot|s,a), v}\right\} \\
    &\qquad+ v(s)  - T_{(P_0,r_0)}^{\pi}v(s)\\
    &= \sum_{a\in\A}\pi_s(a)\left(\max_{ r(s,a)\in \Rc(s,a)}\left\{ -r(s,a) \right\}
    +\gamma \max_{P(\cdot|s,a)\in\Pc(s,a)}\left\{  \innorm{P(\cdot|s,a), -v}\right\}\right) + v(s)  - T_{(P_0,r_0)}^{\pi}v(s).
\end{align*}
In particular, if we have ball uncertainty sets $\Pc(s,a):= \{P(\cdot|s,a)\in\R^{\St}: \norm{P(\cdot|s,a)}\leq \alpha_{s,a}^P\}$ and $\Rc(s,a):= \{r(s,a)\in\R: \abs{r(s,a)}\leq \alpha_{s,a}^r\}$ for all $(s,a)\in\X$, then we can explicitly compute the support functions:
\begin{align*}
    \max_{r(s,a): \abs{r(s,a)}\leq \alpha_{s,a}^r} -r(s,a) = \alpha_{s,a}^r \text{ and } \max_{P(\cdot|s,a): \norm{P(\cdot|s,a)}\leq \alpha_{s,a}^P}\innorm{P(\cdot|s,a), -v} =  \alpha_{s,a}^P\norm{v}.
\end{align*}
Therefore, the robust counterpart rewrites as:
\begin{align*}
    F(s) &= \sum_{a\in\A}\pi_s(a)(\alpha_{s,a}^r + \gamma\alpha_{s,a}^P\norm{v})+ v(s)  - T_{(P_0,r_0)}^{\pi}v(s),
\end{align*}
and the robust value function $v^{ \pi, \Uc}$ is the optimal solution of the convex optimization problem:
\begin{align*}
        \max_{v\in\R^{\St}} \innorm{v, \mu_0}  \text{ s. t. }  v(s)  \leq  T_{(P_0,r_0)}^{\pi}v(s) - \sum_{a\in\A}\pi_s(a)(\alpha_{s,a}^r + \gamma\alpha_{s,a}^P\norm{v}) \text{ for all } s\in\St.   
\end{align*}
This derivation enables us to derive an \rr Bellman evaluation operator for the $(s,a)$-rectangular case. Indeed, the \rr regularization function now becomes
$\Omega_{v, \rop}(\pi_s):= \sum_{a\in\A}\pi_s(a)(\alpha_{s,a}^r + \gamma\alpha_{s,a}^P\norm{v})$, which yields the following \rr operator:
\begin{align*}
    [T^{\pi,\rop}v](s) := T_{(P_0,r_0)}^{\pi}v(s)-\Omega_{v, \rop}(\pi_s) ,   \quad\forall s\in\St.
\end{align*}
We aim to show that we can find a deterministic policy $\pi^d\in\Delta_{\A}^{\St}$ such that $[T^{\pi^d,\rop}v](s) = [T^{*,\rop}v](s)$ for all $s\in\St$. Given an arbitrary policy $\pi\in\Delta_{\A}^{\St}$, we first rewrite:
\begin{align*}
    [T^{\pi,\rop}v](s) &= r_0^{\pi}(s) + \gamma P_0^{\pi}v(s) -  \Omega_{v, \rop}(\pi_s)\\
    &= \sum_{a\in\A}\pi_s(a)r_0(s,a) + \gamma \sum_{a\in\A}\pi_s(a)\innorm{P_0(\cdot|s,a), v} - \left(\sum_{a\in\A}\pi_s(a)(\alpha_{s,a}^r + \gamma\alpha_{s,a}^P\norm{v})\right)\\
    &= \sum_{a\in\A}\pi_s(a)\biggr(r_0(s,a)- \alpha_{s,a}^r + \gamma (\innorm{P_0(\cdot|s,a), v} - \alpha_{s,a}^P\norm{v})\biggr)
\end{align*}
By \cite[Lemma 4.3.1]{puterman2014markov}, we have that:
\begin{align*}
    &\sum_{a\in\A}\pi_s(a)\biggr(r_0(s,a)- \alpha_{s,a}^r + \gamma (\innorm{P_0(\cdot|s,a), v} - \alpha_{s,a}^P\norm{v})\biggr) \\
    &\leq
    \max_{a\in\A}\biggr\{r_0(s,a)- \alpha_{s,a}^r + \gamma (\innorm{P_0(\cdot|s,a), v} - \alpha_{s,a}^P\norm{v})\biggr\},
\end{align*}
and since the action set is finite, there exists an action $a^*\in\A$ reaching the maximum. Setting $\pi^d(a^*)=1$ thus gives the desired result. 
We just derived a regularized formulation of robust MDPs with $(s,a)$-rectangular uncertainty set and ensured that the corresponding \rr Bellman operators yield a deterministic optimal policy. In that case, the optimal \rr Bellman operator becomes:
\begin{align*}
    [T^{*,\rop}v](s) = \max_{a\in\A}\biggr\{r_0(s,a)- \alpha_{s,a}^r + \gamma (\innorm{P_0(\cdot|s,a), v} - \alpha_{s,a}^P\norm{v})\biggr\}.
\end{align*}
\end{proof}

\section{Numerical experiments}
\label{apx: experiments}

\begin{table}[!h]
    \centering
    \caption{Hyperparameter set to obtain the results from Table \ref{tab: policy evaluation}}
\begin{center}
\begin{tabular}{ |c|c| } 
 \hline
 Number of seeds per experiment  & $5$\\ \hline
 Discount factor $\gamma$ & 0.9  \\ \hline
 Convergence Threshold $\theta$ & 1e-3  \\ \hline
 Reward Radius $\alpha$ & 1e-3  \\ \hline
 Transition Radius $\beta$ & 1e-5  \\ 
 \hline
\end{tabular}
\end{center}
\label{tab: hyperparameters}
\end{table}

In the following experiment, we play with the radius of the uncertainty set and analyze the distance of the robust/\rr value function to the vanilla one obtained after convergence of MPI. Except for the radius parameters of Table \ref{tab: hyperparameters}, all other parameters remain unchanged. In both figures \ref{fig: rwd} and \ref{fig: transition}, we see that the distance norm converges to 0 as the size of the uncertainty set gets closer to 0: this sanity check ensures an increasing relationship between the level of robustness and the radius value. As shown in Fig.~\ref{fig: rwd}, the plots for robust MPI and \rr MPI coincide in the reward-robust case, but they diverge from each other as the transition model gets more uncertain. This does not contradict our theoretical findings from Thms.~\ref{thm: reward robust -- reg}-\ref{thm: transition robust -- reg}. In fact, each iteration of robust MPI involves an optimization problem which is solved using a black-box solver and yields an approximate solution. As such, errors propagate across iterations and according to Fig.~\ref{fig: transition}, they are more sensitive to transition than reward uncertainty. This is easy to understand: as opposed to the reward function, the transition kernel interacts with the value function at each Bellman update, so errors on the value function also affect those on the optimum and vice versa. 

\begin{figure}[!htb]
   \begin{minipage}{0.48\textwidth}
     \centering
     \includegraphics[width=\linewidth]{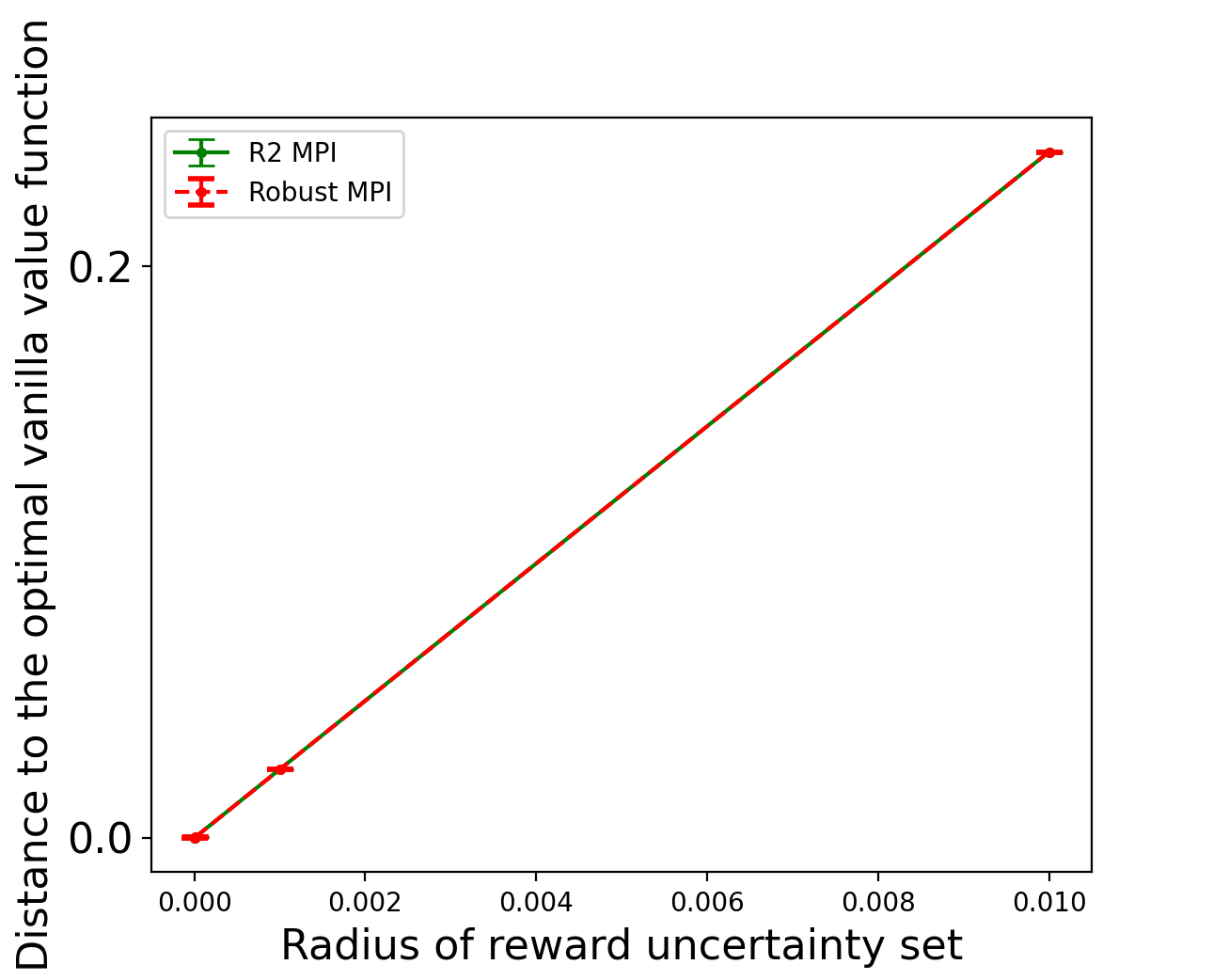}
     \caption{Distance norm between the optimal robust/\rr value and the vanilla one as a function of $\alpha$ ($\beta=0$) after 5 runs of robust/\rr MPI}\label{fig: rwd}
   \end{minipage}\hfill
   \begin{minipage}{0.49\textwidth}
     \centering
     \includegraphics[width=\linewidth]{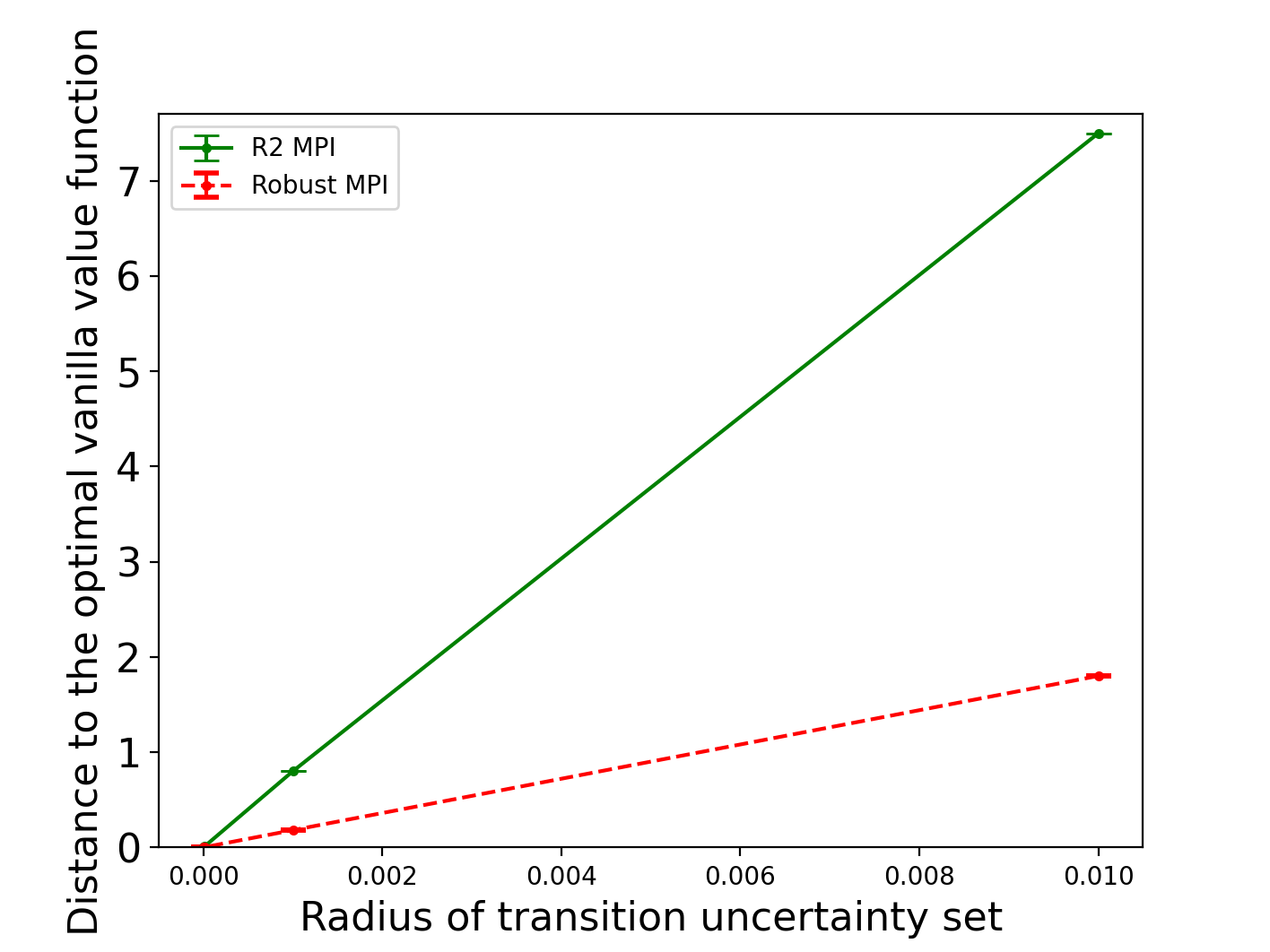}
     \caption{Distance norm between the optimal robust/\rr value and the vanilla one as a function of $\beta$ ($\alpha=0$) after 5 runs of robust/\rr MPI}\label{fig: transition}
   \end{minipage}
\end{figure}

\end{document}